\newtheorem{definition}{Definition}
\newtheorem{result}{Result}
\DeclareSymbolFont{usualmathcal}{OMS}{cmsy}{m}{n}
\DeclareSymbolFontAlphabet{\mathcal}{usualmathcal}
\begin{document}

\begin{center}
{\Large \textbf{
Les Houches Lectures on Deep Learning at Large \& Infinite Width
}} \\
\end{center}

\begin{center}
Yasaman Bahri,\footnote{Google DeepMind, Mountain View, CA} Boris Hanin,\footnote{Department of Operations Research \& Financial Engineering, Princeton University} Antonin Brossollet, \footnote{Laboratoire de Physique de l'Ecole Normale Supérieure, Paris, France} Vittorio Erba, \footnote{\'{E}cole polytechnique f\'{e}d\'{e}rale de Lausanne, SPOC and  IdePHIcs Labs} Christian Keup,\footnote{\'{E}cole polytechnique f\'{e}d\'{e}rale de Lausanne, SPOC Lab} Rosalba Pacelli,\footnote{Dipartimento di Scienza Applicata e Tecnologia, Politecnico di Torino and Artificial Intelligence Lab, Bocconi University, Milan} James B. Simon\footnote{Department of Physics, UC Berkeley}
\end{center}




\section*{Abstract}
{\bf
These lectures, presented at the 2022 Les Houches Summer School on Statistical Physics and Machine Learning, focus on the infinite-width limit and large-width regime of deep neural networks. Topics covered include various statistical and dynamical properties of these networks. In particular, the lecturers discuss properties of random deep neural networks; connections between trained deep neural networks, linear models, kernels, and Gaussian processes that arise in the infinite-width limit; and perturbative and non-perturbative treatments of large but finite-width networks, at initialization and after training.\footnote{These are notes from lectures delivered by Yasaman Bahri and Boris Hanin and a first version was compiled by Antonin Brossollet, Vittorio Erba, Christian Keup, Rosalba Pacelli, and James Simon. Recordings of the lecture series can be found at
\url{https://www.youtube.com/playlist?list=PLEIq5bchE3R1QYiNthdj9rJDa4TUzR-Yb}.
}}

\vspace{10pt}
\noindent\rule{\textwidth}{1pt}
\setcounter{tocdepth}{2} 
\tableofcontents\thispagestyle{fancy}
\noindent\rule{\textwidth}{1pt}
\vspace{10pt}

\section{Lecture 1: Yasaman Bahri}

\subsection{Introduction}

This lecture series will be focused on the infinite-width limit and large-width regime of deep neural networks.
Some of the themes that this series will encompass are:
\begin{itemize}
    \item exactly solvable models.
    \item mean-field theory \& Gaussian field theory.
    \item perturbation theory and non-perturbative phenomena.
    \item dynamical systems.
\end{itemize}
Lectures 1-3 are due to Yasaman Bahri and Lectures 4-5 are due to Boris Hanin.


\subsection{Setup}

We are interested in neural networks $f_\theta: \mathbb{R}^{n_0} \to \mathbb{R}^{n_{L+1}}$, where $n_0, n_{L+1}$ are the input and outputs dimensions and $\theta$ denotes the collection of neural network parameters (weights and biases for fully-connected networks, for example). A "vanilla" fully-connected (FC) deep neural network (NN) of hidden layer widths $n_{l}$ and depth $L$ is defined by the iterative relationship
\begin{equation}
    z^l_i(x) = b^l_i + \sum_{j=1}^{n_{l}} W^l_{ij} \phi( z^{l-1}_j(x) )
    \, ,
    \qquad
    1 \leq l \leq L
    \, ,
    1 \leq i \leq n_{l+1}
    \, ,
\end{equation}
and
\begin{equation}
    z^{0}_i(x) = b^{0}_i + \sum_{j=1}^{n_0} W^{0}_{ij} x_j 
    \, , \qquad f_i(x) := z^{L}_i(x), 
\end{equation}
where $x$ is the input, $z^l \in \mathbb{R}^{n_{l+1}}$ is the vector of preactivations at layer $l$, $\{b^l_i, W^l_{ij}\}_{ij}$ are the biases and weights at layer $l$, and $\phi$ is a nonlinear function such as $\tanh$ or ReLU, i.e. $\phi(x) = \max(0, x)$.
The parameters are initialized independently as
\begin{equation} \label{eq.lec1-param_init}
    b^l_i \sim \mathcal{N}(0, \sigma_b^2)
    \, ,
    \qquad
    W^l_{ij} \sim \mathcal{N}\left(0, \frac{\sigma_w^2}{n_{l}}\right),
\end{equation}
where $\mathcal{N}(\mu, \sigma^2)$ is the Normal distribution of mean $\mu$ and variance $\sigma^2$. Note the dependence of the weight variance on the inverse layer width, which will play an important role in subsequent discussions. We refer to the distribution of parameters at initialization as the \textit{prior}. We mainly consider the case of scalar output $n_{L+1} = 1$ in these lectures (the results are straightforward to generalize to the multi-dimensional setting) and uniform hidden layer widths $n_i := n$ for $1 \leq i \leq L$.

\subsection{Prior in function space}

It will be fruitful to translate results, where possible, to the space of functions rather than space of NN parameters, particularly for NNs where there can be a large degree of redundancy in the representation. For example, FC NNs have a permutation symmetry associated with a hidden layer,
\begin{equation}
    W^{l+1}_{ij}, W^l_{jk} \to W^{l+1}_{i\pi(j)}, W^l_{\pi(j)k},
    \qquad
    \forall \text{ permutations } \pi \text{ of } n \text{ elements},
\end{equation}

\noindent so that two different collections of parameters correspond to exactly the same function. A first natural question is then what \textit{prior over functions} is induced by the prior over parameters?

\begin{definition}[Gaussian process]
    A function $f: \mathbb{R}^{n_0} \to \mathbb{R}$ is a draw from a Gaussian process (GP) with mean function $\mu: \mathbb{R}^{n_0} \to \mathbb{R}$ and kernel function $K: \mathbb{R}^{n_0} \times \mathbb{R}^{n_0} \to \mathbb{R}$ if, for any finite collection of inputs $\{x_1, \dots, x_m\}$, the vector of outputs $\{f(x_1), \dots, f(x_m)\}$ is a multivariate Normal random variable with mean 
    $\mu_i = \mu(x_i)$ and covariance $K_{ij} = K(x_i, x_j)$.
\end{definition}

\begin{result}[See Ref. \cite{Neal1996}]
Consider a FC NN with a single hidden layer ($L=1$ in our notation) of width $n$ with parameters drawn i.i.d. as 
\begin{equation}
    b^{0}_i \sim \mathcal{N}(0, \sigma_b^2)
    \, ,
    \quad
    W^{0}_{ij} \sim \mathcal{N}\left(0, \frac{\sigma_w^2}{{n_0}}\right)
    \, ,
    \quad
    b^{1}_i \sim \mathcal{N}(0, \sigma_b^2)
    \, ,
    \quad
    W^{1}_{ij} \sim \mathcal{N}\left(0, \frac{\sigma_w^2}{n}\right).
    \label{eq:one_layer_prior}
\end{equation}
Then, in the limit $n \to \infty$, the distribution of the output $z^{1}_i$, for any $i = 1, ..., n_2$, is a Gaussian process with a deterministic mean function $\mu^{1}(x) = 0$ and kernel function $K^{1}$ given by
\begin{equation}
    K^{1}(x, x') = \sigma_b^2 + \sigma_w^2 \, \mathbb{E}_{u_1, u_2 \sim \mathcal{N}(0,\Sigma)}\left[\phi(u_1)\phi(u_2)\right]
    ,
\end{equation}
where 
\begin{equation}
    \Sigma = \begin{bmatrix}
            K^{0}(x ,x) & K^{0}(x ,x') \\
            K^{0}(x',x) & K^{0}(x',x') 
        \end{bmatrix},
\end{equation}
$K^{0}(x ,x') = \sigma_b^2 + \sigma_w^2 \big( \frac{x \cdot x'}{{n_0}} \big)$, and different outputs $z^1_i, z^1_j$ for $i \neq j$ are independent.

\end{result}
\begin{proof}[Proof (informal).]
Consider the collection of preactivations, $S := \{ z^{1}_i(x_a) \}_{\substack{a=1...m \\ i = 1...n_2}}$, which are random variables conditioned on the input values $x_1, ..., x_m$, and recall that

\begin{equation}
    z^{1}_i(x_a) = b^{1}_i + \sum_{j=1}^n W^{1}_{ij} \phi(z^{0}_j(x_a)) \, .
\end{equation}

Notice that each $z^{1}_i(x_a)$ is a sum of i.i.d. random variables (each, the product of two random variables). By applying the Central Limit Theorem (CLT) to the collection $S$ in the limit of large $n$ and noting that the the variances and covariances are finite, we find that $S$ is governed by the multivariate Normal distribution. Since different outputs $z^1_i, z^1_j$ with $i \neq j$ additionally have zero covariance, they are independent. Below, we will drop the reference to $x_a$ with $a = 1...m$ and instead refer to arbitrary $x, x'$. Bear in mind that the source of randomness is entirely from the parameters and not from the inputs.

The covariance function of the GP for arbitrary $x, x'$ is 
\begin{equation} 
\begin{split}
    \mathbb{E}[ z^{1}_i(x) \,  z^{1}_i(x') ] 
    &= 
    \mathbb{E}[ b_i^{1} b_i^{1} ]
    + \sum_{j, j' = 1}^n \mathbb{E}[ W^{1}_{ij}W^{1}_{ij'} \phi(z^{0}_j(x)) \phi(z^{0}_{j'}(x'))] 
    \\
    &= 
    \sigma_b^2
    + \sum_{j, j' = 1}^n \mathbb{E}[ W^{1}_{ij}W^{1}_{ij'}] \,
    \mathbb{E}[\phi(z^{0}_j(x)) \phi(z^{0}_{j'}(x'))]
    \\
    &= 
    \sigma_b^2 
    + \frac{\sigma_w^2}{n} \sum_{j, j'=1}^n \delta_{jj'} \,
    \mathbb{E}[\phi(z^{0}_j(x)) \phi(z^{0}_{j'}(x'))]
    \\
    &= 
    \sigma_b^2 
    + \frac{\sigma_w^2}{n} \sum_{j=1}^n 
    \mathbb{E}[\phi(z^{0}_j(x)) \phi(z^{0}_j(x'))]
    \\
    &= 
    \sigma_b^2 
    + \sigma_w^2 \,
    \mathbb{E}[\phi(z^{0}_j(x)) \phi(z^{0}_j(x'))]
    \\
    &:= K^{1}(x, x')
    \, ,
\end{split}
\label{eq.lec1-covFirst}
\end{equation}
where the second-to-last line holds for any $j$ and we have used the fact that the contributions from different $j = 1...n$ are identical. We can similarly compute the covariance of the preactivations at the previous layer, obtaining
\begin{equation}
    \mathbb{E}[ z^{0}_i(x) \,  z^{0}_i(x') ] 
    = 
    \sigma_b^2 
    + \sigma_w^2 \bigg( \frac{x \cdot x'}{{n_0}} \bigg)
    := K^{0}(x, x')
    \, .
\end{equation}

Note that the preactivations $z^{0}$ are also described by a multivariate Normal distribution, but in this case it is due to the Normal distribution on the weights and biases since the sum is over ${n_0}$ terms, which we keep finite unlike the hidden layer size $n$. Finally, note the remaining expectation in \eqref{eq.lec1-covFirst} can be expressed as a function of the kernel $K^{0}(x, x')$. Indeed, because of the Gaussianity of $z^{0}$ we have
\begin{equation}
    K^{1}(x, x') = \sigma_b^2 
    + \sigma_w^2 \,
    \mathbb{E}_{u_1, u_2 \sim \mathcal{N}(0,\Sigma)}\left[\phi(u_1)\phi(u_2)\right],
\end{equation}
where 
\begin{equation}
    \Sigma = \begin{bmatrix}
            K^{0}(x ,x) & K^{0}(x ,x') \\
            K^{0}(x',x) & K^{0}(x',x') 
        \end{bmatrix} .
\end{equation}
\end{proof}

\subsection{Prior in function space for deep fully-connected architectures} \label{subsec.lec1-priorFC}

We can generalize this last result to finite-depth FC NNs. There are at least two sensible options for taking the infinite-width limit \cite{dnn_as_gp, g.2018gaussian}:
\begin{itemize}
    \item the \textit{sequential} limit, where the width of each layer $l$ is taken to infinity one by one, from first to last.
    \item the \textit{simultaneous} limit, where the width of each layer $l$ is taken to infinity at the same time.
\end{itemize}
In both cases, with the natural extension of the prior \eqref{eq:one_layer_prior} to multiple layers, each of the hidden-layer preactivations and the output of the NN are again GPs with zero mean and covariance function $K^{l}$ that can be computed iteratively as
\begin{equation}
    K^{l}(x, x') = \sigma_b^2 
    + \sigma_w^2 \, \mathbb{E}_{u_1, u_2 \sim \mathcal{N}(0,\Sigma)}\left[\phi(u_1)\phi(u_2)\right],
\end{equation}
where 
\begin{equation}
    \Sigma = \begin{bmatrix}
            K^{l-1}(x ,x) & K^{l-1}(x ,x') \\
            K^{l-1}(x',x) & K^{l-1}(x',x') 
        \end{bmatrix}
\end{equation}
and the initial covariance is 
$K^{0}(x ,x') = \sigma_b^2 + \sigma_w^2 \bigg( \frac{x \cdot x'}{{n_0}} \bigg)$. We refer readers to the references for proofs of the two cases.

Notice that $\mathbb{E}_{u_1, u_2 \sim \mathcal{N}(0,\Sigma)}\left[\phi(u_1)\phi(u_2)\right]$ is a function of the elements of the covariance matrix $\Sigma \in \mathbb{R}^{2 x 2}$. We will write it generically as 
\begin{equation} \label{eq.lec1-calF}
    \mathcal{F}_\phi(\Sigma_{11}, \Sigma_{12}, \Sigma_{22}) := \mathbb{E}_{u_1, u_2 \sim \mathcal{N}(0,\Sigma)}\left[\phi(u_1)\phi(u_2)\right].
\end{equation}
This function can in fact be computed in closed-form for certain choices of nonlinearity $\phi$.
For the case of ReLU, $\phi = \max(x,0)$, one has
\begin{equation}
    \mathcal{F}_{\rm ReLU}(\Sigma_{11}, \Sigma_{12}, \Sigma_{22})
    = \frac{1}{2\pi} \sqrt{\Sigma_{11}\Sigma_{22}} \left[ 
        \sin \theta + (\pi - \theta) \cos \theta
    \right],
\end{equation}
where $\theta = \arccos{(\Sigma_{12} /  \sqrt{\Sigma_{11}\Sigma_{22}})}$ \cite{cho2009kernel}.

\subsection{Prior in function space for more complex architectures}

The convergence of the prior for wide, deep neural networks to GPs extends to other architectures, such as neural networks with convolutional layers \cite{cnn_as_gp} or attention layers \cite{hron2020infinite}, provided that their weights are initialized i.i.d. with the appropriate inverse scaling of the weight variance with the hidden layer width. The form of the recursion will depend on the nature of the layers.

For example, a simple NN built by stacking one-dimensional convolutional layers is defined by iterating 
\begin{equation}
    z^l_{i, \alpha} = b^l_i + \sum_{j=1}^n \sum_{\beta = -k}^k W^l_{ij,\beta} \phi( z^{l-1}_{j, \alpha + \beta}(x) ),
\end{equation}
where Latin symbols index into \textit{channels} running from $1$ to $n$; Greek symbols on $z$ variables index into \textit{spatial} dimensions running from $1$ to $D$, the spatial dimension of the input; and the index $\beta$ runs over the spatial size $2k+1$ of the convolutional filters. At initialization, we draw parameters i.i.d. as\footnote{As before, this is modified appropriately for the parameters of the first layer, since the input dimension is $n_0$.}
\begin{equation}
    b^{l}_i \sim \mathcal{N}(0, \sigma_b^2)
    \, ,
    \quad
    W^{l}_{ij, \beta} \sim \mathcal{N}\left(0, v_\beta \frac{\sigma_w^2}{{n}}\right)
    ,
\end{equation}
where $v_\beta$ provides a possibly non-uniform magnitude to different spatial coordinates (in the uniform case, $v_\beta = 1/(2k+1)$)\cite{xiao18a, cnn_as_gp}. We take the number of hidden-layer channels $n \rightarrow \infty$ while keeping all other dimensions $k, D, n_0$ fixed. As before, each preactivation and the output of the NN are GPs with zero means, while the covariance function depends on the layer and also acquires spatial components. Indeed
\begin{equation}
\begin{split}
    K^l_{\alpha, \alpha'}(x, x') = \mathbb{E} \left[ z^{l}_{\alpha}(x) z^{l}_{\alpha'}(x') \right]
    &= \sigma_b^2 + \sum_{j, j'=1}^n \sum_{\beta, \beta' = -k}^k 
    \mathbb{E}[W^l_{ij, \beta} W^l_{ij', \beta'}]
    \mathbb{E}[\phi(z^{l-1}_{j, \alpha+\beta}(x) )
    \phi(z^{l-1}_{j', \alpha'+\beta'}(x'))]
    \\
    &= \sigma_b^2 + \sigma_w^2 \sum_{\beta = -k}^k v_\beta \,
    \mathbb{E}[\phi(z^{l-1}_{j, \alpha+\beta}(x) )
    \phi(z^{l-1}_{j, \alpha'+\beta}(x'))]
    \\
    &= \sigma_b^2 + \sigma_w^2 \sum_{\beta = -k}^k v_\beta \,
    \mathcal{F}_{\phi}(K^{l-1}_{\alpha+\beta, \alpha+\beta}(x, x), K^{l-1}_{\alpha+\beta, \alpha'+\beta}(x, x'), K^{l-1}_{\alpha'+\beta, \alpha'+\beta}(x', x')),
\end{split}
\end{equation}
where $\mathcal{F}_{\phi}$ is again the one defined in \eqref{eq.lec1-calF},
and the base case is
\begin{equation}
    K^{0}_{\alpha, \alpha'}(x, x') 
    = \sigma_b^2 + \sigma_w^2  \sum_{\beta = -k}^k v_\beta \frac{1}{{n_0}} \sum_{j=1}^{n_0} x_{j, \alpha + \beta} x'_{j, \alpha' + \beta}.
\end{equation}

Often channel and spatial indices are aggregated into a single index before the output. Below we describe two example strategies; here, $\overline{W}$, $\overline{b}$, $\overline{z}$ refer to the output layer variables.
\begin{itemize}
    \item Aggregation by vectorization  --- In this example, we flatten the last hidden-layer preactivations across channel and spatial dimensions together,
\begin{equation}
    \overline{z}^{L+1}_i(x)
    =
    \overline{b}^{L+1}_i
    + \sum_{j=1}^{n \cdot D} \overline{W}^{L+1}_{ij} \phi(\text{Vec}[z^{L}(x)]_j),
\end{equation}
where $n$, $D$ are the incoming channel and spatial dimensions, respectively, and $\text{Vec}(\cdot)$ is the vectorization operator.
We initialize  $\overline{b}$ as before and $\overline{W}_{ij} \sim \mathcal{N}(0, \frac{\sigma^2_w}{n \cdot D})$.

The covariance of the network output is
\begin{equation}
\begin{split}
    \mathbb{E}[ \overline{z}^{L+1}_i(x) \,  \overline{z}^{L+1}_i(x') ] 
    &= 
    \sigma_b^2
    + \sum_{j, j' = 1}^{n \cdot D} \mathbb{E}[ \overline{W}^{L+1}_{ij}\overline{W}^{L+1}_{ij'}
    ] \, \mathbb{E}[
    \phi(\text{Vec}[z^{L}(x)]_j) 
    \phi(\text{Vec}[z^{L}(x')]_{j'})
    ] 
    \\
    &= 
    \sigma_b^2
    + \frac{\sigma^2_w}{D} \sum_{\alpha = 1}^{D} \mathcal{F}_{\phi} \left(
    K^L_{\alpha,\alpha}(x, x),
    K^L_{\alpha,\alpha}(x, x'),
    K^L_{\alpha,\alpha}(x', x')
    \right).
\end{split}
\end{equation}
In this case, the final covariance depends on the prior layer covariance at the \emph{same} spatial location of two inputs, neglecting some of the information contained in the full tensor $K^{L}_{\alpha, \alpha'}(x,x')$.

\item Aggregation over spatial indices ---
In this example, we aggregate over spatial indices with a fixed vector of weights $h_\alpha$,
\begin{equation}
    \overline{z}^{L+1}_i(x)
    =
    \overline{b}^{L+1}_i
    + \sum_{j=1}^{n} \overline{W}^{L+1}_{ij}
    \sum_{\alpha = 1}^D h_\alpha
    \phi(z^{L}_{j, \alpha}(x)),
\end{equation}
and similar to the previous computations (taking $\overline{W}_{ij} \sim \mathcal{N}(0, \frac{\sigma^2_w}{n})$),
\begin{equation}
    \mathbb{E}[ \overline{z}^{L+1}_i(x) \,  \overline{z}^{L+1}_i(x') ] 
    = 
    \sigma_b^2
    + \sigma^2_w \sum_{\alpha, \alpha' = 1}^{D}
    h_\alpha h_{\alpha'}
    \mathcal{F}_{\phi} \left(
    K^L_{\alpha,\alpha}(x, x),
    K^L_{\alpha,\alpha'}(x, x'),
    K^L_{\alpha',\alpha'}(x', x')
    \right).
\end{equation}
Notice that in this case, even with spatially uniform aggregation $h_\alpha = 1/D$, the final covariance receives spatially off-diagonal contributions from the prior layer covariance.

\end{itemize}

Finally, we note that residual NNs are another architecture that is straightforward to treat. The preactivations take the form
\begin{equation}
    z^l_i(x) = 
    b^l_i + \sum_{j=1}^n W^l_{ij} \phi( z^{l-1}_j(x) ) + \gamma^l z^{l-1}_i(x),
\end{equation}
where $\gamma^l$ are fixed hyperparameters.
In this case, the kernel recursion takes the form
\begin{equation}
    K^{l}(x, x') = \sigma_b^2 
    + \sigma_w^2 \, \mathcal{F}_{\phi} \left(
    K^{l-1}(x, x),
    K^{l-1}(x, x'),
    K^{l-1}(x', x')
    \right)
    + (\gamma^l)^2 \, K^{l-1}(x, x') \, .
\end{equation}

To summarize, we have seen how compositional kernels and GPs can emerge from taking a natural infinite-width limit of deep NNs in different architectural classes. The quantities we have derived
\begin{itemize}
    \item can be used directly in kernel ridge regression or Bayesian inference. In some settings, these kernel-based predictors can be as good as or better models than their NN counterparts.
    \item enable further theoretical understanding of deep NNs at initialization and after training. As one example, understanding the structure of these compositional kernels on realistic data can lend insight into the advantages of different architectures.
\end{itemize}

\subsection{Bayesian inference for Gaussian processes}

Consider a dataset $\mathcal{D} = \{(x_i, y_i)\}_{i=1\dots m}$ and suppose we would like to make predictions at a point $x_*$ in a Bayesian manner, using a model $f_\theta(x)$ with learnable parameters $\theta$. Let $\vec{x} = [x_1, \dots, x_m]^T$ and $\vec{y} = [y_1, \dots, y_m]^T$. The distribution of the output $z_* = f_\theta(x_*)$, conditioned on the dataset $\mathcal{D}$ and $x_*$, is given by
\begin{equation}
    p(z_* \mid \mathcal{D}, x_*) =
    \int d\theta \,
    p(z_* \mid \theta, x_*) \,
    p(\theta \mid \mathcal{D}) \, .
\end{equation}

A convenient way to rewrite this is to introduce the vector of function values on the training data, $\vec{z} = [f_\theta(x_1), \dots f_\theta(x_m)]$. Then
\begin{equation}\label{eq.sec1-z*}
    p(z_* \mid \mathcal{D}, x_*) =
    \int d\vec{z} \,
    p(z_* \mid \vec{z}, \vec{x}, x_*) \,
    p(\vec{z} \mid \mathcal{D}),
\end{equation}
in which we changed the integral over parameters to an integral over the finite set of function values. 

A natural question is under which conditions the conversion from parameter to function space is allowed. In general, one might expect a functional integral over functions that can be represented by the model, i.e. $\int \mathfrak{D}z$. In our case, we are implicitly assuming that the likelihood depends on the parameters only through the outputs of the model. Note that working in function space might allow certain properties of the model to be constrained more naturally, such as function smoothness; on the other hand, other forms of regularization (such as $L_2$ regularization on parameters) might be more challenging to write in a simple form.

We would like to now consider a specific likelihood. By Bayes' theorem,
\begin{equation}
    p(\vec{z} \mid \mathcal{D}) 
    \rightarrow p(\vec{z} \mid \vec{y}) = \frac{
        p(\vec{y} \mid \vec{z}) p(\vec{z}) 
    }{p(\vec{y})},
\end{equation}
(we forgo writing the conditioning on inputs where it is understood), and assuming the targets and model are related by zero-mean Gaussian noise of variance $\sigma^2_{\epsilon}$,
\begin{equation}
    p(\vec{y} \mid \vec{z}) \propto \prod_{i=1}^m \exp\left[ - \, \frac{(y_i - z_i)^2 }{2 \sigma^2_\epsilon} \right] .
\end{equation}

The terms $p(z_* \mid  \vec{z})$ and $p(\vec{z})$ combine to yield the prior distribution $p(z_*, \vec{z})$, which for GPs is a multivariate Gaussian distribution with mean and covariance that depend on the inputs $(x_*, \vec{x})$. Assuming zero mean, we have
\begin{equation}
    p(z_*, \vec{z}) \propto \exp\left\{ - \frac{1}{2}
        \begin{bmatrix}
            z_* & \vec{z} 
        \end{bmatrix}
        \begin{bmatrix}
            K(x_* ,x_*) & K( \vec{x} ,x_*)^T \\
            K( \vec{x} ,x_*) & K( \vec{x} , \vec{x}) 
        \end{bmatrix}^{-1}
        \begin{bmatrix}
            z_* \\ \vec{z}
        \end{bmatrix}
    \right\},
\end{equation}
where $K(\vec{x} ,x_*)_i = K( x_i ,x_*)$ is an $m$-dimensional column vector and $K(\vec{x} ,\vec{x})_{ij} = K( x_i ,x_j)$ is an $m\times m$-dimensional matrix.

We see that the predictive distribution \eqref{eq.sec1-z*} of the model output $z_*$ at $x_*$ involves an integral with a Gaussian integrand, and thus $z_* | \mathcal{D}, x_* \sim \mathcal{N}(\mu_*, \sigma^2_*)$ with
\begin{equation}
\begin{split}
    \mu_* &= K( \vec{x} ,x_*)^T \left( K( \vec{x} , \vec{x})  + \sigma^2_\epsilon I \right)^{-1} \vec{y} \, , \\
    \sigma^2_* &= K( x_*, x_*) - K( \vec{x} ,x_*)^T \left( K( \vec{x} , \vec{x})  + \sigma^2_\epsilon I \right)^{-1} K( \vec{x} ,x_*) .
    \label{eq:bayesian_inference_gp}
\end{split}
\end{equation}
The marginal likelihood $p(\mathcal{D})$ for GPs can be expressed analytically as 
\begin{equation}
    \log p(\mathcal{D}) =
    - \frac{1}{2} \vec{y}^T
    \left( K( \vec{x} , \vec{x})  + \sigma^2_\epsilon I \right)^{-1}
    \vec{y}
    - \frac{1}{2} \log \det  \left( K( \vec{x} , \vec{x})  + \sigma^2_\epsilon I \right) 
    - \frac{m}{2} \log 2 \pi .
\end{equation}
Here the first term accounts for dataset fitting, while the second represents a complexity penalty that favors simpler covariance functions.

In contrast to Bayesian inference for generic models, which often requires approximations because of the integrals involved, Bayesian inference for GPs \cite{williams2006gaussian} can be performed exactly. Given the ``NNGP" \cite{dnn_as_gp} correspondence between infinitely-wide NNs and GPs discussed in prior sections, we can use the resulting compositional kernels to make Bayesian predictions using deep NNs in this limit. 

\subsection{Large-depth fixed points of Neural Network Gaussian Process (NNGP) kernel recursion}

We would now like to investigate the large-depth behavior $l \to \infty$ of the NNGP kernel recursion 
\begin{equation}
\begin{split}
    K^{l}(x, x') &= \sigma_b^2 
    + \sigma_w^2 \, \mathcal{F}_\phi(
    K^{l}(x, x),
    K^{l}(x, x'),
    K^{l}(x', x')
    ), \\
    K^{0}(x, x') &= \sigma_b^2 
    + \sigma_w^2 \bigg( \frac{x \cdot x'}{{n_0}} \bigg).
\end{split}
\end{equation}

As training deep NNs was known to be a challenge in practice, these large-depth limits have been used \cite{schoenholz2017} as proxy metrics to identify regions of hyperparameter space where networks can be trained. (In this example, hyperparameters for which we might desire guidance on choosing include $\sigma_w$, $\sigma_b$, $L$, and $\phi$.) It was hypothesized that for deep NNs to be trainable using backpropagation, forward propagation of information about the inputs through the depth of the network would be needed. In lieu of an information-theoretic approach, a proxy for the information content contained in the forward signal is the covariance between pairs of inputs. Regions of hyperparameter space where the covariance function quickly converges to a structureless limit are to be avoided for choosing architectures and initialization strategies. We will briefly treat the simplest analysis (of forward propagation) in this direction for the case of a fully-connected NN \cite{schoenholz2017}. With further developments in deep learning theory, analogous but comprehensive treatments have been constructed; we refer the reader to the later literature, see e.g. \cite{xiao18a, roberts2021principles, xiao2020disentangling}.

Let us consider the correlation between a pair of inputs $x_\alpha$ and $x_\beta$. We will need to track recursions for the three quantities $K_{\alpha\alpha}$, $K_{\beta\beta}$, and $K_{\alpha\beta}$. For the diagonal elements,
\begin{equation}
    K_{\alpha\alpha}^l = \sigma_b^2 
    + \sigma_w^2 \int Ds \, \bigg( \phi(\sqrt{K^{l-1}_{\alpha\alpha}} s) \bigg)^2,
\end{equation}
where $Ds$ is the standard Gaussian measure, while for off-diagonal elements
\begin{equation}
\begin{split}
    K_{\alpha\beta}^l &= \sigma_b^2 
    + \sigma_w^2 \int Ds_1 \, Ds_2 \, \phi(u_1) \phi(u_2), \end{split}
\end{equation}
with
\begin{equation}
\begin{split}
    u_1 &= \sqrt{K^{l-1}_{\alpha\alpha}} s_1,\\
    u_2 &= \sqrt{K^{l-1}_{\beta\beta}} \left( c^{l-1}_{\alpha\beta} s_1 + \sqrt{ 1- (c^{l-1}_{\alpha\beta})^2} \, s_2  \right), \\
    c^{l}_{\alpha\beta} &= K_{\alpha\beta}^{l} / \sqrt{K_{\alpha\alpha}^{l} K_{\beta\beta}^{l} }.
\end{split}
\end{equation}

Now suppose that the diagonal elements of the kernel approach a fixed point $q^*$ (this occurs for any bounded $\phi$ and the convergence is rapid with depth, see \cite{schoenholz2017}). In this case, note that $c^* = 1$ is always a fixed point of the recursion for off-diagonal covariances, as we recover the condition for the fixed point of the diagonal elements. Is the fixed point stable or unstable to leading order in small deviations? By expanding the map $c^{l-1}_{\alpha\beta} \to c^l_{\alpha\beta}$ around the fixed point, one finds the stability of $c^* = 1$ is governed by 
\begin{equation}
    \chi_1 = \frac{\partial c^l_{\alpha\beta} }{ \partial c^{l-1}_{\alpha\beta}} = \sigma_w^2 \int Ds \, \bigg( \phi'(\sqrt{q^*}s) \bigg)^2.
\end{equation}
If $\chi_1 < 1$, then $c^* =1$ is a stable fixed point, while if $\chi_1 > 1$ it is unstable.

\begin{figure}
    \centering
    \includegraphics[scale =.4]{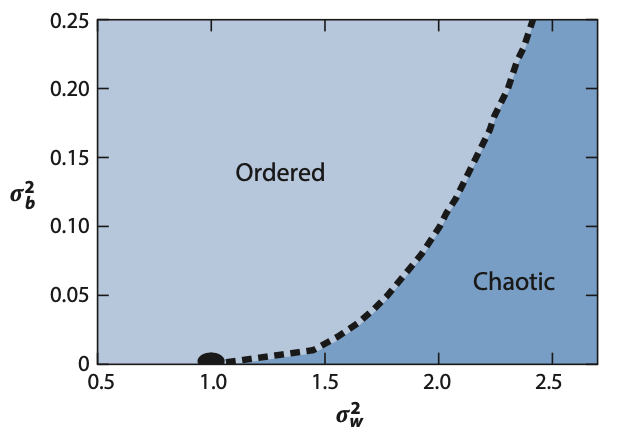}
    \caption{Phase diagram in the $(\sigma_b^2, \sigma_w^2)$ plane for fixed points of the NNGP recursion relationship with nonlinearity $\phi = \tanh$, showing \emph{ordered} and \emph{chaotic} phases separated by a critical line. Figure reproduced from \cite{bahri2020}; see also \cite{schoenholz2017}.}
    \label{fig:ordered_chaotic}
\end{figure}

The rate of convergence with depth can be obtained by expanding the recursion relationships to leading order around the fixed points \cite{schoenholz2017}. In the case of the diagonal elements, we define
$\epsilon^l := K^l_{\alpha\alpha} - q^*$
and obtain
\begin{equation}
    \epsilon^l = \epsilon^{l-1} \left[ 
        \chi_1 + \sigma_w^2 \int Ds \, \phi''(\sqrt{q^*} s)
        \phi(\sqrt{q^*} s)
    \right] + O((\epsilon^{l-1})^2)
\end{equation}
so that, at large $l$, $\epsilon^l = \epsilon^0 \exp\left( - l / \xi_q \right)$ with characteristic depth scale
\begin{equation}
    \xi_q^{-1} = - \log \left[ 
        \chi_1 + \sigma_w^2 \int Ds \, \phi''(\sqrt{q^*} s)
        \phi(\sqrt{q^*} s)
    \right].
\end{equation}
To study off-diagonal elements, we instead examine the correlation $\epsilon^l = c^{l}_{\alpha \beta} - c^*$. On the basis that the diagonal elements approach their fixed point $q^*$ more rapidly \cite{schoenholz2017}, we substitute $K^{l}_{\alpha \alpha} = q^*$ to obtain
\begin{equation}
    \epsilon^l = \epsilon^{l-1} \left[ 
        \sigma_w^2 \int Ds_1 \, Ds_2 \,  \phi'(u_1^*)
        \phi'(u_2^*)
    \right] + O((\epsilon^{l-1})^2),
\end{equation}
where
\begin{equation}
\begin{split}
    u_1^* &= \sqrt{q^*} s_1 \, ,\\
    u_2^* &= \sqrt{q^*} \left( c^* s_1 + \sqrt{ 1- (c^*)^2} s_2  \right) \, ,
\end{split}
\end{equation}
and the characteristic depth is given by
\begin{equation}
    \xi_c = - \log \left[ 
        \sigma_w^2 \int Ds_1 \, Ds_2 \,  \phi'(u_1^*)
        \phi'(u_2^*)
    \right].
\end{equation}

\noindent We have now three options:
\begin{itemize}
    \item if $\chi_1 < 1$, then $c^* = 1$ is a stable fixed point, and we term the corresponding region of the $(\sigma_b, \sigma_w)$ plane the \textit{ordered phase}.
    In this phase, on average across random networks two inputs $x_\alpha$, $x_\beta$ will tend to align exponentially fast, with characteristic depth $\xi_c$, as they propagate through layers of the deep NN.
    \item if $\chi_1 > 1$, then $c^* = 1$ is an unstable fixed point, and the corresponding region of the $(\sigma_b, \sigma_w)$ plane is termed a \textit{chaotic phase}. There will be another fixed point $c^* < 1$ which will be stable. Two inputs $x_\alpha$, $x_\beta$ will tend towards uniform correlation (possibly vanishing) across all pairs $\alpha \neq \beta$ exponentially fast in the NN depth, with a characteristic depth scale $\xi_c$.
    \item if $\chi_1 = 1$, then $c^* = 1$ is marginally stable, and stability is determined by higher-order terms in the expansion around the fixed point. The corresponding region of the $(\sigma_b, \sigma_w)$ plane is a \textit{critical line}.
    In this phase, the correlation between two inputs $x_\alpha$, $x_\beta$ will tend towards a fixed point at a slower rate, algebraically instead of exponentially fast. Indeed, one can show that as $\chi_1 \to 1$, $\xi_c \to +\infty$. It is found that the maximum depth of a NN that can be trained with backpropagation increases as the initialization hyperparameters get closer to this critical line \cite{schoenholz2017}. 
\end{itemize}

Let us consider the case $\phi = \tanh$ as an example, with phase diagram in Fig. \ref{fig:ordered_chaotic} showing ordered and chaotic phases separated by a critical line. The ordered phase is smoothly connected to the regime $\sigma_b \gg \sigma_w$; intuitively, the shared bias dominates over the weights acting on the input signals, and two inputs degenerate into a common value as they are passed through deeper layers of the random network (hence, the stability of the $c^* =1$ fixed point). The chaotic phase smoothly connects to the regime $\sigma_b \ll \sigma_w$, where randomness from the weights dominates and leads to reduced correlation between the inputs.

\newcommand{\todo}[1]{\textcolor{blue}{\textbf{TODO:} #1}}
\newcommand{\expec}[1]{\mathbb{E}\left[ #1 \right]}
\newcommand{\mcom}{\ ,}
\newcommand{\mdot}{\ .}
\newcommand{\dd}{\mathrm{d}}
\newcommand{\normin}[1]{\left\lVert #1 \right\rVert_{p^{in}}}
\newcommand{\norm}[1]{\left\lVert #1 \right\rVert}
\newcommand{\bigo}[1]{O \left( #1 \right)}
\newcommand{\lin}{^{\text{lin}}}

\section{Lecture 2}

\subsection{Introduction}

In the previous lecture, we treated the properties of deep neural networks at initialization in the limit $n \rightarrow \infty$. We also discussed the predictions arising from Bayesian inference in this limit. In this lecture, we turn our attention to training deep NNs with empirical risk minimization and understanding the optimization dynamics, either by gradient descent or gradient flow, in this same limit of infinitely-wide hidden layers. 

Before doing so, we introduce a few tools that enable us to analytically treat leading deviations away from the infinite-width limit in randomly initialized deep NNs. These tools have also been used to construct a perturbation theory for finite-width deep NNs after training \cite{roberts2021principles}.


\subsection{Wick's theorem}

Wick's theorem is a fundamental result about Gaussian random variables that simplifies computations involving expectations of products of such variables. 

\begin{result}[Wick's theorem]
    Let $z \in \mathbb{R}^n$ be a centered random Gaussian vector with covariance matrix $K$, $z \sim \mathcal{N} \left( 0, K \right)$. 
    Then, the expectation of any product of the elements of $z$ can be expressed as a sum over all possible pairings of indices
    \begin{equation}
        \expec{ z_{\mu_1} \dots z_{\mu_{2m}} } = \sum_{\text{all pairings}} \expec{z_{\mu_{k_1}} z_{\mu_{k_2}} } \dots \expec{ z_{\mu_{k_{2m-1}}} z_{\mu_{k_{2m}}} } = \sum_{\text{all pairings}} K_{\mu_{k_1} \mu_{k_2}} \dots K_{\mu_{k_{2m-1}} \mu_{k_{2m}}}.
        \label{eq:wicksthm}
    \end{equation}
\end{result}

\noindent (Here, the result is for products containing an even number of elements since odd ones vanish.) We will use this to compute higher-order correlation functions in randomly initialized deep linear networks, illustrating some of the effects of finite-width which carry beyond deep linear networks to nonlinear ones \cite{roberts2021principles}.

\subsubsection{Two-point correlation function}
Using Wick's theorem (\ref{eq:wicksthm}), we compute the covariance between any two preactivations of the same layer for a randomly initialized deep linear neural network, assuming no bias terms for simplicity \cite{roberts2021principles},
\begin{equation} 
\begin{split}
    \expec{z_{i_1}^l (x_{\alpha}) \ z_{i_2}^l (x_{\beta}) } &=  \sum_{j_1, j_2 = 1}^n \expec{ W_{i_1 j_1}^{l} W_{i_2 j_2}^{l} z_{j_1}^{l-1} (x_{\alpha}) \ z_{j_2}^{l-1} (x_{\beta}) } \\
    &= \sum_{j_1, j_2 = 1}^n \expec{ W_{i_1 j_1}^l W_{i_2 j_2}^l} \expec{z_{j_1}^{l-1} (x_{\alpha}) \ z_{j_2}^{l-1} (x_{\beta}) } \\
    &=  \delta_{i_1 i_2} \frac{\sigma_w^2}{n} \sum_{j_1, j_2 = 1}^n \delta_{j_1 j_2} \expec{z_{j_1}^{l-1} (x_{\alpha}) \ z_{j_2}^{l-1} (x_{\beta}) } \\
    &=  \delta_{i_1 i_2} \frac{\sigma_w^2}{n} \sum_{j = 1}^n  \expec{z_{j}^{l-1} (x_{\alpha}) \ z_{j}^{l-1} (x_{\beta}) }.
\end{split}
\label{eq.lec2-wicks_twopts}
\end{equation}

\noindent Let us decompose the two-point correlation function for inputs $x_{\alpha}, x_{\beta}$ in layer $l$ as
\begin{equation}
    \expec{z_{i_1}^l (x_{\alpha}) \ z_{i_2}^l (x_{\beta})} := \sigma_w^2\delta_{i_1 i_2} G_{\alpha \beta}^{l},
\end{equation}
where $G_{\alpha \beta}^{l}$ is defined as
\begin{equation}
    G_{\alpha \beta}^{l} = \frac{1}{n} \sum_{j=1}^n \expec{z_{j}^{l} (x_{\alpha}) \ z_{j}^{l} (x_{\beta}) }.
\end{equation}
With these definitions, we can express the recursion \eqref{eq.lec2-wicks_twopts} in a compact form
\begin{equation}
    G_{\alpha \beta}^{l} = \sigma_w^2 G_{\alpha \beta}^{l-1},
\end{equation}
leading to the depth-dependent form
\begin{equation}
    G_{\alpha \beta}^{l} = \left( \sigma_w^2 \right)^l G_{\alpha \beta}^{0}.
\end{equation}

\subsubsection{Four-point correlation function}

Similarly, we obtain a recursion for the four-point correlation function,

\begin{equation} \label{eq.lec2-wicks_fourpts}
\begin{split}
    \expec{z_{i_1}^{l} \dots \ z_{i_4}^{l} } &= \sum_{j_1 \dots j_4 = 1}^n \expec{ W_{i_1 j_1}^{l} \dots W_{i_4 j_4}^{l} } \expec{z_{j_1}^{l-1} \dots \ z_{j_4}^{l-1}} \\
    &= \frac{\left( \sigma_w^2 \right)^2}{n^2} \sum_{j_1 \dots j_4 = 1}^n \left( \delta_{i_1 i_2} \delta_{j_1 j_2} \delta_{i_3 i_4}  \delta_{j_3 j_4} + \delta_{i_1 i_3} \delta_{j_1 j_3} \delta_{i_2 i_4} \delta_{j_2 j_4} + \dots \right) \expec{z_{j_1}^{l-1} z_{j_2}^{l-1} z_{j_3}^{l-1} z_{j_4}^{l-1} } \\ 
    &=  \left( \sigma_w^2 \right)^2 \left( \delta_{i_1 i_2} \delta_{i_3 i_4} + \delta_{i_1 i_3} \delta_{i_2 i_4} +\delta_{i_1 i_4} \delta_{i_2 i_3} \right) \frac{1}{n^2} \sum_{j,k=1}^n \expec{z_{j}^{l-1} z_{j}^{l-1} z_{k}^{l-1} z_{k}^{l-1} }  \mcom
\end{split}
\end{equation}
using Wick's theorem to decompose the expectation value into a sum over pairings of indices. (For simplicity, we have treated the case of a single sample $x_{\alpha} = x$ and dropped reference to the samples, but this calculation can be extended to a general choice of four samples.)

We again factor the correlation function as a term encoding the structure of indices and a scalar function,
\begin{equation}
    \expec{z_{i_1}^{l} \dots \ z_{i_4}^{l} } := \left( \delta_{i_1 i_2} \delta_{i_3 i_4} + \delta_{i_1 i_3} \delta_{i_2 i_4} +\delta_{i_1 i_4} \delta_{i_2 i_3} \right) G_4^{l} \mdot
\end{equation}
Using this decomposition the final factor of the recursion \eqref{eq.lec2-wicks_fourpts} can be written as 
\begin{equation}
    \frac{1}{n^2} \sum_{j,k=1}^n \expec{z_{j}^{l-1} z_{j}^{l-1} z_{k}^{l-1} z_{k}^{l-1}}  = \frac{1}{n^2} \sum_{j,k=1}^n \left( \delta_{jj} \delta_{kk} + \delta_{jk} \delta_{jk} +\delta_{jk} \delta_{jk} \right) G_4^{l-1} = \left( 1 + \frac{2}{n} \right) G_4^{l-1},
\end{equation}

\noindent which yields the recursion 
\begin{equation}
    G_4^{l} = \left( \sigma_w^2 \right)^2 \left( 1 + \frac{2}{n} \right) G_4^{l-1}.
\end{equation}
It is easy to see using \eqref{eq.lec2-wicks_fourpts} that 
\begin{equation}
    G_4^{0} = \left( G_2^{0} \right) ^2,
\end{equation}
with 
\begin{equation}
    G_2^{0} = \frac{\sigma^2_w}{n_0} \sum_{j=1}^{n_0} x_j x_j.
\end{equation}

\noindent referring to the two-point correlation function. Unrolling the recursion relation we obtain various relationships
\begin{equation} \label{eq.lec2-fourptsrecursion}
\begin{split}
    G_4^{l} &= \left( \sigma_w^2 \right)^{2l} \left[ \prod_{l'=1}^{l} \left( 1 + \frac{2}{n} \right) \right] \left( G_2^{0} \right)^2 \\
    &= \left[ \prod_{l'=1}^{l} \left( 1 + \frac{2}{n} \right) \right] \left( G_2^{l} \right)^2 \\
    &=\left( 1 + \frac{2}{n} \right)^{l} \left( G_2^{l} \right)^2.
\end{split}
\end{equation}

 
\subsubsection{Large-$n$ expansion}

Let us discuss what we learn from these simple applications of Wick's Theorem \cite{roberts2021principles}. In the limit $n \to \infty$, the recursion for \eqref{eq.lec2-fourptsrecursion} simplifies to $G_4^{l} = \left( G_2^{l} \right)^2$
and the correlation function becomes
\begin{equation}
    \expec{z_{i_1}^l z_{i_2}^l z_{i_3}^l z_{i_4}^l} =\left( \delta_{i_1 i_2} \delta_{i_3 i_4} + \delta_{i_1 i_3} \delta_{i_2 i_4} +\delta_{i_1 i_4} \delta_{i_2 i_3} \right) \left( G_2^{l} \right)^2 \mcom
\end{equation}
which is what we would obtain if all the preactivations were Gaussian random variables (indeed, we know from the last lecture the preactivations are described by a Gaussian process in this limit). Large but finite $n$ gives rise to a deviation from Gaussianity which to leading order acquires the form
\begin{equation}
\begin{split}
     G_4^{l} - \left( G_2^{l} \right)^2 &= \left[ \left( 1 + \frac{2}{n} \right)^{l} - 1 \right] \left( G_2^{l} \right)^2 \\
     =& \frac{2 l}{n} \left( G_2^{l} \right)^2 + O \left( \frac{1}{n^2} \right),
\end{split}
\end{equation}
valid if the depth is not too large. The correction to the four-point correlation function from its infinite-width form is therefore governed by the ratio of the depth to width of the network, $l/n$. It turns out this ratio also governs the corrections to gradient-based learning in trained finite-width deep NNs\cite{roberts2021principles}. The deviations from Gaussianity at finite width will be discussed further in Lectures 4 and 5.

\subsection{Gradient descent dynamics of optimization in the infinite-width limit}

We next treat the dynamics of training deep NNs within empirical risk minimization under gradient flow (GF) or gradient descent (GD) in the infinite-width limit. We specialize to the case of square loss, where an analytic closed-form derivation is possible. This setting further develops the rich set of connections between infinitely-wide neural networks, kernel regression, and Gaussian processes \cite{jacot2018, leexiao2019wide} which we partly established in the first lecture.


\subsubsection{Setting}
We consider a fully-connected deep NN of depth $L$ and width $n$ represented by $f_t(x) : \mathbb{R}^{n_0} \to \mathbb{R}$ with parameters $\theta_t = \left\{ W_{ij}^l (t), b_i^l (t)\right\}_{lij}$. We view the NN function and parameters as inheriting a time dependence from optimization and use the notation $f_t(x) = f(x, \theta_t)$ to emphasize this. The loss function on a dataset $\mathcal{D} = \left\{ (x_{\alpha}, y_{\alpha}) \right\}_{\alpha=1}^m$ is

\begin{equation}
    \mathcal{L} (\theta) = \frac{1}{m} \sum_{\alpha = 1}^m \ell \left( f(x_{\alpha} , \theta), y_{\alpha} \right),
\end{equation}
where we take $\ell$ to be the square loss. We will sometimes write $\mathcal{L}_t = \mathcal{L}(\theta_t)$.

\subsubsection{Gradient descent dynamics for the neural network function}
\label{sec:chain_rule_function_space}

Let us investigate the dynamics on the deep NN function that arises from applying gradient descent to the network parameters. The latter are updated as
\begin{equation}
    \theta_{\mu, t+1} = \theta_{\mu,t} - \eta \frac{\partial \mathcal{L}_t}{\partial \theta_{\mu}},
\end{equation}
where $\mu$ indexes into the collection of trainable parameters and $\eta$ is the learning rate. (In what follows, we will use the $\mu$ index where necessary when it clarifies the structure of contracted derivatives, but will drop it otherwise.) Transcribing from parameter to function space dynamics using the chain rule, we can expand in the limit of small learning rate
\begin{equation}
\begin{split}
    f_{t+1}(x) = f(x, \theta_{t+1}) &= f(x, \theta_t - \eta \frac{\partial \mathcal{L}_t}{\partial \theta}) \\
    &= f_t(x) - \eta \sum_{\mu} \frac{\partial f_t(x)}{\partial \theta_{\mu}} \frac{\partial \mathcal{L}_t}{\partial \theta_{\mu}} + \frac{\eta^2}{2} \sum_{\mu, \nu}  \frac{\partial^2 f_t(x)}{\partial \theta_{\mu} \partial \theta_{\nu}} \frac{\partial \mathcal{L}_t}{\partial \theta_{\mu}} \frac{\partial \mathcal{L}_t}{\partial \theta_{\nu}} + \dots.
\end{split}
\end{equation}

For illustration, we will examine the continuous time limit of these dynamics, but they can straightforwardly be extended to the discrete time setting by keeping higher-order terms in $\eta$. Letting $\eta$ tend to zero, the evolution of the function $f_t(x)$ under gradient flow is
\begin{equation}
    \frac{\dd f_t(x)}{\dd t} = - \sum_{\mu} \frac{\partial f_t(x)}{\partial \theta_{\mu}} \frac{\partial \mathcal{L}_t}{\partial \theta_{\mu}}.
\end{equation}

\noindent It will be useful to separate out the relationship between the NN function and the parameters (this map encodes the structure of the NN) with the relationship between the NN function and the loss, rewriting the gradients as 
\begin{equation}
    \frac{\partial \mathcal{L}_t}{\partial \theta_{\mu}} = \sum_{\alpha \in \mathcal{D}} \frac{\partial \mathcal{L}_t}{\partial f_t(x_{\alpha})} \frac{\partial f_t(x_{\alpha})}{\partial \theta_{\mu}}.
\end{equation}

\noindent The function evolution can therefore be rewritten 
\begin{equation}
\begin{split}
     \frac{\dd f_t(x)}{\dd t} &= - \sum_{\alpha \in \mathcal{D}} \frac{\partial \mathcal{L}_t}{\partial f(x_{\alpha})}  \left[ \sum_{\mu}   \frac{\partial f_t(x_{\alpha})}{\partial \theta_{\mu}} \frac{\partial f_t(x)}{\partial \theta_{\mu}} \right] \\
     &= - \sum_{\alpha \in \mathcal{D}} \frac{\partial \mathcal{L}_t}{\partial f(x_{\alpha})} \Theta_t(x_{\alpha}, x).
     \label{eq:level1_eq}
\end{split}
\end{equation}

\noindent The quantity $\Theta_{t}(x,x')$ is the inner product defined as
\begin{equation}
    \Theta_t(x,x') := \sum_{\mu} \frac{\partial f_t(x)}{\partial \theta_{\mu}}  \frac{\partial f_t(x')}{\partial \theta_{\mu}} \mdot
\end{equation}

The $n \rightarrow \infty$ limit of these dynamics was first studied in \cite{jacot2018}, where the quantity $\Theta_{t}(x,x')$ was introduced. The infinite-width limit of $\Theta$ in randomly initialized networks was termed the \emph{Neural Tangent Kernel} (NTK), and it will play a crucial role in our subsequent discussion. We will overload terminology and in what follows also use this term to refer to the dynamical variable $\Theta_t(x,x')$, which may be evaluated away from initialization or in finite-sized networks.

For the form of the loss we consider, the term $\partial \mathcal{L}_t/\partial f(x)$ only depends on $f_t(x)$. However, $\Theta_t(x,x')$ is, in general, a new variable whose dynamics we need to track to ensure a closed system of equations. Time derivatives arise entirely from the dynamics of parameters, so we can make the substitution for the operator
\begin{equation}
    \frac{\dd}{\dd t}  = \sum_{\mu} \frac{\partial \theta_{\mu}}{\partial t} \frac{\partial}{\partial \theta_{\mu}}.
\end{equation}

\noindent The time evolution of the NTK is
\begin{equation}
\begin{split}
    \frac{\dd \Theta_{t}(x,x')}{\dd t} &= \sum_{\mu} \left[ \frac{\dd}{\dd t} \left( \frac{\partial f_t(x)}{\partial \theta_{\mu}} \right) \frac{\partial f_t(x')}{\partial \theta_{\mu}} + \frac{\partial f_t(x)}{\partial \theta_{\mu}} \frac{\dd}{\dd t} \left( \frac{\partial f_t(x')}{\partial \theta_{\mu}} \right) \right] \\
    &= \sum_{\mu, \nu} \left[ \frac{\partial \theta_{\nu}}{\partial t} \frac{\partial}{\partial \theta_{\nu}} \left( \frac{\partial f_t(x)}{\partial \theta_{\mu}} \right) \frac{\partial f_t(x')}{\partial \theta_{\mu}} + \frac{\partial f_t(x)}{\partial \theta_{\mu}} \frac{\partial \theta_{\nu}}{\partial t}  \frac{\partial}{\partial \theta_{\nu}} \left( \frac{\partial f_t(x')}{\partial \theta_{\mu}} \right) \right] \\
    &= - \sum_{\mu, \nu} \left[ \frac{\partial \mathcal{L}_t}{\partial \theta_{\nu}} \frac{\partial}{\partial \theta_{\nu}} \left( \frac{\partial f_t(x)}{\partial \theta_{\mu}} \right) \frac{\partial f_t(x')}{\partial \theta_{\mu}} + \frac{\partial f_t(x)}{\partial \theta_{\mu}} \frac{\partial \mathcal{L}_t}{\partial \theta_{\nu}}  \frac{\partial}{\partial \theta_{\nu}} \left( \frac{\partial f_t(x')}{\partial \theta_{\mu}} \right) \right] \\
    &= - \sum_{\alpha \in \mathcal{D}} \frac{\partial \mathcal{L}_t}{\partial f(x_{\alpha})} \left[ \sum_{\mu, \nu} \bigg( \frac{\partial^2 f_t(x)}{\partial \theta_{\mu} \partial \theta_{\nu}} \frac{\partial f_t(x_{\alpha})}{\partial \theta_{\nu}} \frac{\partial f_t(x')}{\partial \theta_{\mu}} + \frac{\partial^2 f_t(x')}{\partial \theta_{\mu} \partial \theta_{\nu}} \frac{\partial f_t(x_{\alpha})}{\partial \theta_{\nu}} \frac{\partial f_t(x)}{\partial \theta_{\mu}} \bigg) \right],
    \label{eq:level2_eq}
\end{split}
\end{equation}
where from the second line to the third line, we used the gradient flow of the parameters.

We find that the dynamics of the NTK is therefore governed by the quantity in square brackets in Eq. \ref{eq:level2_eq}, which involves a contraction of first and second-order derivatives of the network map $\theta \rightarrow f(x)$. The quantity in square brackets may be a new dynamical variable different from $f, \Theta$ in general, and the full set of equations describing the dynamics in function space is generically not closed at this level (that is, using only \ref{eq:level1_eq} and \ref{eq:level2_eq}), requiring the generation of further equations as we did for $\Theta$. We will return to this procedure in the next lecture. 

\subsubsection{Remark on normalization}
\label{normalization}
     Constructing and training a neural network comes with a design choice as to the parameterization of the parameters, and their initialization prior to optimization. Our presentation has thus far followed historical development; to maintain consistency with the subsequent literature, we now use a different parameterization and initialization, which in the literature has been referred to as \emph{NTK parameterization} \cite{jacot2018, leexiao2019wide}. In the layer-to-layer transformations, we factor out an explicit $\sigma_w/\sqrt{n_l}$ dependence in front of the weights (the important factor is the size-dependence rather than $\mathcal{O}(1)$ constants) and instead use the initialization scheme $W_{ij}^l \sim \mathcal{N}(0, 1)$. In contrast, the scheme used thus far absorbs the appropriate width scaling into the initialization, e.g. $W_{ij}^{l} \sim \mathcal{N}(0, \sigma^2_w / n_l)$; it is referred to as \emph{standard parameterization} in the literature and has been a common empirical practice in deep learning. Both schemes give rise to the same Gaussian processes in the infinite-width limit, but their dynamics under gradient descent for general networks can be somewhat different. The choice of parameterization also affects the dependence of a suitable choice of hyperparameters, such as the learning rate in gradient descent, on the size of the network. (In standard parameterization, the learning rate will exhibit an implicit dependence on the size of hidden layers.) Hereafter, we will use NTK parameterization since it makes the infinite-width limiting behavior more explicit; however, the important features of our discussion (notably the connection between the infinite-width limit, kernel regression, and Gaussian processes) will be unaffected and so we do this without loss of generality. Further discussion on the difference between NTK and standard parameterization can be found in \cite{leexiao2019wide}. We note in passing that, since this earlier work, a rich literature has developed on the topic of parameterizations, hyperparameter selection, and infinite-width limits.

\subsubsection{Example: single hidden-layer neural network}\
\label{sec:single_layer}
Let us examine a concrete example: a single hidden-layer NN. We set the biases to zero for simplicity. The preactivations in the hidden layer and the output are 
\begin{equation}
\begin{split}
    z_i^0(x) &= \sum_{k=1}^{n_0} \sigma_w \frac{W_{ik}^0}{\sqrt{{n_0}}} x_k \\
    f(x) &= \sum_{i=1}^{n} \sigma_w \frac{W_i^1}{\sqrt{n}} \phi \left( z_i^0(x) \right). 
\end{split}
\end{equation}
The Neural Tangent Kernel for this network is
\begin{equation}
    \Theta(x,x') = \frac{\sigma^2_w}{n} \sum_{i=1}^n \phi \left( z_i^0(x) \right) \phi \left( z_i^0(x') \right) + \bigg( \frac{\sigma^2_w}{n} \sum_{i=1}^n \left( W_i^1 \right)^2 \phi' \left( z_i^0(x) \right) \phi' \left( z_i^0(x') \right) \bigg) \left( \frac{\sigma^2_w}{{n_0}} \sum_{j=1}^{n_0} x_j x'_j \right) \mdot
\end{equation}

\noindent The training dynamics of the preactivations obey
\begin{equation}
    \frac{\dd z_i^0(x)}{\dd t} = - \sum_{\alpha \in \mathcal{D}} \frac{\partial \mathcal{L}_t}{\partial f(x_{\alpha})} \frac{\sigma_w W_i^1 \phi' \left( z_i^0(x_{\alpha}) \right)}{\sqrt{n}} \left( \frac{\sigma^2_w}{{n_0}} \sum_{j=1}^{n_0} x_{\alpha, j} x_j \right),
\end{equation}
and the dynamics of the weights in the last layer are
\begin{equation}
    \frac{\dd W_i^1}{\dd t} = - \sum_{\alpha \in \mathcal{D}} \frac{\partial \mathcal{L}_t}{\partial f(x_{\alpha})} \frac{\sigma_w \phi \left( z_i^0(x_{\alpha}) \right)}{\sqrt{n}}.
\end{equation}
A back-of-the-envelope estimate shows that at initialization, these two quantities vanish as $n \rightarrow \infty$; indeed $\left| \frac{\dd z_i^0}{\dd t} \right|_{t=0} \sim O(\frac{1}{\sqrt{n}})$ and $\left| \frac{\dd W_i^1}{\dd t} \right|_{t=0} \sim O(\frac{1}{\sqrt{n}})$. These terms contribute to the dynamics of $\Theta_t$, and a simple calculation suggests that a similar vanishing occurs for the NTK evolution estimated at initialization, as $n \rightarrow \infty$,
\begin{equation}
    \left| \frac{\dd \Theta_t(x,x')}{\dd t} \right| _{t=0} \xrightarrow[n \to \infty]{} \ 0.
\end{equation}

\noindent On the other hand, we calculated $\Theta_t$ above, and it is $\mathcal{O}(1)$ at initialization as $n \rightarrow \infty$. Thus, a suggestive picture based on these estimates at initialization as $n \rightarrow \infty$, \emph{assuming they continue to hold true during training}, is that individual parameters and hidden-layer preactivations do not evolve under dynamics in this limit, and the NTK remains at its initial value.

\subsubsection{Neural Tangent Kernel in the infinite-width limit}

We will give a physicist's treatment of the behavior of the NTK in the infinite-width limit, both at initialization and after training. The NTK is in general a random variable when the network parameters are themselves drawn from a distribution. However, certain properties become deterministic due to the law of large numbers as $n \rightarrow \infty$. The first main result states that the NTK at initialization approaches a deterministic quantity as $n \rightarrow \infty$, with a recursion relation that parallels the recursion we derived in Lecture 1 for the NNGP. The second main result considers the dynamics of the NTK as $n \rightarrow \infty$: surprisingly, the NTK stays constant during the course of training. Both of these results were hinted at in the last section for a single hidden-layer NN, based off of our back-of-the-envelope estimates at initialization: specifically, we found that $\Theta_{t=0} \sim \mathcal{O}(1)$ and $d\Theta_{t=0}/dt \rightarrow 0$.

The constancy of the NTK enables us to solve for the network evolution analytically and gives a connection between deep NN learning and kernel regression. 

\paragraph{Initialization}

\begin{result}[\cite{jacot2018}]\label{res.lec2-ntkconvergence}
    For a network of depth $L$ at initialization with nonlinearity $\phi$, and in the limit as the layer width $n \to \infty$ sequentially, the NTK $\Theta^{L}$ converges to a deterministic limiting kernel: 
    \begin{equation}
        \Theta^{L, kj} \to \Theta_{0}^{L} \cdot \delta_{kj},
    \end{equation}
where we treat the general setting of non-scalar maps $f: \mathbb{R}^{n_0} \rightarrow \mathbb{R}^{n_{L+1}}$,
    \begin{equation}
        \Theta^{L, kj} (x, x') := \sum_{\mu} \frac{\partial f_k(x)}{\partial \theta_{\mu}} \frac{\partial f_j(x')}{\partial \theta_{\mu}},
    \end{equation}
\end{result}

\noindent and $\Theta^{L}_{0}: \mathbb{R}^{n_0} \times \mathbb{R}^{n_0} \rightarrow R$ is a kernel whose recursion relation we will derive below, while touching on the main ideas behind the proof of this result; we refer the reader to \cite{jacot2018} for complete technical details. We can understand how the result arises by induction in the depth of the network. As we sequentially take each hidden layer to be of infinite size, we will leverage the fact that the distribution of preactivations originating from that layer is described by a Gaussian process with a covariance function given by the NNGP kernels discussed in Lecture 1. 

To derive the recursion relation, we split the parameters into two groups corresponding to those from the last layer and those from earlier in the network.
\begin{equation}
\begin{split}
    \Theta^{L, kj}(x,x') &= \sum_{\mu} \frac{\partial f_k(x)}{\partial \theta_{\mu}} \frac{\partial f_j(x')}{\partial \theta_{\mu}} \\
    &= \sum_{\substack{\mu \in \text{ last layer} \\ L}} \frac{\partial f_k(x)}{\partial \theta_{\mu}} \frac{\partial f_j(x')}{\partial \theta_{\mu}} + \sum_{\substack{\mu \in \text{ earlier layers} \\ 1, \dots, L-1}} \frac{\partial f_k(x)}{\partial \theta_{\mu}} \frac{\partial f_j(x')}{\partial \theta_{\mu}}.
\end{split}
\end{equation}
Working in NTK parameterization for the layer-to-layer transformation,
\begin{equation}
    f_k(x) = z_k^L(x) = \sigma_b b_k^L + \sum_{i=1}^n \sigma_w \frac{W_{ki}^L}{\sqrt{n}} \phi \left( z_i^{L-1}(x) \right),
\end{equation}
the NTK takes the form
\begin{multline}
     \Theta^{L, kj}(x,x') = \delta_{kj} \sigma_b^2 + \delta_{kj} \frac{\sigma_w^2}{n} \sum_{i=1}^n \phi \left( z_i^{L-1}(x) \right) \phi \left( z_i^{L-1}(x') \right) \\ + \delta_{kj} \sigma^2_w  \sum_{i,s =1}^n \frac{W_{ki}^L W_{js}^L}{n} \phi' \left( z_i^{L-1}(x) \right) \phi' \left( z_s^{L-1}(x') \right) \sum_{\substack{\mu \in \text{ earlier layers} \\ 1, ..., L-1}} \frac{\partial z_i^{L-1}(x)}{\partial \theta_{\mu}} \frac{\partial z_s^{L-1}(x')}{\partial \theta_{\mu}}.
\end{multline}
Using the induction hypothesis we can simplify the last term 
\begin{multline}
    \Theta^{L, kj}(x,x') = \delta_{kj} \bigg[ \sigma_b^2 + \frac{\sigma_w^2}{n} \sum_{i=1}^n \phi \left( z_i^{L-1}(x) \right) \phi \left( z_i^{L-1}(x') \right) \\ + \frac{\sigma_w^2}{n} \sum_{i =1}^n (W_{ki}^L)^2 \phi' \left( z_i^{L-1}(x) \right) \phi' \left( z_i^{L-1}(x') \right) \Theta^{L-1}(x,x') \bigg].
\end{multline}
The second and third term are averages of i.i.d. random variables in the infinite width limit. Thus, by the law of large numbers, they concentrate to their mean when $n \to \infty$. Since the distribution on $z^{L-1}$ is given by a Gaussian process, we can further simplify the expression. Revisiting the discussion in \ref{subsec.lec1-priorFC}, we let
\begin{equation}
\begin{split}
    \mathcal{F}_{\phi} (\Sigma) &= \mathbb{E}_{(u,v) \sim \mathcal{N}(0, \Sigma)} \left[ \phi(u) \phi(v) \right] \\
    \widetilde{\mathcal{F}}_{\phi} (\Sigma) &= \mathbb{E}_{(u,v) \sim \mathcal{N}(0, \Sigma)} \left[ \phi'(u) \phi'(v) \right],
\end{split}
\end{equation}
where 
\begin{equation}
\Sigma = \left( \begin{array}{c|c}
    K_{11} & K_{12} \\ 
    \hline
    K_{21} & K_{22}
\end{array} \right).
\end{equation}
\noindent The first and second terms concentrate to 
\begin{equation}
\begin{split}
    \sigma_b^2 + \sigma_w^2 \: \expec{\phi \left( z_i^{L-1}(x) \right) \phi \left( z_i^{L-1}(x') \right)} &= \sigma_b^2 + \sigma_w^2 \: \mathcal{F}_{\phi} (K^{L-1}(x,x), K^{L-1}(x,x'), K^{L-1}(x',x')) \\
    &= K^L(x,x').
\end{split}
\end{equation}

\noindent The third term as $n \rightarrow \infty$ becomes
\begin{equation}
\begin{split}
    \sigma_w^2 \: \expec{\left( W_{ki}^L \right)^2} \expec{\phi' \left( z_i^{L-1}(x) \right) \phi' \left( z_i^{L-1}(x') \right) } \, \Theta^{L-1}(x,x') &= \sigma_w^2 \widetilde{\mathcal{F}}_{\phi}(K^{L-1}(x,x), \dots) \, \Theta^{L-1}(x,x').
\end{split}
\end{equation}

Altogether, in a randomly initialized infinitely-wide deep NN, we have the following recursion for the NTK,
\begin{equation}
    \Theta^{L, kj}(x,x') = \delta_{kj} \bigg( K^L(x,x') + \sigma_w^2 \widetilde{\mathcal{F}}_{\phi}(K^{L-1}(x,x), K^{L-1}(x,x'), K^{L-1}(x',x')) \cdot \Theta^{L-1}(x,x') \bigg).
\end{equation}

Hence the NTK depends both on the two-point correlation function of forward-propagated signal (i.e. $K^L$) and on back-propagated signal (such as the integral involving the derivative of $\phi$, which can sometimes be computed in closed-form). 

\paragraph{Training} \mbox{} \\ 

In examining the single-hidden layer NN in Sec. \ref{sec:single_layer}, we saw how the size-dependent parameterization (or initialization) factors of $1/\sqrt{n}$ resulted in dynamical variables such as individual weight matrix elements or individual pre-activations in a layer acquiring a vanishingly small rate of change, with respect to optimization time, at initialization as $n \rightarrow \infty$. This originated from the combination of inverse-$n$ dependent factors and other quantities remaining $\mathcal{O}(1)$; it then resulted in the vanishing of the time derivative of the NTK at initialization. In fact, with certain losses (such as square loss as we are considering), this vanishing time derivative continues to hold during training \cite{jacot2018}, so that macroscopic variables such as $\Theta_t(x,x')$, as well as individual parameters and preactivations, are frozen at their initial values in the infinite-width limit. (While these dynamical variables stay at their initial values during optimization as $n \rightarrow \infty$, they collectively still enable the NN function to adapt and fit the training data). To summarize this informally,

\begin{result}[~\cite{jacot2018}]
    Under gradient flow on the mean-squared error, as $n \to \infty$, the NTK stays constant during training and equal to its initial value,
    \begin{equation}
        \Theta^{L, kj}_t \to \Theta^{L}_{0} \delta_{kj}.
    \end{equation}
\end{result}

\noindent Consequently, the differential equation for the NN function takes the simple form
\begin{equation}
    \frac{\dd f_t(x)}{\dd t} = - \sum_{\alpha \in \mathcal{D}} \frac{\partial \mathcal{L}_t}{\partial f(x_{\alpha})} \Theta_0(x_{\alpha}, x) = - \sum_{\alpha \in \mathcal{D}} \bigg( f_t(x_{\alpha}) - y_{\alpha} \bigg) \, \Theta_0(x_{\alpha}, x),
   \label{eq.2:dynoutNTK}
\end{equation}

\noindent which can be solved exactly.

\subsubsection{Closed-form solution for dynamics and equivalent linear model}
\label{sec:equivalent_linear_model}

We can derive an explicit solution for $f_t(x)$ from the linear ordinary differential equation in \eqref{eq.2:dynoutNTK}. Before doing so, however, we discuss an equivalent formulation for the dynamics that lends perspective to the complexity of the model that is learned in this infinite-width limit and yields a parameter-space description. As we state in the next section (Result \ref{res:linear_conv}), the optimization dynamics of the NN function in the infinite-width limit is equivalent to the function realizing a first-order Taylor expansion with respect to the NN parameters \cite{leexiao2019wide}; more precisely, it realizes the specific linear model  

\begin{equation}
    f_t\lin(x) := f_0(x) + \nabla_{\theta}f_0(x)\big|_{\theta=\theta_0} \cdot \omega_t \mcom
    \label{eq:firstorderTaylor}
\end{equation}
where $\omega_t = \theta_t - \theta_0$ is the change in the parameters during training from their initial value. Note that this model is still nonlinear with respect to inputs $x$. Hence, we can also study parameter space dynamics in the infinite-width limit, obtaining a linear ODE for the NN parameters $\theta_t$ in analogy to \eqref{eq.2:dynoutNTK}. Let $\mathcal{X}$ and $\mathcal{Y}$ denote the collection of training inputs and targets vectorized over the sample dimension $m = 1, ..., M$. Solving the ODEs in closed-form yields 

\begin{align}
    &\omega_t = - \nabla_{\theta}f_0(\mathcal{X})^\top \cdot \Theta^{-1}_0 \cdot \left( I - e^{-\Theta_0 t} \right) \cdot (f_0(\mathcal{X}) - \mathcal{Y}), \\
    &f_t\lin(\mathcal{X}) = \left( I - e^{-\Theta_0 t} \right) \mathcal{Y} + e^{-\Theta_0 t} f_0(\mathcal{X}),
\end{align}
where $\Theta_0 \equiv \Theta_0(\mathcal{X}, \mathcal{X})$. The value of the NN function in the infinite-width limit (equivalently, the value of the linear model \eqref{eq:firstorderTaylor}) is 
\begin{equation}
    f_t (x) = \underbrace{\Theta_0(x,\mathcal{X}) \cdot \Theta_0^{-1} \cdot \left(I - e^{-\Theta_0 t} \right) \cdot \mathcal{Y}}_{\mu_t(x)} + \underbrace{f_0(x) - \Theta_0(x,\mathcal{X}) \cdot  \Theta_0^{-1} \cdot \left(I - e^{-\Theta_0 t} \right) \cdot f_0(\mathcal{X})}_{\gamma_t(x)}, 
    \label{eq:exactly_solvable_general_x}
\end{equation}
where we grouped all the terms depending on the initial function in $\gamma_t(x)$. While we have solved the dynamics for a particular instantiation of an infinite-width random network, if we consider the distribution on $f_t(x)$ that arises from the initial distribution on $f_0$ (namely, $f_0(x)$ is a sample from a GP), we find $f_t(x)$ is also described by a GP whose mean and covariance functions can be calculated from \eqref{eq:exactly_solvable_general_x}. (We separated the terms into $\mu_t(x)$ and $\gamma_t(x)$ to hint that they contribute to the mean and variance of $f_t(x)$, respectively.) This GP can be contrasted with the one arising from Bayesian inference in the infinite-width limit \eqref{eq:bayesian_inference_gp}. The GP arising from empirical risk minimization and gradient flow has mean and variance \cite{leexiao2019wide}

\begin{equation}
\begin{split}
    \mu(x) &= \Theta_0(x, \mathcal{X}) \cdot \Theta^{-1}_0 \cdot \mathcal{Y}, \\
    \sigma^2(x) &= K(x,x) + \Theta_0(x, \mathcal{X}) \cdot \Theta^{-1}_0 \cdot K \cdot \Theta^{-1}_0 \cdot \Theta_0(\mathcal{X}, x) - 
                 ( \Theta_0(x, \mathcal{X}) \cdot \Theta^{-1}_0 \cdot K(\mathcal{X}, x) + \\
                                & K(x, \mathcal{X}) \cdot \Theta^{-1}_0 \cdot \Theta_0(\mathcal{X}, x) ).
\end{split}
\end{equation}

\noindent (Recall that $\Theta_0, K$ without arguments refers to the $m \times m$ matrix constructed by evaluating on training samples $\mathcal{X}$.)

Rather surprisingly, we have found that the infinite-width limit under optimization leads to exactly solvable dynamics for deep neural networks. In principle, the result could have been quite complicated, and with infinitely-many parameters the learned function might have been rather ill-behaved. Instead, the dynamics have a relatively simple description: it is captured by the kernel $\Theta_0$ associated with the deep NN and which is computable via recursion relations. We reiterate how this simplicity came about due to the way in which NN parameters are commonly represented (either through explicit or implicit factors involving the hidden-layer size) in deep learning.

\subsubsection{Aside: linear model equivalence in two parameterizations}

In Sec. \ref{sec:equivalent_linear_model}, we mentioned how gradient flow dynamics at infinite width realizes a linear relationship between the NN function and parameters during the course of training \eqref{eq:firstorderTaylor}, and that this is equivalent to the Neural Tangent Kernel $\Theta$ staying constant at its initial value $\Theta_0$ as $n \rightarrow \infty$. While we have focused our discussion on gradient flow, these equivalences between infinite-width deep NN dynamics, linear models, kernel regression, and Gaussian processes hold under gradient descent up to a maximum learning rate. Below, we state these results informally \cite{leexiao2019wide}, highlighting the value of the maximum learning, and contrast how the results appear in NTK and standard parameterization.

\begin{result}[\cite{leexiao2019wide}]\label{res:linear_conv}
    Assume that the smallest eigenvalue of the NTK at initialization is positive $\lambda_{\text{min}} > 0$ and let $\lambda_{\text{max}}$ be the largest eigenvalue. Under gradient descent with a learning rate $\eta < \eta_{\text{critical}}$ where $\eta_{\text{critical}} = \frac{2}{\lambda_{\text{min}} + \lambda_{\text{max}}}$,  we have (in NTK parameterization),
    \begin{equation}
    \begin{split}
        \sup_{t \geq 0} \norm{f_t(x) - f_t^{\text{lin}}(x)}_2 &= \bigo{\frac{1}{\sqrt{n}}} \\
        \sup_{t \geq 0} \frac{\norm{\theta_t - \theta_0}_2}{\sqrt{n}} &= \bigo{\frac{1}{\sqrt{n}}} \quad \qquad \text{as } n \to \infty \\
        \sup_{t \geq 0} \norm{\Theta_t - \Theta_0}_F &= \bigo{\frac{1}{\sqrt{n}}}.
    \end{split}
    \end{equation}
\end{result}

In standard parametrization (c.f. \ref{normalization}), it is necessary to have $\eta_0 < \eta_{\text{critical}}$ and the learning rate used in gradient descent is instead $\eta := \eta_0/n$. In this parameterization, we define the Neural Tangent Kernel as 
\begin{equation}
    \Theta = \frac{1}{n} \sum_{\mu} \frac{\partial f(x)}{\partial \theta_{\mu}} \frac{\partial f(x')}{\partial \theta_{\mu}},
\end{equation}
and the analogous scalings are
\begin{equation}
\begin{split}
    \sup_{t \geq 0} \norm{f_t(x) - f_t^{\text{lin}}(x)}_2 &= \bigo{\frac{1}{\sqrt{n}}} \\
    \sup_{t \geq 0} \norm{\theta_t - \theta_0}_2 &= \bigo{\frac{1}{\sqrt{n}}}  \quad \qquad \text{as } n \to \infty \\
    \sup_{t \geq 0} \norm{\Theta_t - \Theta_0}_F &= \bigo{\frac{1}{\sqrt{n}}}.  
\end{split}
\end{equation}

\noindent We see that the primary differences between the two parameterizations in the infinite-width limit is the bound on the $L_2$ parameter distance moved during optimization and the form of the maximum learning rate.

\def\sumData{\sum_{x^{\alpha}\in \mathcal{D}}}   
\def\xalpha{x^{\alpha}}                          
\def\xbeta{x^{\beta}}
\def\loss{\mathcal{L}}                           
\def\bigO{\mathcal{O}}                           
\def\todo#1{\textcolor{blue}{\textbf{TODO: #1}} \\}  
\newcommand{\pderiv}[2]{\frac{\partial #1}{\partial #2}}                     
\newcommand{\secondpderiv}[3]{\frac{\partial^2 #1}{\partial #2 \partial #3}} 
\def\Ralpha{R^{\alpha}}       
\def\Othree{\mathbb{O}_3}
\def\Ofour{\mathbb{O}_4}
\def\O#1{\mathbb{O}_{#1}}
\newcommand{\E}[2]{\mathbb{E}_{#2}\left[ #1 \right]}  
\def\T{\mathrm{T}}  

\section{Lecture 3}

\subsection{Introduction}

In this lecture, we go beyond the exactly solvable infinite-width limit to discuss both perturbative and non-perturbative corrections that are visible at large but finite width. One aspect of the exactly solvable limit discussed in Lecture 2 is that it exhibits no ``feature learning"; rather, the model relies on a fixed set of random features from initialization for prediction. Equivalently, the Neural Tangent Kernel does not change during the course of training. Finite-size hidden layers in a deep neural network instead gives rise to ``weak" or ``strong" amounts of feature learning, and one goal of this lecture is to illustrate two theoretical descriptions of such feature learning.

We begin by revisiting the function space description we alluded to in Sec. \ref{sec:chain_rule_function_space} which gives rise to a hierarchy of coupled differential equations necessary for closure. This hierarchy can be truncated to compute leading order corrections arising from finite width \cite{DyerGurAri2020, huangYau}.\footnote{While we do not discuss it here, capturing the effect of depth is treated in \cite{roberts2021principles}.} We then give a contrasting example of a minimal model whose learning (at large $n$) is quite different than the exactly solvable kernel limit and its perturbative corrections, a phenomenon termed "catapult dynamics" \cite{lewkowycz2020large}. This phenomenon arises from using a learning rate in gradient descent that is larger than the critical value (Result \ref{res:linear_conv}).

\subsection{Perturbation theory for dynamics at large but finite width}\label{sec:perturbative_NTK}

In Sec. \ref{sec:chain_rule_function_space}, we derived ODEs for the evolution of the NN function and the dynamical Neural Tangent Kernel under gradient flow. (Here, we use the abbreviation $R_{\alpha} := \frac{\partial \mathcal{L}_t}{\partial f(x_\alpha)} = f_t(x_\alpha) - y_\alpha$ for the residual originating from the loss.) These were
\begin{equation}
    \frac{df_t(x)}{dt} = 
    -\sum_{\alpha \in \mathcal{D}}  \underbrace{\frac{\partial \mathcal{L}_t}{\partial f(x_{\alpha})}}_{ \equiv R_{\alpha, t}} \, \Theta_t (x_{\alpha}, x),
\end{equation}
and
\begin{equation}\label{derivative_of_NTK}
    \frac{d \Theta_t(x,x')}{dt} = 
         - \sum_{\alpha \in \mathcal{D}} R_{\alpha, t} \underbrace{
                            \left[ 
                            \sum_{u,v} \pderiv{f_t(x_{\alpha})}{\theta_u} 
                            \secondpderiv{f_t(x)}{\theta_u}{\theta_v}
                            \pderiv{f_t(x')}{\theta_v} 
                            + (x \leftrightarrow x').
                            \right]
                            }_{\mathbb{O}_3 (x,x',x_{\alpha})}.
\end{equation}

\noindent where we use $(x \leftrightarrow x')$ to denote the expression obtained from exchanging $x$ and $x'$ in the preceding term appearing in square brackets (hence, note that $\mathbb{O}_3$ is symmetric under exchange of arguments $x, x'$).

\subsubsection{Hierarchy of coupled ODEs}
\label{sec:coupledODEhierarchy}

While the specific form of these ODEs and new dynamical variables (such as $\Othree$) will depend on the particular NN, generically the system of coupled ODEs may not be closed at this order. Hence we continue generating new equations in the hierarchy by computing time derivatives of the new variables that appear. Altogether we obtain a hierarchy of coupled ODEs for dynamical variables $\O{s}(x_1,...,x_s,t)$ that involve particular types of contractions, over NN parameters, of high and low-order derivatives of the NN function with respect to parameters. We refer to this hierarchy of coupled ODEs as a function space description since it references dynamical variables whose arguments are all on sample space $x \in \mathbb{R}^{n_0}$ and the NN parameters are summed over in the description of the new variables.\footnote{These ODEs were first introduced and studied at a physics-level of rigor in \cite{DyerGurAri2020} for deep linear and ReLU networks, which we follow here, and then analyzed from a mathematically rigorous perspective in \cite{huangYau}. A related set of variables is introduced and studied in \cite{roberts2021principles}.}

Continuing the procedure described, we derive the evolution of $\Othree$ in terms of a new variable $\Ofour$,
\begin{equation}
    \frac{d \mathbb{O}_{3, t} (x,x',x_{\alpha})}{dt} = 
         - \sum_{\beta \in \mathcal{D}} R_{\beta, t} \mathbb{O}_{4,t}(x,x',x_{\alpha},x_{\beta}),
\end{equation}
and so on. To write a compact expression, we define 
\begin{equation}
\begin{split}
    & \O{1}(x_1) := f(x_1) \\
    & \O{2}(x_1,x_2) := \Theta(x_1,x_2) \\
    & \O{s}(x_1,...,x_s) := \sum_{\mu} \pderiv{\O{s-1}}{\theta_{\mu}} \pderiv{f(x_s)}{\theta_{\mu}} 
    ,\; s \geq 3,
\end{split}
\end{equation}
and they obey an associated hierarchy of coupled ODEs
\begin{equation}\label{ODE_hierarchy}
    \frac{d \mathbb{O}_{s,t} (x_1,...,x_s)}{dt} = - \sum_{\alpha \in \mathcal{D}} R_{\alpha, t} \mathbb{O}_{s+1,t}(x_1,...,x_s,x_{\alpha}).
\end{equation}

This system of equations has an appealing structure that is, at a high level, reminiscent of the BBGKY (Bogoliubov–Born–Green–Kirkwood–Yvon) hierarchy in statistical physics, where we might interpret $x_1, x_2, ...$ as interacting particles. While we will not pursue this correspondence further, we note that natural physical constraints can enable closure of the BBGKY hierarchy. Similarly, to make further progress we must find some natural means for closure of this system for deep NN dynamics. 

It turns out, as derived in \cite{DyerGurAri2020}, that the ``higher-order" (in $s$) variables $\mathbb{O}_s$ have a natural scale at initialization that is suppressed in inverse width (for deep NNs with specific choices of nonlinearities). Specifically, for a function $F_t(x)$ different from the $\mathbb{O}_{s}$ variables,
\begin{equation}\label{scaling_of_hierarchy}
    \E{\mathbb{O}_{s, t}(x_1,...,x_s)F_t(x)}{\theta_t} = 
    \begin{cases}
        \bigO(n^{-\frac{s-2}{2}}), & \; s \; \mathrm{even}\\
        \bigO(n^{-\frac{s-1}{2}}), & \; s \; \mathrm{odd}.
    \end{cases}
\end{equation}

In a randomly initialized NN in NTK parameterization, this scaling of expectation values can be derived by counting the number of sums and derivatives. (For deep linear networks, this would be a straightforward application of Wick's Theorem.) The scaling of expectation values holds during training as well, since the dynamical corrections to the $\mathbb{O}_s$ variables are governed by suppressed variables with larger $s$. If the training loss (tied to the contribution of $R_{\alpha, t}$ variables) decreases fast enough, the changes to the scaling of expectation values can be neglected compared to the scaling at initialization. (In particular, we know from the exactly solvable limit that in its vicinity, the training loss and hence $R_{\alpha}$ decrease exponentially in time, further suppressing the accumulation of corrections if $n$ is large.)

Therefore, we find that the contribution of higher-order variables $\mathbb{O}_{s}$ to the dynamics of the NN function $f_t(x)$ is suppressed in the coupled ODEs, and we can truncate the hierarchy at finite $s$ if the width $n$ is large and gradient flow is valid.

\subsubsection{Dynamics with leading order $1/n$ correction from finite width}

From the scaling of the expectation values in \eqref{scaling_of_hierarchy}, we have that $\Othree,\Ofour\sim\bigO(1/n)$, while $\O{s\geq 5}\sim \bigO(1/n^2)$. We aim to calculate finite-width NN dynamics correct to $1/n$ but dropping terms of higher order. As our focus is on the correction to \emph{dynamics} rather than the discrepancy between infinite and finite-width that exists already \emph{at initialization}, we will base our integration of the ODEs from a randomly initialized NN that is at large but \emph{finite} $n$. Hence, in this section $\mathbb{O}_{s, 0}$ variables (and in particular the NTK $\Theta_0$) refer to the initial values of these variables in a randomly initialized \emph{finite-width} network.

Now, since
\begin{equation}
    \frac{d \mathbb{O}_{4,t}(\cdot)}{dt} = - \sum_{\alpha \in \mathcal{D}} R_{\alpha, t} \mathbb{O}_{5, t}(\cdot, x_{\alpha}) \sim \bigO\left(\frac{1}{n^2}\right),
\end{equation}
\noindent based on the scaling of average values, we set the right-hand side to zero and take $\mathbb{O}_{4,t}(\cdot) = \mathbb{O}_{4, 0}(\cdot)$, i.e. equal to its initial value. (We use the symbol $\cdot$ here as substitute for the same set of arguments on both sides of the equation.) Examining next the preceding equation in the hierarchy, 

\begin{equation}
    \frac{d \mathbb{O}_{3,t}(\cdot)}{dt} \approx -\sum_{\alpha \in \mathcal{D}} 
    \underbrace{\bigg( f_t(x_{\alpha}) - y_{\alpha} \bigg)}_{ a_0 + a_1/n + a_2/n^2 + ....} 
    \underbrace{\mathcal{O}_{4, t}(\cdot, x_{\alpha})}_{b_1/n + b_2/n^2 + ...},
\end{equation}
we consider the variables on the right-hand side as having a power-series expansion in inverse width (with coefficients $\{a_i\}$ and $\{b_i\}$), with the expansion for the residual and for $\mathbb{O}_{4,t}$ beginning at $1/n^0$ and $1/n$, respectively. Hence, to compute $\mathbb{O}_{3, t}$ correct to $\mathcal{O}(1/n)$ we only need the $1/n^0$ contribution from the residual $R_{\alpha, t} = f_t(x_{\alpha}) - y_{\alpha}$, which is precisely the exactly solvable exponential-in-time dynamics we derived in Lecture 2 \eqref{eq:exactly_solvable_general_x} (albeit interpreting the quantities as originating from a randomly initialized, finite-width network). After substitution and integration, we obtain
\begin{equation}
    \mathbb{O}_{3,t}(\tilde{x}) = \mathbb{O}_{3,0}(\tilde{x}) 
    - \sum_{\alpha, \beta \in \mathcal{D}} \mathbb{O}_{4,0}(\tilde{x},x_{\alpha}) \, \left[ \Theta_0^{-1} \right]_{\alpha, \beta} \left( 1 - e^{-t \Theta_0} \right)_{\alpha, \beta} \bigg( f_0(x_{\beta}) - y_{\beta} \bigg),
\label{eq:O3correction}
\end{equation}
where we used the shorthand $\tilde{x}=(x_1,x_2,x_3)$ for three of the sample arguments and explicitly denote the matrix elements of $\Theta_0^{-1}$ and $e^{-t\Theta_0}$ that are needed. Note that the time-dependent term in \eqref{eq:O3correction} implicitly scales as $\sim 1/n$ due to this scaling in $\mathbb{O}_{4,0}$.


Our next step is to use the correction to $\mathbb{O}_{3,t}$ to correct the dynamical Neural Tangent Kernel. We write this as $\Theta_t = \Theta_0 + \Theta_t^{(1)} + \bigO(1/n^2)$, keeping in mind our overloaded notation so that $\Theta_0$ is extracted from a randomly initialized \emph{finite-width network}. Computing $\Theta_t^{(1)}$ is analytically tractable since it requires an integration against exponentials; to highlight the structure of the result, we perform it in the eigenbasis of $\Theta_0$, with eigenvalues $\{ \lambda_i \}$ and eigenvectors $\{ \hat{e}_i \}$:
\begin{equation}
\begin{split}
    \Theta^{(1)}_t(x_1,x_2) 
    \approx & - \underbrace{\Theta^{(1)}_0(x_1,x_2)}_{=0}
    - \int_0^t \mathrm{d}t' \, \sum_{\alpha \in \mathcal{D}}  
    \underbrace{\bigg( f_t(x_{\alpha}) - y_{\alpha}\bigg) }_{a_0 + a_1/n + ...} 
    \underbrace{\mathbb{O}_{3,t}(x_1,x_2,x_{\alpha})}_{c_1/n + c_2/n^2 + ...} \\
    = & - \int_0^t \, \mathrm{d}t' \sum_{i} \left(\mathbb{O}_{3,0} (\Vec{x}) \cdot \hat{e}_i \right) e^{-\lambda_i t'} 
    \left( R_0 \cdot \hat{e}_i  \right) \\
    & \, + \int_0^t \, \mathrm{d}t' \sum_{ij} \left(\hat{e}_i \cdot \mathbb{O}_{4,0}(\Vec{x}) \cdot \hat{e}_j \right)
    \cdot \frac{1}{\lambda_j} \left(1 - e^{-\lambda_j t'} \right) \left( R_0 \cdot \hat{e}_j  \right)
     e^{-t'\lambda_i} \left( R_0 \cdot \hat{e}_i  \right) \\
    = & - \sum_{i} \left(\mathbb{O}_{3,0}(\Vec{x}) \cdot \hat{e}_i \right) 
    \left( R_0 \cdot \hat{e}_i  \right)
    \left( \frac{1- e^{-\lambda_i t}}{\lambda_i} \right) \\
    & \, + \sum_{ij} \left(\hat{e}_i \cdot \mathbb{O}_{4,0} (\Vec{x}) \cdot \hat{e}_j \right)
    \frac{\left( R_0 \cdot \hat{e}_i  \right)\left( R_0 \cdot \hat{e}_j  \right)}{\lambda_j}
    \left[ \frac{1- e^{- \lambda_i t}}{\lambda_i} - \frac{1- e^{- (\lambda_i + \lambda_j) t}}{\lambda_i + \lambda_j} \right].
\end{split}    
\end{equation}

\noindent We have used the shorthand $\vec{x} = (x_1, x_2)$ for two of the sample arguments and defined the vector $R_0$ with elements $R_{\alpha, 0} = f_0(x_{\alpha}) - y_{\alpha}$; inner products with $\hat{e}_{i}, \hat{e}_{j}$ involve contracting the entries of these vectors with the sample degrees-of-freedom that are not explicitly referenced (e.g. $\mathbb{O}_{3,0}(\vec{x}) \cdot \hat{e}_{i} = \sum_{\alpha \in \mathcal{D}} \mathbb{O}_{3,0}(\vec{x}, x_{\alpha}) (\hat{e}_i)_{\alpha}$). 

Finally, we can use the correction to the Neural Tangent Kernel above to compute the correction to the function learned by the NN. For the NN function values evaluated on the training set $x_{\alpha} \in \mathcal{D}$, we obtain
\begin{equation}
    f_t(x_{\alpha}) = y_{\alpha} + \left[ e^{-\Theta_0 t} 
    \left( 1 - \int_0^t \mathrm{d}t' \, e^{-\Theta_0 t'} \Theta^{(1)}_{t'} e^{-\Theta_0 t'} \right) (f_0 - y) \right]_{\alpha} + \bigO(1/n^2),
\end{equation}

\noindent and we can similarly derive an expression for the function value $f_t(x)$ at an arbitrary point $x$.

\paragraph{Corrected dynamics at late times as $t\to \infty$} \mbox{} \\ 

For late times, these expressions predict exponential-in-time dynamics with an effective kernel $\Theta_0 + \Theta^{(1)}_{\infty}$,
\begin{equation}
    f(t) \to y + e^{-(\Theta_0 + \Theta^{(1)}_{\infty}) t} (f_0 - y),
\end{equation}
with the late-time correction
\begin{equation}
\begin{split}
    \Theta^{(1)}_{\infty} := & \lim_{t\to\infty} \Theta^{(1)}_t \\
    = &  - \sum_i \left(\mathbb{O}_{3,0}(\Vec{x}) \cdot \hat{e}_i \right) 
    \frac{R_0 \cdot \hat{e}_i}{\lambda_i} 
    + \sum_{ij} \frac{\left( R_0 \cdot \hat{e}_i  \right)\left( R_0 \cdot \hat{e}_j  \right)}{\lambda_i (\lambda_i + \lambda_j)}
    \left(\hat{e}_i \cdot \mathbb{O}_{4,0}(\Vec{x}) \cdot \hat{e}_j \right).
\end{split}
\end{equation}

\noindent Recall that these are $\mathcal{O}(1/n)$ corrections since both $\mathbb{O}_{3,0}, \mathbb{O}_{4,0} \sim 1/n$. While this is a valid theoretical description of feature learning (that is, the Neural Tangent Kernel changes from its initial value $\Theta_0$), we regard this is a regime of ``weak" feature learning since the change is small in comparison to the value of $\Theta_0$. Nonetheless, it is challenging to derive closed-form expressions for feature learning that maintain generality across architectures and datasets (the derivation above essentially is model and data agnostic, except for pathological settings), and it is intriguing to have such an expression for further analysis. 

\subsection{Large learning rate dynamics at large width: the ``catapult" mechanism}\label{subsec:nonperturbative}

The connection between infinite-width deep NNs, linear models, kernels, and GPs which was the subject of Lecture 2 holds up to a maximum learning rate $\eta_{\text{crit}}$ used in gradient descent. In fact, empirically one finds that a large but finite-width NN can be optimized to convergence at learning rates larger than this value \cite{lewkowycz2020large}. Is it possible to understand some aspects of this regime theoretically?

Indeed, consider a minimal NN model consisting of a single hidden-layer with no nonlinearities,
\begin{equation}
    f(x) = \frac{1}{\sqrt{n}} v^\T u x,
    \label{eq:more_general_uv}
\end{equation}
with parameters $v\in \mathbb{R}^n, \, u\in \mathbb{R}^{nx n_0}$, $m$ samples $(x_{\alpha}, y_{\alpha})$ with $x_{\alpha} \in \mathbb{R}^{n_0}, y_{\alpha} \in \mathbb{R}$, and trained with gradient descent on square loss in NTK parameterization. To illustrate the main features before returning to the more general case, we consider an even further simplified setting for this model: training on a single sample $(x,y) = (1,0)$ with $n_0=1$. We wish to understand the dynamics of 
\begin{equation}
    \mathcal{L}_t = \frac{f_t}{2} \quad \mathrm{with} \quad f_t = \frac{1}{\sqrt{n}} v^\T_t u_t .
\end{equation}
Gradient descent dynamics on the parameters is given by
\begin{align}
    u_{t+1} &= u_t - \frac{\eta}{\sqrt{n}} f_t \cdot v_t &
    v_{t+1} &= v_t - \frac{\eta}{\sqrt{n}} f_t \cdot u_t,
\end{align}
and the NTK is just a scalar, $\Theta_t(1,1) = \frac{1}{n} (\|u_t\|_2^2 + \|v_t\|_2^2) := \lambda_t$. Note that both $f_0, \Theta_0 \sim \bigO(1)$ at initialization. Instead of analyzing the dynamics in parameter space, we work in function space and -- in analogy with the construction of the hierarchy of coupled ODEs in Sec. \ref{sec:coupledODEhierarchy} -- write down an evolution for the function, Neural Tangent Kernel, and any other dynamical variables:
\begin{align}
    f_{t+1} &= f_t \left( 1- \eta \lambda_t + \frac{\eta^2 f_t^2}{n} \right) &
    \lambda_{t+1} &= \lambda_t  + \frac{\eta^2 f_t^2}{n} \left( \eta \lambda_t - 4 \right).
    \label{eq:catapult_model}
\end{align}

Surprisingly, for this simplified setting we can close (the discrete time version of) the hierarchy \eqref{ODE_hierarchy} exactly in terms of the variables $f_t$ and $\lambda_t$ alone. This is in contrast to more complex settings where a truncation scheme is required to close the system.

Let us analyze (\ref{eq:catapult_model}) in different regimes. In the $n\to\infty$ limit, we have
\begin{align}
    f_{t+1} =& f_t (1 - \eta \lambda_0) & \lambda_t = \lambda_0,
\end{align}
so that the NTK is constant and the function value (and hence loss) converges exponentially in time as long as $| 1-\eta \lambda_0 | < 1$. Consequently, for learning rates $\eta < \frac{2}{\lambda_0} := \eta_{\text{crit}}$, we obtain NTK dynamics. Backing off slightly from the limit while keeping $\eta <  \eta_{\text{crit}}$, we will obtain $\bigO(1/n)$ corrections to the dynamics, analogous to the perturbative corrections we investigated in \ref{sec:perturbative_NTK}. 

For learning rates $\eta > \frac{4}{\lambda_0}$, the last term in (\ref{eq:catapult_model}) is positive, causing $\lambda_t$ to increase with time and eventually diverge, along with the loss. In contrast, an interesting regime exists for $\frac{2}{\lambda_0} \leq \eta \leq \frac{4}{\lambda_0}$. Initially, the function and loss start to increase in magnitude, 
\begin{equation}
\begin{split}
    f_{t+1} &= f_t \overbrace{\left(1- \eta \lambda_t + \frac{\eta^2 f_t^2}{n} \right) }^{\geq 1 \; \mathrm{for}\; t=0} \\
    \lambda_{t+1} &= \lambda_t  + \frac{\eta^2 f_t^2}{n} \underbrace{\left( \eta \lambda_t - 4 \right)}_{< 0 \, \forall  t}.
\end{split}
\end{equation}

To see this, note that we can initially ignore the term $\eta^2 f^2_t /n $ in the dynamics of $f_t$, as $n$ is large, and since $| 1 - \eta \lambda_0 | > 1$, $| f_t |$ grows with time. However, once $|f_t| \sim \mathcal{O}(\sqrt{n})$, the second term in the dynamics of $\lambda_t$ yields $\mathcal{O}(1)$ contributions that enable $\lambda_t$ to decrease and dynamically adjust to the large learning rate. This in turn enables $|1 - \eta \lambda_t | < 1$ eventually and (combined with the $\eta^2$ term in the dynamics of $f_t$) results in the convergence of $f_t$ and the loss to a finite value. The mechanism at play here is that the local curvature (essentially captured by $\lambda_t$) adjusts dynamically to the larger learning rate, and optimization ``catapults" to a different region of the high-dimensional landscape in parameters $u, v$ than its initial condition. This catapult effect, enabling $|f_t| \sim \mathcal{O}(\sqrt{n})$, occurs on a time scale $t_{\ast} \sim \bigO(\log(n))$.

The transition between the ``NTK regime" and ``catapult regime" that occurs at $\eta_{\mathrm{crit}} = \frac{2}{\lambda_0}$ and becomes progressively sharper as $n \rightarrow \infty$ is reminiscent of a phase transition in dynamics. Indeed, there are measurable quantities that exhibit divergences near this transition. For example, the optimization time $t_{\epsilon}(\eta)$ needed to reach a loss of $\mathcal{O}(\epsilon)$ behaves as
\begin{equation}
    t_{\epsilon}(\eta) \sim \frac{1}{|\eta_{\mathrm{crit}} - \eta|},
\end{equation}

\noindent with exponent $\nu = 1$ (and dropped constants) in the vicinity of $\eta_{\mathrm{crit}}$, approached from below or above.

What is additionally surprising about this phenomenology is that, although we have studied a drastically simplified model, the catapult regime is empirically observed in a diverse range of realistic settings, including different datasets, NN architectures, and precise optimization choices (e.g. stochasticity in gradient descent and choice of standard vs. NTK parameterization) \cite{lewkowycz2020large}. To reiterate these empirical observations, one finds three regimes of dynamics in large width, deep NNs trained using stochastic gradient descent with learning rate $\eta$ and square loss:\footnote{$\lambda_0$ refers to the maximum eigenvalue of the Neural Tangent Kernel at initialization.}

\begin{enumerate}
    \item When $\eta \lesssim 2/\lambda_{0}$, the ``NTK" regime holds, namely the change in the dynamical Neural Tangent Kernel, $\Delta \Theta_t \overset{t\to \infty}{\longrightarrow} 0$, vanishes as $n$ gets larger. We can understand this regime with perturbative corrections discussed earlier in this lecture. The loss decreases fairly monotonically during optimization.
    \item When $ 2/\lambda_{0} \lesssim \eta \lesssim \eta_{\max}$, the dynamical Neural Tangent Kernel changes by a nonvanishing amount as $n \rightarrow \infty$, $\Delta \Theta_t \overset{t\to \infty}{\longrightarrow} \bigO(1)$, exhibiting a ``strong" form of feature learning. Here $\eta_{\max} = c/\lambda_{0}$, with $c$ being a $\mathcal{O}(1)$ constant. For the minimal model, $c = 4$; while this value is approximately observed in deep NNs with certain nonlinearities, in general $c$ is a non-universal constant. The loss behaves non-monotonically during optimization, with an initial increase early in training on a time scale $t \sim \mathcal{O}(\log(n))$. Optimization converges to a region with flatter curvature (as evidenced by the effect on the eigenvalues of the Neural Tangent Kernel).
    \item When $\eta \geq \eta_{\max}$, optimization diverges.
\end{enumerate}

Let us return to the model \eqref{eq:more_general_uv} with the more general setting of $m$ samples and dimensionality $n_0$ \cite{lewkowycz2020large}. Gradient descent on parameters takes the form

\begin{align}
u_{ia,  t+1} &= u_{ia, t} - \frac{\eta}{m \sqrt{n}} \sum_{\alpha \in \mathcal{D}} v_{i, t} x_{a \alpha} R_{\alpha, t} & v_{i, t+1} &= v_{i, t} - \frac{\eta}{m \sqrt{n}} \sum_{a, \alpha \in \mathcal{D}} u_{i a, t} x_{a \alpha} R_{\alpha, t},
\end{align}

\noindent where we use $R_{\alpha} = f_{\alpha} - y_{\alpha}$ as before. The Neural Tangent Kernel evaluated on the training data has matrix elements $\Theta_{\alpha \beta} = \frac{1}{n m} \bigg( |v|^2 x^T_{\alpha} x_{\beta} + x^T_{\alpha} u^T u x_{\beta} \bigg)$. Tracking the dynamics in the natural variables on function space (the residual $R_{\alpha}$ is directly related to $f_{\alpha}$) yields

\begin{equation}
\begin{split}
R_{\alpha, t+1} &= \sum_{\beta \in \mathcal{D}} (\delta_{\alpha \beta} - \eta \Theta_{\alpha \beta, t}) R_{\beta, t} + \frac{\eta^2}{n m} (x^T_{\alpha} \zeta_t) (f^T_t R_t) \\
\Theta_{\alpha \beta, t+1} &= \Theta_{\alpha \beta, t} - \frac{\eta}{nm} \left[ (x^T_{\beta} \zeta_t) f_{\alpha, t} + (x^T_{\alpha} \zeta_t) f_{\beta, t} + \frac{2}{m} (x^T_{\alpha} x_{\beta}) (R^T_t f_t) \right] + \\
& \frac{\eta^2}{n^2 m} \left[ |v_t|^2 (x^T_{\alpha} \zeta_t) (x^T_{\beta} \zeta_t) + (\zeta^T_t u^T_t u_t \zeta_t) (x^T_{\alpha} x_{\beta}) \right],
\label{eq:not_closed_eq}
\end{split}
\end{equation}

\noindent where have defined the vector $\zeta = \sum_{\alpha \in \mathcal{D}} R_{\alpha} x_{\alpha} /m \in \mathbb{R}^{n_0}$. This system of discrete time equations is not closed, unlike the version we considered in the simpler setting, and its closure does not arise naturally with the consideration of higher-order variables analogous to $\mathbb{O}_s$ for $s \geq 2$. However, we can approximately extract a two-variable closed system of equations that is reminiscent of the simpler system (\ref{eq:catapult_model}). Consider the dynamics of the Neural Tangent Kernel projected onto the residual,
\begin{align}
R^T_t \Theta_{t+1} R_t = R^T_t \Theta_{t} R_t + \frac{\eta}{n} \zeta^T_t \zeta_t \bigg( \eta R^T_t \Theta_t R_t - 4 f^T_t R_t \bigg).
\end{align}

\noindent Due to the form of the dominant term in the dynamics of the function (or residual), namely $\delta_{\alpha \beta} - \eta \Theta_{\alpha \beta_t}$, we might be inclined to approximate $R_t$ as becoming well-aligned with the maximum eigenvector of $\Theta$ at initialization, denoted $\hat{e}_{\text{max}}$. (Particularly in the catapult regime, the function and the residual grow exponentially fast, and this occurs along the $\hat{e}_{\text{max}}$ direction.) Hence, as a naive approximation we take $f_t \approx R_t \approx (\hat{e}_{\text{max}} \cdot R_t) \hat{e}_{\text{max}}$. This allows us to approximately simplify the equation for the projected kernel to an equation for the top NTK eigenvalue,
\begin{align}
    \lambda_{t+1} \approx \lambda_t + \frac{\eta}{n} \zeta^T_t \zeta_t (\eta \lambda_t - 4) 
\end{align}

\noindent which bears similarity to the simpler (\ref{eq:catapult_model}). Hence, we can understand how, despite the lack of closure in \eqref{eq:not_closed_eq}, it contains within it the mechanisms and universal phenomenology of (\ref{eq:catapult_model}), giving rise to distinct regimes of NN dynamics.


\newcommand{\ceil}[1]{\lceil #1 \rceil}
\newcommand{\twiddle}[1]{\widetilde{#1}}
\newcommand{\pr}[1]{\mathbb P\left(#1\right)}
\newcommand{\bk}[1]{\left \langle #1 \right \rangle}
\newcommand{\prend}{$\hfill \Box$}
\newcommand{\ls}{\leqslant}
\newcommand{\gr}{\geqslant}
\newcommand{\eps}{\varepsilon}
\renewcommand{\norm}[1]{\left|\left|#1\right|\right|}
\newcommand{\lr}[1]{\left(#1\right)}
\newcommand{\abs}[1]{\left|#1\right|}
\newcommand{\set}[1]{\left\{#1\right\}}
\renewcommand{\E}[1]{\mathbb E\left[#1\right]}
\newcommand{\inprod}[2]{\left \langle #1,#2\right\rangle }
\def\Var{\mathrm{Var}}  
\def\Cov{\mathrm{Cov}} 
\newcommand{\x}{\times}
\newcommand{\R}{\mathbb R}
\newcommand{\C}{\mathbb C}
\newcommand{\gives}{\rightarrow}
\newcommand{\mO}{\mathcal O}
\newcommand{\mA}{\mathcal A}
\newcommand{\mF}{\mathcal F}
\newcommand{\mN}{\mathcal N}
\newcommand{\mL}{\mathcal L}
\newcommand{\mS}{\mathcal S}
\newcommand{\mG}{\mathcal G}
\newcommand{\amO}{\overrightarrow{\mO}}
\newcommand{\mD}{\mathcal D}
\newcommand{\Z}{\mathbb Z}
\newcommand{\N}{\mathbb N}
\newcommand{\bkl}[1]{\bk{#1}_{(\ell)}}
\newcommand{\bkkl}[1]{\bk{#1}_{K^{(\ell)}}}
\newcommand{\bkal}[1]{\bk{#1}_{\kappa^{(\ell)}}}
\newcommand{\twomat}[4]{\lr{\begin{array}{cc} #1 & #2 \\ #3 & #4 \end{array}}}
\newcommand{\wkappa}{\widehat{\kappa}}
\newcommand{\kappal}{\kappa^{(\ell)}}
\newcommand{\Di}[1]{\frac{\partial}{\partial #1}}
\newcommand{\Disq}[1]{\frac{\partial^2}{\partial #1^2}}
\newcommand{\DDi}[2]{\frac{\partial^2}{\partial #1\partial #2}}
\newcommand{\aperp}{a_{\perp}}
\newcommand{\K}[3]{K_{(#1#2)}^{(#3)}}
\newcommand{\Kell}[2]{K_{(#1#2)}^{(\ell)}}
\newcommand{\kell}[2]{\kappa_{(#1)(#2)}^{(\ell)}}
\newcommand{\zz}{\sigma^2}
\newcommand{\oz}{\sigma\sigma'dz}
\newcommand{\oo}{(\sigma'dz)^2}
\newcommand{\tz}{(\sigma\sigma'' (dz)^2 + \sigma\sigma' d^2z)}
\newcommand{\tzs}{\sigma\sigma'' (dz)^2 + \sigma\sigma' d^2z}
\renewcommand{\to}{(\sigma'\sigma'' (dz)^3 + (\sigma')^2dz d^2z)}
\newcommand{\tos}{\sigma'\sigma'' (dz)^3 + (\sigma')^2dz d^2z}
\renewcommand{\tt}{(\sigma'' (dz)^2 + \sigma' d^2z)^2}
\newcommand{\mM}{\mathcal M}
\newcommand{\mI}{\mathcal I}
\newcommand{\mB}{\mathcal B}
\newcommand{\mP}{\mathcal P}
\newcommand{\sgn}[1]{\mathrm{sgn}\left(#1\right)}

\newtheorem{theorem}{Theorem}[section]
\newtheorem{lemma}[theorem]{Lemma}
\newtheorem{corollary}[theorem]{Corollary}
\newtheorem{proposition}[theorem]{Proposition}
\newtheorem{question}[theorem]{Question}
\newtheorem{conjecture}[theorem]{Conjecture}
\newtheorem{prob}{Problem}
\newtheorem{remark}[theorem]{Remark}
\newtheorem{assumption}[theorem]{Assumption}

\section{Lecture 4: Boris Hanin}\label{S:L4}

Lectures 4 and 5 are due to Boris Hanin.
They continue the trajectory of Yasaman Bahri's Lectures 1-3, focusing on asymptotic and perturbative calculations of the prior distribution of fully-connected neural networks.
Lecture 4 derives perturbative corrections to the NNGP.
Lecture 5 changes tack and discusses exact prior calculations specific to \textrm{ReLU} networks.

\subsection{Notation Dictionary}
From now on, there be a change of notation that we summarize here:
\begin{center}
\begin{tabular}{ |c|c| }
 \hline
 Lectures 1-3 & Lectures 4-5 \\
 \hline
 $\mathbb{E}\left[\cdot \right]$ & $\langle \cdot \rangle$ \\
 $0\le\ell<L$ & $1\le\ell<L+1$ \\
 $ z_i^{l=\ell} (x_\alpha)$ & $z_{i;\alpha}^{(\ell)}$ \\ 
 $K^{l = \ell}(x_\alpha, x_\beta)$ & $K^{(\ell)}_{\alpha \beta}$ \\ 
 $\phi( \cdot )$ (nonlinearity) & $\sigma(\cdot )$ \\ 
 $\sigma^2_b$, $\sigma^2_w$ & $C_b$, $C_w$\\
 \hline
\end{tabular}
\end{center}
\subsection{Notation}\label{S:notation}
Fix $L\geq 1$, $n_0,\ldots, n_{L+1}\geq 1$, and $\sigma:\R\gives \R$. We will consider a fully connected feed-forward network, which to an input $x_\alpha\in \R^{n_0}$ associates an output $z_\alpha^{(L+1)}\in \R^{n_{L+1}}$ as follows:
\begin{equation}\label{E:z-def}
    z_{i;\alpha}^{(\ell+1)}=\begin{cases} b_i^{(\ell+1)}+\sum_{j=1}^{n_\ell} W_{ij}^{(\ell+1)}\sigma\lr{z_{j;\alpha}^{(\ell)}},&\quad \ell \geq 1\\
    b_i^{(1)}+\sum_{j=1}^{n_0} W_{ij}^{(1)}x_{j;\alpha},&\quad \ell=0
    \end{cases}.
\end{equation}
We will have occasion to compute a variety of Gaussian integrals and will abbreviate
\[
\bk{f(z_\alpha)}_{K^{(\ell)}} = \int_{\R} f(z_\alpha) \exp\left[-\frac{z_\alpha^2}{2K_{\alpha\alpha}^{(\ell)}}  -\frac{1}{2}\log(2\pi K_{\alpha\alpha}^{(\ell)})\right] dz_\alpha 
\]
and more generally 
\[
\bk{f(z_\alpha,z_\beta)}_{K^{(\ell)}} = \int_{\R^2} f(z_\alpha,z_\beta) \exp\left[-\frac{1}{2}\sum_{\delta,\gamma\in \set{\alpha,\beta}} \lr{K^{(\ell)}}_{\gamma\delta}^{-1} z_\gamma z_\delta- \frac{1}{2}\log\det(2\pi K^{(\ell)})\right] dz_\alpha dz_\beta
\]
for Gaussian integrals in which $(z_\alpha,z_\beta)$ is a Gaussian vector with mean $0$ and covariance
\[
   K^{(\ell)} =\twomat{K_{\alpha\alpha}^{(\ell)}}{K_{\alpha\beta}^{(\ell)}}{K_{\alpha\beta}^{(\ell)}}{K_{\beta\beta}^{(\ell)}}.
\]

\subsection{Main Question: Statement, Answer, and Motivation}

\subsubsection{Precise Statement of Main Question}\label{S:question} Fix $L\geq 1,n_0,\ldots, n_{L+1}\geq 1,\sigma:\R\gives \R$ as well as constant $C_b\geq 0,C_W>0$. Suppose 
\begin{align}\label{E:init}
W_{ij}^{(\ell)} \sim \mN(0,C_W/n_{\ell-1}),\quad b_i^{(\ell)}\sim \mN(0,C_b)\qquad \text{independent}.
\end{align}
We seek to understand the distribution of the field
\[
x_\alpha\in \R^{n_0}~\mapsto ~z_\alpha^{(L+1)} \in \R^{n_{L+1}}
\]
when the hidden layer widths are large but finite:
\[
n_1,\ldots, n_L \simeq n\gg 1.
\]
\subsection{Answer to Main Question} \label{S:answer}
We will endeavor to show that the statistics of $z_\alpha^{(L+1)}$ are determined by
\begin{itemize}
    \item The universality class of the non-linearity $\sigma$ (determined by the large $\ell$ behavior of infinite width networks with this non-linearity). 
    \item The effective depth (or effective complexity)
    \[
    \frac{1}{n_1}+\cdots + \frac{1}{n_L}\simeq \frac{L}{n}.
    \]
\end{itemize}
Specifically, we'll see:
\begin{itemize}
    \item At init, $L/n$ measures both correlations between neurons and fluctuations in both values and gradients. (this lecture)
    \item $L/n$ measures the deviation from the NTK regime in the sense that the change in the NTK from one step of GD scales like $L/n$. Thus, the (frozen) NTK regime corresponds to the setting in which the effective depth $L/n$ tends to $0$. Moreover, the extent of feature learning, in the sense of figuring out how much the network Jacobian changes at the start of training, is measured by $L/n$. (next lecture)
    \item $L/n$ measures the extent of feature learning in the sense that the entire network function at the end of training scales like the NTK answer plus $L/n$ plus errors of size $(L/n)^2$ (see Chapter $\infty$ in \cite{roberts2021principles}).
\end{itemize}
This suggests an interesting phase diagram (see Figure \ref{fig:phase}).
\begin{figure}[h]
    \centering
    \includegraphics[scale=.8]{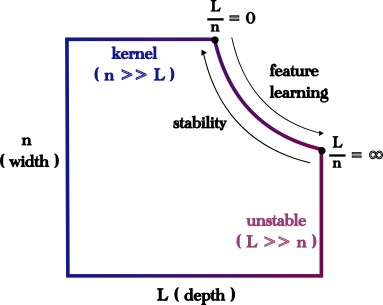}
    \caption{Partial Phase Diagram for Fully Connected Networks with NTK Initialization}
    \label{fig:phase}
\end{figure}

\subsection{Motivations} Before attempting to make precise our answer in \S \ref{S:answer}, we give several motivations for studying our main question: 
\begin{enumerate}
    \item Our first motivation is ML-centric. Namely, to use a neural network in practice requires choosing many hyperparameters, including
    \begin{itemize}
        \item width $n$
        \item depth $L$
        \item non-linearity $\sigma$
        \item initialization variances $C_b,C_W$
        \item learning rates $\eta$
        \item batch sizes $\abs{\mB}$
        \item ($\ell_1$ or $\ell_2$) regularization strength
    \end{itemize}
    Doing direct hyperparameter search is very costly. By studying random networks, we can understand in which combinations these hyperparameters appear in the distribution of $z_\alpha^{(L+1)}$ and, in particular, how to choose them in a coordinated manner so that $z_\alpha^{(L+1)}$ is non-degenerate (say near the start of training) at large values of $L,n$ and training steps.
    \item Our second motivation is mathematical/theoretical. Namely, random fully connected neural networks are non-linear generalizations of random matrix products. Indeed by taking $n_\ell \equiv n$, $\sigma(t)=t$, $C_b=0,C_W=1$, we see that
    \[
    z_\alpha^{(L+1)} = W^{(L+1)}\cdots W^{(1)}x_\alpha
    \]
    is simply a linear statistic of product of $L+1$ iid random matrices. Products of random matrices appear all over the place. When $L=1$ (or more generally $L$ is fixed and finite) and $n\gives \infty$, this is like Wigner's (or Wishart's) random theory. In contrast, when $n$ is fixed and $L\gives \infty$, this is the study of the long time behavior of random dynamical systems. This is the world of the multiplicative ergodic theorem and is used for example in studying Anderson localization in $1d$. A key point is that these two regimes are very different and what happens when both $n,L$ are large is relatively poorly understood, even for this random matrix model. 
    \item The final motivation is again ML-centric. As Yasaman showed in her lectures, when $L$ is fixed and $n\gives \infty$, fully connected networks with the initialization \eqref{E:init} are in the (frozen) NTK regime. In this setting, the entire training dynamics (at least on MSE with vanishingly small learning rates) are determined by the behavior at initialization. Thus, it is the properties of neural networks at init that allow us to describe the generalization behavior and training dynamics. In particular, by doing perturbation theory \textit{directly for the end of training}, it is possible to understand (see Chapter $\infty$ of \cite{roberts2021principles}) training in the near-NTK regime in which the NTK changes to order $1/n$ (really $L/n$). 
\end{enumerate}

\subsection{Intuition for Appearance of $L/n$} Before proceeding to explain how to compute finite width corrections to the statistics of random neural networks, we pause to elaborate a simple intuition for why it is $L/n$, rather than some other combination of $L$ and $n$, that should appear. For this, let us consider the very simple case of random matrix products 
\[
n_\ell\equiv n,\, \sigma(t)=t,\, C_b=0,C_W=1
\]
so that
\[
z_\alpha^{(L+1)}(x) = W^{(L+1)}\cdots W^{(1)}x_\alpha,\qquad W_{ij}^{(\ell)}\sim \mN(0,1/n)\,\, iid.
\]
Assuming for convenience that $\norm{x_\alpha}$ is bounded, let's try to understand what is perhaps the simplest random variable
\[
X_{n,L+1}:=\norm{z_{\alpha}^{(L+1)}}
\]
associated to our matrix product. In order to understand its distribution recall that for any $k\geq 1$ a chi-squared random variable with $k$ degrees of freedom is given by 
\[
\chi_{k}^2 :\stackrel{d}{=} \sum_{j=1}^k X_j^2,\qquad X_j\sim \mN(0,1)\,\,iid.
\]
Recall also that for any unit vector $u$ we have that if $W\in \R^{n\times n}$ is a matrix with $W_{ij}\sim \mN(0,1/n)$ then
\[
Wu \stackrel{d}{=} \mN(0,\frac{1}{n}\mathrm{I}_n),\qquad \norm{Wu}^2\stackrel{d}{=} \frac{1}{n}\chi_n^2,\qquad \norm{Wu}\perp \frac{Wu}{\norm{Wu}},
\]
where $\perp$ denotes conditional independence.
To use this let's write
\begin{align*}
    X_{n,L+1}&=\norm{ W^{(L+1)}\cdots W^{(1)}x_\alpha}=\norm{ W^{(L+1)}\cdots W^{(2)}\frac{W^{(1)}x_\alpha}{\norm{W^{(1)}x_\alpha}}}\norm{W^{(1)}x_\alpha}.
\end{align*}
Note that 
\[
\lr{\frac{1}{n}\chi_n^2}^{1/2}\stackrel{d}{=}\norm{W^{(1)}x_\alpha} \perp \frac{W^{(1)}x_\alpha}{\norm{W^{(1)}x_\alpha}}\in S^{n-1}.
\]
Thus, in fact the presentation above allows us to write $X_{n,L+1}$ as a product of two independent terms! Proceeding in this way, we obtain the following equality in distribution:
\[
X_{n,L+1} \stackrel{d}{=} \exp\left[\sum_{\ell=1}^{L+1} Y_\ell\right] ,\qquad Y_\ell \sim \frac{1}{2}\log \lr{\frac{1}{n}\chi_n^2}\,\, iid.
\]
\noindent \textbf{Exercise.} Show that 
\[
\E{\frac{1}{2}\log \lr{\frac{1}{n}\chi_n^2}}=-\frac{1}{4n} + O(n^{-2}),\qquad \Var\left[\frac{1}{2}\log \lr{\frac{1}{n}\chi_n^2}\right] = \frac{1}{4n}+O(n^{-2}).
\]\\
Thus, we see that 
\[
X_{n,L+1}\stackrel{L\gg 1}{\approx} \exp\left[\mathcal N\lr{-\frac{L}{4n}, \frac{L}{4n}}\right]
\]
and that \textbf{taking $n$ large in each layer tries to make each $Y_j$ close to $1$ but with errors of size $1/n$. When we have $L$ such errors, the total size of the error is on the order of $L/n$.}

\subsection{Summary of Yasaman's Lectures 1 - 3}\label{S:yasaman}
We summarize part of Yasaman’s lectures in one long theorem. For this, recall that a free (i.e. Gaussian) field is one in which the joint distribution of the field at any finite number of points is Gaussian. Hence, free fields are completely determined by their one and two-point functions. 
\begin{theorem}[GP + NTK Regime for Networks at Fixed Depth and Infinite Width]\label{T:yasaman}
Fix $L,n_0,n_{L+1},\sigma$. Suppose that at the start of training we initialize as in \eqref{E:init}.
\begin{itemize}
    \item[(i)] \textbf{GP at Init.} As $n_1,\ldots, n_L\gives \infty$, the field $x\mapsto z^{(L+1)}(x)$ converges weakly in distribution to a free (Gaussian) field with a vanishing one point function
    \begin{align*}
        \lim_{n_1,\ldots, n_L\gives \infty}\E{z_{i;\alpha}^{(L+1)}} &= 0
    \end{align*}
    and a two point function that factorizes across neurons
    \begin{align*}
         \lim_{n_1,\ldots, n_L\gives \infty}\Cov\lr{z_{i;\alpha}^{(L+1)},z_{j;\beta}^{(L+1)}} &= \delta_{ij}K_{\alpha\beta}^{(L+1)}.
    \end{align*}
    Moreover, the two point function is given by the following recursion
    \begin{equation}\label{E:K-rec}
    K_{\alpha\beta}^{(\ell+1)} = \begin{cases} C_b + C_W  \bk{\sigma(z_\alpha)\sigma(z_\beta)}_{K^{(\ell)}},  &\quad \ell \geq 1\\
C_b + \frac{C_W}{n_0}x_\alpha \cdot x_\beta,&\quad \ell = 0
\end{cases},
    \end{equation}
If $C_b,C_W$ are chosen by ``tuning to criticality'' (e.g. $C_b=0, C_W=2$ for ReLU or $C_b=0,C_W=1$ for $\tanh$) in the sense that 
\begin{align*}
&\exists K_*\geq 0\quad \text{s.t.}\quad K_* = C_b+C_W\bk{\sigma^2}_{K_*}\\
&\frac{\partial K_{\alpha\alpha}^{(\ell+1)}}{\partial K_{\alpha\alpha}^{(\ell)}}\bigg|_{K_{\alpha\alpha}^{(\ell)}=K_*} = \chi_{||;\alpha}^{(\ell)}=\frac{C_W}{2}\bk{\partial^2 \sigma^2}_{K_*}=1\\
&\frac{\partial K_{\alpha\beta}^{(\ell+1)}}{\partial K_{\alpha\beta}^{(\ell)}}\bigg|_{K_{\alpha\alpha}^{(\ell)}= {K_{\beta\beta}^{(\ell)} = {K_{\alpha\beta}^{(\ell)} =K_*}}} =\chi_{\perp}^{(\ell)}=C_W\bk{(\sigma')^2}_{K_*} =1,
\end{align*}
then 
    \[
    K_{\alpha\alpha}^{(\ell)} \simeq \ell^{-\delta_1},\qquad \delta_1\in [0,1]
    \]
    and 
    \begin{equation}\label{E:corr-prop}
        \mathrm{Corr}_{\alpha\beta}^{(\ell)}:=\frac{K_{\alpha\beta}^{(\ell)}}{\lr{K_{\alpha\alpha}^{(\ell)}K_{\beta\beta}^{(\ell)}}^{1/2}} \simeq 1- C_\sigma \ell^{-\delta_2},\qquad \delta_2\in [1,2].
    \end{equation}

    \item \textbf{Equivalence to Linear Model in Small LR Optimization with MSE.} If $\theta =\set{W^{(\ell)},b^{(\ell)}} $ is initialized to be $\theta_0$ as in \eqref{E:init} and is optimized by gradient flow (or GD with learning rate like $n^{-1/2}$) on empirical mean squared error over a fixed dataset, then as $n_1,\ldots, n_L\gives \infty$ optimization is equivalent to first linearizing
    \[
    z_\alpha^{(L+1)}(\theta)\quad \mapsto \quad z_\alpha^{(L+1)}(\theta_0) + \nabla_\theta z_\alpha^{(L+1)}(\theta_0)\lr{\theta - \theta_0}
    \]
    and then performing gradient flow on the same loss. The corresponding (neural tangent) kernel
    \[
    \Theta_{\alpha\beta}^{(L+1)}:= \nabla_\theta z_\alpha^{(L+1)}(\theta_0)^T \nabla_\theta z_\beta^{(L+1)}(\theta_0) \in \R^{n_{L+1}\times n_{L+1}}
    \]
    satisfies a recursion similar to \eqref{E:K-rec}.
\end{itemize}
\end{theorem}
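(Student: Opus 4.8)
Both parts are proved by induction on the depth $\ell$, in each case starting from the single-hidden-layer statements already established in Lectures 1--3 and propagating them layer by layer. For part (i) the induction hypothesis at level $\ell$ is that, as $n_1,\ldots,n_\ell\gives\infty$, the field $x\mapsto z^{(\ell)}(x)$ converges weakly to a centered Gaussian field whose covariance factorizes across neurons as $\delta_{ij}K^{(\ell)}_{\alpha\beta}$, with $K^{(\ell)}$ given by \eqref{E:K-rec}. The base case $\ell=1$ is exact: $z^{(1)}_{i;\alpha}=b_i^{(1)}+\sum_j W_{ij}^{(1)}x_{j;\alpha}$ is a finite linear combination of i.i.d.\ Gaussians, hence jointly Gaussian with covariance $\delta_{ij}(C_b+\tfrac{C_W}{n_0}x_\alpha\cdot x_\beta)=\delta_{ij}K^{(1)}_{\alpha\beta}$. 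For the inductive step I would condition on $z^{(\ell)}$: for any finite collection of inputs $\{x_\alpha\}$ and neuron indices, $z^{(\ell+1)}_{i;\alpha}=b_i^{(\ell+1)}+\sum_{j=1}^{n_\ell}W_{ij}^{(\ell+1)}\sigma(z^{(\ell)}_{j;\alpha})$ is a sum of $n_\ell$ i.i.d.\ terms; by the induction hypothesis the $z^{(\ell)}_j$ are asymptotically i.i.d.\ across $j$ with law $\mN(0,K^{(\ell)})$, so the conditional covariance $\delta_{ij}\big(C_b+\tfrac{C_W}{n_\ell}\sum_k\sigma(z^{(\ell)}_{k;\alpha})\sigma(z^{(\ell)}_{k;\beta})\big)$ concentrates by the law of large numbers to $\delta_{ij}K^{(\ell+1)}_{\alpha\beta}$, and the multivariate CLT gives joint Gaussianity of the limit, with independence across neurons since the off-diagonal conditional covariance vanishes. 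Iterating up to $\ell=L+1$ gives the claim in the sequential limit; the simultaneous limit $n_1,\ldots,n_L\simeq n\gives\infty$ follows by the same scheme after making the concentration and CLT estimates quantitative and uniform, as in the cited references.

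\textbf{Large-$\ell$ asymptotics at criticality.} With \eqref{E:K-rec} in hand the large-$\ell$ behavior becomes a question about a discrete dynamical system. For the diagonal, $K^{(\ell+1)}_{\alpha\alpha}=C_b+C_W\bk{\sigma^2}_{K^{(\ell)}_{\alpha\alpha}}$ is a one-dimensional map, and Gaussian integration by parts ($\tfrac{d}{dK}\bk{f(z)}_K=\tfrac12\bk{f''(z)}_K$ for $z\sim\mN(0,K)$) gives its linearization at a fixed point $K_*$ as $\chi_{||}=\tfrac{C_W}{2}\bk{(\sigma^2)''}_{K_*}$. Tuning to criticality sets $\chi_{||}=1$, so $K_*$ is a \emph{marginal} fixed point and the convergence $K^{(\ell)}_{\alpha\alpha}\gives K_*$ is algebraic rather than exponential; expanding the map one order further (the analogue of $x_{\ell+1}=x_\ell-ax_\ell^2+\cdots$, whose orbits decay like $1/\ell$) extracts the exponent $K^{(\ell)}_{\alpha\alpha}\simeq \ell^{-\delta_1}$ with $\delta_1\in[0,1]$, where $\delta_1=0$ precisely when the diagonal is preserved along the recursion (e.g.\ ReLU with $C_W=2,C_b=0$, where $\bk{\sigma^2}_K=K/2$). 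For the correlation, differentiating the off-diagonal recursion and using the two-variable identity $\tfrac{\partial}{\partial K_{\alpha\beta}}\bk{\sigma(z_\alpha)\sigma(z_\beta)}_K=\bk{\sigma'(z_\alpha)\sigma'(z_\beta)}_K$ gives the linearization $\chi_\perp=C_W\bk{(\sigma')^2}_{K_*}$ about $\mathrm{Corr}^*=1$; criticality sets $\chi_\perp=1$, and the same marginal-fixed-point expansion yields $1-\mathrm{Corr}^{(\ell)}_{\alpha\beta}\simeq C_\sigma\ell^{-\delta_2}$ with $\delta_2\in[1,2]$.

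\textbf{The NTK equivalence (part (ii)).} Here I would invoke Result~\ref{res.lec2-ntkconvergence} and Result~\ref{res:linear_conv} and recall their proof ideas. Splitting the parameters into last-layer and earlier-layer groups and applying the law of large numbers over the hidden index together with the GP law of $z^{(\ell)}$ from part (i) shows $\Theta^{(L+1)}_0$ converges to $\delta_{ij}\Theta^{(L+1)}_0$ deterministically, with $\Theta^{(\ell+1)}_{\alpha\beta}=K^{(\ell+1)}_{\alpha\beta}+C_W\bk{\sigma'(z_\alpha)\sigma'(z_\beta)}_{K^{(\ell)}}\Theta^{(\ell)}_{\alpha\beta}$ and $\Theta^{(1)}_{\alpha\beta}=K^{(1)}_{\alpha\beta}$ --- a recursion of the same shape as \eqref{E:K-rec}, as claimed. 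Constancy of $\Theta_t$ under gradient flow on MSE then follows from the hierarchy \eqref{ODE_hierarchy}: $\tfrac{d}{dt}\Theta_t$ is driven by $\mathbb{O}_{3,t}$, which is $\mathcal{O}(1/\sqrt n)$ (and $\mathcal{O}(1/n)$ in expectation) at initialization, and whose own evolution is driven by the further-suppressed $\mathbb{O}_{4,t}$; since along the near-linear dynamics the residuals decay, the accumulated change stays suppressed. With $\Theta_t\equiv\Theta_0$ the function-space ODE is the linear system $\tfrac{d}{dt}z^{(L+1)}_\alpha=-\sum_\beta (z^{(L+1)}_\beta-y_\beta)\Theta^{(L+1)}_{0,\alpha\beta}$, which is exactly the gradient-flow dynamics of the linearized model $z_\alpha^{(L+1)}(\theta_0)+\nabla_\theta z_\alpha^{(L+1)}(\theta_0)(\theta-\theta_0)$ whose constant tangent kernel is $\nabla_\theta z^{(L+1)}(\theta_0)^T\nabla_\theta z^{(L+1)}(\theta_0)=\Theta^{(L+1)}_0$; matching initial conditions gives the stated equivalence.

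\textbf{Main obstacle.} The delicate point is the \emph{uniform-in-time} control needed for part (ii): one must show the higher-order variables $\mathbb{O}_{s\geq3}$ remain $\mathcal{O}(1/n)$-suppressed for all $t\geq0$, not merely at $t=0$, so that $\Theta_t$ genuinely freezes. This requires a bootstrap --- assume $\|\Theta_t-\Theta_0\|$ is small on $[0,T]$, deduce from $\Theta_0\succ0$ that the residuals decay exponentially and hence that the cumulative change in $\mathbb{O}_3$, and therefore in $\Theta_t$, is $\mathcal{O}(1/\sqrt n)$ on $[0,T]$ with a bound independent of $T$, then close the loop and let $T\gives\infty$; this is essentially the content of the cited works. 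A secondary and more routine technical point is upgrading part (i) from the sequential to the simultaneous limit, which needs quantitative concentration of the empirical kernels $\tfrac1n\sum_k\sigma(z^{(\ell)}_k)\sigma(z^{(\ell)}_k)$ propagated through all $L$ layers.
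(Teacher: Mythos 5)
Your proposal is correct and follows essentially the same route the paper takes: Theorem~\ref{T:yasaman} is presented there as a summary of Lectures 1--3, and your reconstruction — conditional CLT/law-of-large-numbers induction through the layers for the GP at initialization with the kernel recursion, the criticality conditions obtained by linearizing the diagonal and off-diagonal kernel maps (Gaussian integration by parts giving $\chi_{||}$ and Price's identity giving $\chi_{\perp}$) with algebraic decay at the marginal fixed point, and the NTK recursion via the last-layer/earlier-layer split plus constancy from the suppressed $\mathbb{O}_{3},\mathbb{O}_{4}$ hierarchy and the resulting equivalence to the linearized model — is exactly the chain of arguments given in Lectures 1--3 (with the fully rigorous versions deferred, as in the paper, to the cited references). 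Your closing remarks on the uniform-in-time bootstrap and the sequential-versus-simultaneous limit correctly identify the points the lectures leave to those references.
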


\subsection{Formalizing Inter-Neuron Correlations and Non-Gaussian Fluctuations}\label{S:results}
To formulate our main result for this lecture define the normalized connected 4 point function:
\[
\kappa_{4;\alpha}^{(\ell)}  =\frac{1}{3} \kappa\lr{z_{i;\alpha}^{(\ell)},z_{i;\alpha}^{(\ell)},z_{i;\alpha}^{(\ell)},z_{i;\alpha}^{(\ell)}}=\frac{1}{3}\lr{\E{\lr{z_{i;\alpha}^{(\ell)}}^4}- 3 \E{\lr{z_{i;\alpha}^{(\ell)}}^2}^2}.
\]
Note that $\kappa_{4;\alpha}^{(\ell)}$ captures both fluctuations 
\[
\Var\left[\lr{z_{i;\alpha}^{(\ell)}}^2\right] = 3\kappa_{4;\alpha}^{(\ell)} + 2\E{\lr{z_{i;\alpha}^{(\ell)}}^2}^2
\]
and non-Gaussianity (in the sense that if $z_{i;\alpha}^{(\ell)}$ is Gaussian, then $\kappa_{4;\alpha}^{(\ell)}=0$). \\

\noindent \textbf{Exercise.} Show that
\[
\kappa_{4;\alpha}^{(\ell)}:=\Cov\lr{\lr{z_{i;\alpha}^{(\ell)}}^2, \lr{z_{j;\alpha}^{(\ell)}}^2},
\]
allowing us to interpret $\kappa_{4;\alpha}^{(\ell)}$ as a measure of inter-neuron correlations.\\

Since as $n_1,\cdots, n_L\gives\infty$, neurons are independent and Gaussian, we have that
\[
\lim_{n_1,\ldots, n_{\ell-1}\gives \infty} \kappa_{4;\alpha}^{(\ell)} = 0.
\]
Our main purpose in this lecture is obtain the following characterization of $\kappa_{4;\alpha}^{(\ell)}$. 

\begin{theorem}\label{T:main-L4}
Fix $L, n_0, n_{L+1},\sigma.$ Suppose that the weights and biases are chosen as in \eqref{E:init} and that 
\[
n_1,\ldots, n_L \simeq n \gg 1.
\]
The four point function is of order $O(n^{-1})$ and satisfies the following recursion:
\[
\kappa_4^{(\ell+1)}=\frac{C_W^2}{n_\ell} \Var_{K^{(\ell)}}\left[\sigma^2\right]+\lr{\chi_{||;\alpha}^{(\ell)}}^2\kappa_4^{(\ell)} + O(n^{-2}).
\]
Thus, at criticality and uniform width ($n_\ell=n$), we have
\[
\frac{\kappa_{4;\alpha}^{(L+1)}}{\lr{K_{\alpha\alpha}^{(L+1)}}^2} = C_\sigma \frac{L}{n} + O_{L,\sigma}(n^{-2}).
\]
Moreover, for any fix $m\geq 1$ and any ``reasonable’’ function $f:\R^m\gives \R$ we may write
\begin{align*}
\E{f\lr{z_{i;\alpha}^{(\ell)}, \, i =1,\ldots, m}} &= \bk{ f\lr{z_{i;\alpha}, \, i =1,\ldots, m}}_{G^{(\ell)}}\\
&+\frac{\kappa_4^{(\ell+1)}}{8}\bk{\bigg(\sum_{j=1}^m \partial_{z_{j;\alpha}}^4 + \sum_{\substack{j_1,j_2=1\\ j_1\neq j_2}}^{m} \partial_{z_{j_1;\alpha}}^2\partial_{z_{j_2;\alpha}}^2\bigg) f\lr{z_{i;\alpha}, \, i =1,\ldots, m}}_{K^{(\ell)}}\\
&+O(n^{-2}).
\end{align*}
Here, $G^{(\ell)}$ is the dressed two point function
\[
G_{\alpha\beta}^{(\ell)}=\E{z_{i;\alpha}^{(\ell)}z_{i;\beta}^{(\ell)}} = K_{\alpha\beta}^{(\ell)}+O(n^{-1}).
\]
\end{theorem}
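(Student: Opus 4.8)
The plan is to run an induction on the layer index $\ell$ that establishes the recursion for $\kappa_{4;\alpha}^{(\ell)}$ and the dressed effective-action formula \emph{at once}, since the two feed one another. The engine throughout is that, conditioned on the layer-$\ell$ preactivations $z^{(\ell)}$, the neurons $z_{i;\alpha}^{(\ell+1)}$ ($i=1,\ldots,m$) are i.i.d.\ centered Gaussian vectors in the sample index, all with the \emph{same} random covariance
\[
\widehat K_{\alpha\beta}^{(\ell+1)} = C_b + \frac{C_W}{n_\ell}\sum_{j=1}^{n_\ell}\sigma\!\big(z_{j;\alpha}^{(\ell)}\big)\sigma\!\big(z_{j;\beta}^{(\ell)}\big),
\qquad \text{so}\qquad \E{f(z^{(\ell+1)})}=\E{\langle f\rangle_{\widehat K^{(\ell+1)}}},
\]
the outer average being over the layer-$\ell$ field. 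Conditional Gaussianity is \emph{exact}, so $\E{(z_{i;\alpha}^{(\ell+1)})^4}=3\E{(\widehat K_{\alpha\alpha}^{(\ell+1)})^2}$ and $\E{(z_{i;\alpha}^{(\ell+1)})^2}=\E{\widehat K_{\alpha\alpha}^{(\ell+1)}}$, and the factor $\tfrac13$ in the definition of $\kappa_4$ is precisely what collapses $\tfrac13\big(\E{z^4}-3\E{z^2}^2\big)$ to a variance: $\kappa_{4;\alpha}^{(\ell+1)}=\Var\big(\widehat K_{\alpha\alpha}^{(\ell+1)}\big)$ exactly (the same two lines give the inter-neuron covariance reading of the Exercise). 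So the first assertion reduces to computing the variance of the stochastic metric.

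Writing $\Var\big(\widehat K_{\alpha\alpha}^{(\ell+1)}\big)=\tfrac{C_W^2}{n_\ell^2}\sum_{j,k=1}^{n_\ell}\Cov\!\big(\sigma(z_{j;\alpha}^{(\ell)})^2,\sigma(z_{k;\alpha}^{(\ell)})^2\big)$, I would split off the $j=k$ terms: by the induction hypothesis $z_{j;\alpha}^{(\ell)}\sim\mathcal N(0,K_{\alpha\alpha}^{(\ell)})$ up to $O(n^{-1})$, so these $n_\ell$ terms sum to $\tfrac{C_W^2}{n_\ell}\Var_{K^{(\ell)}}[\sigma^2]+O(n^{-2})$. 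Each of the $n_\ell(n_\ell-1)$ off-diagonal terms is itself $O(n^{-1})$ (distinct neurons decorrelate): applying the layer-$\ell$ effective-action formula with $m=2$ to $f=\sigma(z_1)^2\sigma(z_2)^2$, the $\partial_{z_1}^4$ and $\partial_{z_2}^4$ pieces cancel against the one-neuron marginal shift, leaving only the $\partial_{z_1}^2\partial_{z_2}^2$ piece, whence $\Cov\!\big(\sigma(z_{j;\alpha}^{(\ell)})^2,\sigma(z_{k;\alpha}^{(\ell)})^2\big)=\tfrac14\kappa_{4;\alpha}^{(\ell)}\bk{\partial^2\sigma^2}_{K^{(\ell)}}^2+O(n^{-2})=\tfrac{1}{C_W^2}\big(\chi_{||;\alpha}^{(\ell)}\big)^2\kappa_{4;\alpha}^{(\ell)}+O(n^{-2})$, using $\chi_{||;\alpha}^{(\ell)}=\tfrac{C_W}{2}\bk{\partial^2\sigma^2}_{K^{(\ell)}}$ from Theorem~\ref{T:yasaman}. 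The prefactor $C_W^2 n_\ell(n_\ell-1)/n_\ell^2$ is $C_W^2$ up to an $O(n^{-1})$ factor absorbed into the error (as $\kappa_{4;\alpha}^{(\ell)}=O(n^{-1})$). Adding the two contributions gives the stated recursion, and telescoping at criticality — where $\chi_{||;\alpha}^{(\ell)}=1$, and $K_{\alpha\alpha}^{(\ell)}\equiv K_*$ is a genuine fixed point for classes such as ReLU (for $\tanh$-type one carries the slow decay of $K_{\alpha\alpha}^{(\ell)}$ through the sum) — yields $\kappa_{4;\alpha}^{(L+1)}=\tfrac{C_W^2 L}{n}\Var_{K_*}[\sigma^2]+O(n^{-2})$, i.e.\ $\kappa_{4;\alpha}^{(L+1)}/(K_{\alpha\alpha}^{(L+1)})^2=C_\sigma L/n+O(n^{-2})$ with $C_\sigma=C_W^2\Var_{K_*}[\sigma^2]/K_*^2$.

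For the effective-action formula I would also induct on $\ell$. The base case is the first hidden layer, where $z^{(1)}$ is literally a product of Gaussians, so $\kappa_4^{(1)}=0$, $G^{(1)}=K^{(1)}$, and there is nothing to prove. For the step, write $\widehat K^{(\ell+1)}=G^{(\ell+1)}+\Delta^{(\ell+1)}$ with $G^{(\ell+1)}:=\E{\widehat K^{(\ell+1)}}$ (which equals $K^{(\ell+1)}+O(n^{-1})$ on applying the layer-$\ell$ formula to $f=\sigma(z_\alpha)\sigma(z_\beta)$) and $\E{\Delta^{(\ell+1)}}=0$. Taylor-expand $\langle f\rangle_{\widehat K^{(\ell+1)}}$ in $\Delta^{(\ell+1)}$ about $G^{(\ell+1)}$ and average: the first-order term drops, and the heat-equation identity for Gaussian expectations, $\partial_{\widehat K_{\alpha\alpha}}\langle f\rangle=\tfrac12\sum_{i=1}^m\langle\partial_{z_{i;\alpha}}^2 f\rangle$ — valid because all $m$ neurons share the single scalar $\widehat K_{\alpha\alpha}$ — applied twice turns the second-order term into $\tfrac12\Var\big(\widehat K_{\alpha\alpha}^{(\ell+1)}\big)\cdot\tfrac14\big(\sum_i\partial_{z_{i;\alpha}}^4+\sum_{i\neq i'}\partial_{z_{i;\alpha}}^2\partial_{z_{i';\alpha}}^2\big)\langle f\rangle$. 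Since $\Var\big(\widehat K_{\alpha\alpha}^{(\ell+1)}\big)=\kappa_{4;\alpha}^{(\ell+1)}$, this is exactly the $\tfrac18\kappa_4$-correction claimed (the measure may be taken as $K^{(\ell+1)}$ since the term is already $O(n^{-1})$ and $G^{(\ell+1)}=K^{(\ell+1)}+O(n^{-1})$); the remaining Taylor terms are $O(n^{-2})$ because the third and higher moments of $\Delta^{(\ell+1)}$ are $O(n^{-2})$. Carrying the input indices through the same steps (there are then also second-derivative terms in $\widehat K_{\alpha\beta}$, suppressed by the same variance) produces the general $m$-neuron version in the statement.

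The hard part is that the two claims are \emph{coupled}: evaluating the off-diagonal neuron sum in the $\kappa_4$ recursion requires the effective-action formula one layer down, while that formula one layer up requires $\kappa_4$ at the new layer — so the induction must be run on both statements together, and within each step one must keep careful track of which $O(n^{-1})$ terms survive, in particular the cancellation of the $\partial_{z_j}^4$ pieces against the one-neuron shift in the inter-neuron covariance (without it one gets the wrong recursion coefficient). A secondary point, but for a genuinely rigorous proof the substantial one, is promoting the power-counting ``$O(n^{-2})$'' to uniform estimates — which is what the ``reasonable function'' hypothesis encodes: it asks for moment control on $\widehat K^{(\ell)}$ and Gaussian-integrability of the relevant derivatives of $\sigma$ and $f$, so that the Taylor remainders and the replacement $G^{(\ell)}\to K^{(\ell)}$ inside the correction term are legitimate.
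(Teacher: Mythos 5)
Your proposal is correct and follows essentially the same route as the paper: conditional Gaussianity of layer $\ell+1$ given layer $\ell$ reduces everything to the random covariance $\Sigma_\alpha^{(\ell)}$ with $\kappa_{4;\alpha}^{(\ell+1)}=\Var\big[\Sigma_\alpha^{(\ell)}\big]$, followed by the mean-plus-fluctuation split, the $1/n$ perturbative expansion, the diagonal/off-diagonal neuron decomposition (your cancellation of the $\partial^4$ pieces against the one-neuron marginal shift is exactly the paper's use of $0=\E{X_{i;\alpha}^{(\ell)}}=\bk{X_{i;\alpha}^{(\ell)}}_{G_\alpha^{(\ell)}}+O(n^{-2})$), and telescoping at criticality. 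The only difference is cosmetic: you obtain the effective-action correction by a direct Taylor expansion in $\Delta$ together with the Gaussian heat-equation identity, whereas the paper works through the characteristic function where $\norm{\xi}^4$ plays the role of $\big(\sum_j\partial_{z_{j;\alpha}}^2\big)^2$; both hinge on the same moment bounds for centered collective observables, which neither you nor the lecture notes prove in full.
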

This Theorem is originally derived in a physics way in the breakthrough paper of Yaida \cite{yaida2019non}. It was then rederived, again at a physics level of rigor in Chapter $4$ of \cite{roberts2021principles}. Finally, it was derived in a somewhat different, and more mathematical, way in \cite{hanin2022correlation}.

\subsection{Proof of Theorem \ref{T:main-L4}}

\subsubsection{A Bit of Background}
To study a general non-Gaussian random vector $z=(z_1,\ldots, z_m)$, we will understand its characteristic function
\[
\widehat{p}_z(\xi):=\E{e^{-iz\cdot \xi}} = \int_{\R^m} e^{-i\sum_{j=1}^m z_j\xi_j} p(z)dz.
\]
Its utility is:
\begin{enumerate}
    \item For any reasonable $f$ we can write the expectation of $f(z)$ using the characteristic function by taking a Fourier transform: 
    \[
    \E{f(z)} = \int_{\R^m} f(z)p(z)dz = \int_{\R^m} \widehat{f}(\xi)\widehat{p}_z(\xi)d\xi
    \]
    \item A Gaussian with $0$ variance $K$ has the simplest characteristic function:
    \[
    z\sim \mN(0,K)\quad \Rightarrow \quad \widehat{p}_z(\xi_1,\ldots, \xi_m) = \exp\left[-\frac{1}{2}\sum_{j_1,j_2=1}^m K_{j_1j_2}\xi_{j_1} \xi_{j_2}\right].
    \]
\item Multiplication of $\widehat{f}(\xi)$ by $\xi$ corresponds to differentiation:
\[
\xi_j\widehat{f}(\xi) = \widehat{-i\partial_j f}(\xi).
\]
\end{enumerate}
We will need the following 
\begin{proposition}\label{P:Gaussian-marginals}
Let $W = \lr{W_1,\ldots, W_n}\sim N(\mu, \Sigma)$. Then, for any independent (e.g. constant) matrix $A\in \R^{k\times n}$, we have
\[
AW\sim \mN(A\mu, A\Sigma A^T).
\]
\end{proposition}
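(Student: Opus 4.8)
The plan is to compute the characteristic function of $AW$ and recognize it as that of the claimed Gaussian, appealing to the fact that the characteristic function determines the law. This is the natural route here, since the preceding background subsection has just set up characteristic functions as our main tool and item (2) there records the characteristic function of a centered Gaussian. First I would record the characteristic function of $W$ itself: writing $W=\mu+W_0$ with $W_0\sim \mathcal N(0,\Sigma)$, the expectation factors, so
\[
\widehat p_W(\xi)=\E{e^{-i\mu\cdot\xi}e^{-iW_0\cdot\xi}}=e^{-i\mu\cdot\xi}\,\E{e^{-iW_0\cdot\xi}}=\exp\left[-i\mu\cdot\xi-\frac{1}{2}\xi^T\Sigma\xi\right],
\]
using item (2) of the background for the centered part.

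Next, for $\eta\in\R^k$ I would substitute $\xi=A^T\eta$: since $A$ is independent of $W$,
\[
\widehat p_{AW}(\eta)=\E{e^{-i(AW)\cdot\eta}}=\E{e^{-iW\cdot(A^T\eta)}}=\widehat p_W(A^T\eta)=\exp\left[-i(A\mu)\cdot\eta-\frac{1}{2}\,\eta^T(A\Sigma A^T)\eta\right],
\]
where I used $\mu\cdot(A^T\eta)=(A\mu)\cdot\eta$ and $(A^T\eta)^T\Sigma(A^T\eta)=\eta^T(A\Sigma A^T)\eta$. The matrix $A\Sigma A^T$ is symmetric and positive semidefinite (for any $\eta$, $\eta^T A\Sigma A^T\eta=(A^T\eta)^T\Sigma(A^T\eta)\ge 0$ since $\Sigma$ is positive semidefinite), so the right-hand side is exactly the characteristic function of $\mathcal N(A\mu,A\Sigma A^T)$. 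Uniqueness of the Fourier transform on probability measures then gives $AW\sim \mathcal N(A\mu,A\Sigma A^T)$.

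The computation is entirely routine; the only point deserving care is the possibly degenerate case, when $k$ exceeds $\mathrm{rank}(A\Sigma A^T)$ and $AW$ has no Lebesgue density. This is not a real obstacle: one either reads ``$\mathcal N(A\mu,A\Sigma A^T)$'' as the unique law with the stated characteristic function (which is all that is needed downstream), or, insisting on densities, restricts to the affine subspace spanned by the range of $A\Sigma A^T$, on which the marginal is nondegenerate. An equally short alternative avoids the issue entirely: by the Cramér--Wold device it suffices to check that every one-dimensional projection $v^T(AW)=(A^Tv)^T W$ is Gaussian, which holds because $W$ is Gaussian, with mean $v^T A\mu$ and variance $v^T A\Sigma A^T v$, identifying the law of $AW$ as the asserted multivariate Gaussian.
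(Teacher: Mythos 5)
Your argument is correct. Note, though, that the paper itself does not prove Proposition~\ref{P:Gaussian-marginals}: it is stated as standard background (the "Bit of Background" list) and then invoked in Lemma~\ref{L:cond-cov}, so there is no in-paper proof to compare against. Your characteristic-function computation is exactly the natural way to fill that gap given the machinery the paper has just set up (item (2) of the background gives the Gaussian characteristic function, and uniqueness of the Fourier transform closes the argument), and your remarks on the possibly degenerate covariance $A\Sigma A^T$ and the Cram\'er--Wold alternative are both sound. One small point of care: when $A$ is genuinely random and independent of $W$ (rather than constant), the identity $\E{e^{-iW\cdot(A^T\eta)}}=\widehat p_W(A^T\eta)$ should be read conditionally on $A$, and the conclusion is that $AW$ is Gaussian \emph{conditionally on} $A$; this is precisely the form in which the paper uses the proposition in Lemma~\ref{L:cond-cov}, where one conditions on $z_\alpha^{(\ell)}$ before asserting Gaussianity of the next layer.
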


\subsubsection{First Step: Reduce to Collective Observables}
\textbf{Definition.} For any $f:\R^m\gives \R$ we will say that 
\[
\mO_f^{(\ell)} = \frac{1}{n_\ell}\sum_{j=1}^{n_\ell} f(z_{j;\alpha}^{(\ell)},\,\alpha=1,\ldots,m)
\]
is a \textit{collective observable}.\\

\noindent Collective observables play a crucial role in our analysis. Indeed, our first step is to rewrite all the quantities in Theorem \ref{T:main-L4} in terms of such objects. Let us fix any $m\geq 1$. We have
\[
\E{f(z_{i;\alpha}^{(\ell+1)},\, i=1,\ldots, m)} = \int_{\R^m} \widehat{f}(\xi) \E{\exp\left[-i\sum_{j=1}^m \xi_j z_{j;\alpha}^{(\ell+1)}\right]} d\xi. 
\]
We begin by applying Proposition \ref{P:Gaussian-marginals} to  simplify the characteristic function of $(z_{i;\alpha}^{(\ell+1)},\, i=1,\ldots, m)$. 

\begin{lemma}\label{L:cond-cov}

Conditional on $z_\alpha^{(\ell)}$, 
\[
\lr{z_{\alpha}^{(\ell+1)}}_{i=1}^{n_{\ell+1}} \text{ is a Gaussian with mean }0\text{ and covariance } \Sigma_{\alpha}^{(\ell)}\cdot \mathrm{I},
\]
where
\begin{align*}
\Sigma_{\alpha}^{(\ell)}=C_b + \frac{C_W}{n_\ell}\sum_{j=1}^{n_\ell}\sigma\lr{z_{j;\alpha}^{(\ell)}}^2
\end{align*}
is a collective observable. In particular, for each $\xi=\lr{\xi_1,\ldots, \xi_m}$, we have
\begin{align*}
\E{e^{-i\sum_{i=1}^{m} \xi_i z_{i;\alpha}^{(\ell+1)}}}
&=\E{e^{-\frac{1}{2}\norm{\xi}^2 \Sigma_{\alpha}^{(\ell)}}}
\end{align*}
and, moreover, 
\begin{align*}
    \kappa_{4}^{(\ell+1)} = \Var\left[\Sigma_\alpha^{(\ell)}\right].
\end{align*}
\end{lemma}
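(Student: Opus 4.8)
The plan is to establish all three assertions by conditioning on the layer-$\ell$ preactivations $z_\alpha^{(\ell)}$ and exploiting the fact that, given this conditioning, each $z_{i;\alpha}^{(\ell+1)}$ is an explicit linear image of the weights and biases of layer $\ell+1$, which by \eqref{E:init} are centered Gaussians, mutually independent, and independent of everything in layers $\leq \ell$.

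First I would prove the conditional Gaussianity. Fix a neuron index $i$ and group the parameters feeding into it as $w_i := (b_i^{(\ell+1)}, W_{i1}^{(\ell+1)},\ldots, W_{in_\ell}^{(\ell+1)})$, which by \eqref{E:init} is $\mN\lr{0,\,\mathrm{diag}(C_b, C_W/n_\ell,\ldots,C_W/n_\ell)}$ and independent of $z_\alpha^{(\ell)}$. Conditional on $z_\alpha^{(\ell)}$, the recursion \eqref{E:z-def} reads $z_{i;\alpha}^{(\ell+1)} = A_\alpha^{(\ell)} w_i$, where the row vector $A_\alpha^{(\ell)} := (1,\,\sigma(z_{1;\alpha}^{(\ell)}),\ldots,\sigma(z_{n_\ell;\alpha}^{(\ell)}))$ is a (conditional) constant. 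Proposition \ref{P:Gaussian-marginals} then gives that $z_{i;\alpha}^{(\ell+1)}$ is conditionally $\mN\lr{0,\,A_\alpha^{(\ell)}\,\mathrm{diag}(C_b,C_W/n_\ell,\ldots)\,\lr{A_\alpha^{(\ell)}}^T}$, and a direct evaluation of the quadratic form yields conditional variance $C_b + \frac{C_W}{n_\ell}\sum_{j=1}^{n_\ell}\sigma(z_{j;\alpha}^{(\ell)})^2 = \Sigma_\alpha^{(\ell)}$, which is manifestly a collective observable. Since the $w_i$ are mutually independent and independent of $z_\alpha^{(\ell)}$, the $z_{i;\alpha}^{(\ell+1)}$ are conditionally independent across $i$; hence $(z_{i;\alpha}^{(\ell+1)})_{i=1}^{n_{\ell+1}}$ is conditionally $\mN(0, \Sigma_\alpha^{(\ell)}\cdot \mathrm{I})$, and in particular so is any subvector $(z_{i;\alpha}^{(\ell+1)})_{i=1}^{m}$.

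The other two identities then follow from the tower property together with standard Gaussian moment formulas. For the characteristic function, conditioning on $z_\alpha^{(\ell)}$ and using that the characteristic function of $\mN(0,\Sigma_\alpha^{(\ell)}\mathrm{I}_m)$ at $\xi$ equals $\exp\big[-\tfrac12\norm{\xi}^2\,\Sigma_\alpha^{(\ell)}\big]$, one gets $\E{e^{-i\sum_{i=1}^m\xi_i z_{i;\alpha}^{(\ell+1)}}} = \E{\,\E{e^{-i\sum_i\xi_i z_{i;\alpha}^{(\ell+1)}}\,\big|\,z_\alpha^{(\ell)}}} = \E{e^{-\frac12\norm{\xi}^2\Sigma_\alpha^{(\ell)}}}$. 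For the normalized fourth cumulant, conditional Gaussianity gives $\E{(z_{i;\alpha}^{(\ell+1)})^2\,\big|\,z_\alpha^{(\ell)}} = \Sigma_\alpha^{(\ell)}$ and $\E{(z_{i;\alpha}^{(\ell+1)})^4\,\big|\,z_\alpha^{(\ell)}} = 3\lr{\Sigma_\alpha^{(\ell)}}^2$; substituting $\E{(z_{i;\alpha}^{(\ell+1)})^2} = \E{\Sigma_\alpha^{(\ell)}}$ and $\E{(z_{i;\alpha}^{(\ell+1)})^4} = 3\,\E{\lr{\Sigma_\alpha^{(\ell)}}^2}$ into $\kappa_4^{(\ell+1)} = \tfrac13\big(\E{(z_{i;\alpha}^{(\ell+1)})^4} - 3\,\E{(z_{i;\alpha}^{(\ell+1)})^2}^2\big)$ gives $\kappa_4^{(\ell+1)} = \E{\lr{\Sigma_\alpha^{(\ell)}}^2} - \E{\Sigma_\alpha^{(\ell)}}^2 = \Var\!\left[\Sigma_\alpha^{(\ell)}\right]$.

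There is no real obstacle in this lemma: it is pure bookkeeping once the conditioning is in place, the only mild subtlety being to invoke conditional independence across neurons correctly when passing from a single neuron to the vector, and to recognize the relevant subvector of the full layer as itself conditionally isotropic Gaussian. The genuinely nontrivial work — turning $\Var[\Sigma_\alpha^{(\ell)}]$ into the stated recursion for $\kappa_4^{(\ell+1)}$ and extracting the $O(n^{-1})$ scaling via the near-Gaussian statistics of $z^{(\ell)}$ — is what the subsequent steps of the proof of Theorem \ref{T:main-L4} must address.
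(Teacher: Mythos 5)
Your proposal is correct and follows essentially the same route as the paper: condition on $z_\alpha^{(\ell)}$, view $z_{i;\alpha}^{(\ell+1)}$ as a linear image of the independent Gaussian layer-$(\ell+1)$ parameters (the paper writes $z_{i;\alpha}^{(\ell+1)} = (\sigma(z_\alpha^{(\ell)})~1)(W^{(\ell+1)}~b^{(\ell+1)})^T$ and likewise invokes Proposition \ref{P:Gaussian-marginals}), read off the conditional variance $\Sigma_\alpha^{(\ell)}$ and conditional independence across $i$, and obtain the characteristic-function identity by the tower property. Your explicit derivation of $\kappa_4^{(\ell+1)}=\Var[\Sigma_\alpha^{(\ell)}]$ from the conditional Gaussian moments $\E{(z_{i;\alpha}^{(\ell+1)})^2\mid z_\alpha^{(\ell)}}=\Sigma_\alpha^{(\ell)}$ and $\E{(z_{i;\alpha}^{(\ell+1)})^4\mid z_\alpha^{(\ell)}}=3(\Sigma_\alpha^{(\ell)})^2$ is the intended (and in the paper implicit) argument, so your write-up is if anything slightly more complete.
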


\begin{proof}
We have
\[
z_{i;\alpha}^{(\ell+1)} = (\sigma(z_\alpha^{(\ell)})~ 1)(W^{(\ell+1)}~ b^{(\ell+1)})^T.
\]
Thus, if we are given $\sigma(z_\alpha^{(\ell}) $, then $\lr{z_{i;\alpha}^{(\ell+1)}}_{i=1}^{n_{\ell+1}}$ are iid Gaussian with mean $0$ and
\begin{align*}
\Sigma_{\alpha}^{(\ell)} &= \E{\lr{z_{i;\alpha}^{(\ell+1)}}^2}=C_b + \frac{C_W}{n_\ell}\sum_{j=1}^{n_\ell}\lr{\sigma\lr{z_{j;\alpha}^{(\ell)}}}^2.
\end{align*}
In particular, 
\begin{align*}
\E{e^{-i\sum_{j=1}^{m} \xi_j z_{j;\alpha}^{(\ell+1)}}} &= \E{\E{e^{-i\sum_{j=1}^{m} \xi_j z_{ij\alpha}^{(\ell+1)}} ~\bigg|~z_{\alpha}^{(\ell)}} }=\E{e^{-\frac{1}{2}\norm{\xi}^2 \Sigma_{\alpha}^{(\ell)}}}.
\end{align*}
\end{proof}

\noindent We have therefore found that for any reasonable $f$,
\[
\E{f(z_{i;\alpha}^{(\ell+1)},\,i=1,\ldots, m)}=\int_{\R^{m}} \widehat{f}\lr{\xi} \E{e^{-\frac{1}{2}\norm{\xi}^2 \Sigma_{\alpha}^{(\ell)}}}d\xi. 
\]

\subsection{Step 2: Decompose the Self-Averaging Observable $\Sigma_{\alpha}^{(\ell)}$ into a Mean and Fluctuation}
Since $\Sigma_{\alpha}^{(\ell)}$ is a collective observable, it makes sense to consider
\[
G_\alpha^{(\ell)}:=\E{\Sigma_\alpha^{(\ell)}},\qquad  \Delta_{\alpha}^{(\ell)}:=\Sigma_\alpha^{(\ell)}-\E{\Sigma_\alpha^{(\ell)}}.
\]
The scalar $G_\alpha^{(\ell)}$ is sometimes referred to as a dressed two point function. \\

\noindent \textbf{Exercise.} Show for any observables of the form
\[
\mO_f^{(\ell)} = \frac{1}{n_\ell}\sum_{j=1}^{n_\ell} f(z_{j;\alpha}^{(\ell)})
\]
that 
\[
\E{\prod_{j=1}^q \lr{\mO_{f_j}^{(\ell)} - \E{\mO_{f_j}^{(\ell)}}}} = O_{q,f_j,\ell, \sigma}\lr{n^{-\lceil \frac{q}{2}\rceil }}.
\]
Hint: do this in several steps:
\begin{enumerate}
\item[(a)] First check this when $\ell = 1$. This is easy because neurons are independent in the first layer. 
\item[(b)] Next assume that $\sigma$ is a polynomial and show that if you already know that the result holds at layer $\ell$, then it must also hold at layer $\ell+1$. This is not too bad but requires some book-keeping.
\item[(c)] Show that the full problem reduces to the case of polynomial activations. This is somewhat tricky. 
\end{enumerate}

\subsection{Step 3: Expand in Powers of Centered Collective Observables}
We have
\begin{align*}
\E{f(z_{i;\alpha}^{(\ell)},\,i=1,\ldots, m)}&=\int_{\R^{n_{m}}} \widehat{f}\lr{\xi} \E{e^{-\frac{1}{2}\norm{\xi}^2 \Sigma_{\alpha}^{(\ell)}}}d\xi\\
&= \int_{\R^{n_{m}}} \widehat{f}\lr{\xi} e^{-\frac{1}{2}\norm{\xi}^2 G_\alpha^{(\ell)}} \E{e^{-\frac{1}{2}\norm{\xi}^2 \Delta_{\alpha}^{(\ell)}}} d\xi.    
\end{align*}
Applying the exercise above we may actually Taylor expand to find a power series expansion in $1/n$: 
\begin{align*}
\E{e^{-\frac{1}{2}\norm{\xi}^2 \Delta_{\alpha}^{(\ell)}}} &=\sum_{q\geq 0} \frac{(-1)^q}{2^qq!} \norm{\xi}^{2q} \E{\lr{\Delta_{\alpha}^{(\ell)}}^q}= 1 +\frac{1}{8}\norm{\xi}^4 \E{\lr{\Delta_{\alpha}^{(\ell)}}^2} + O(n^{-2}).
\end{align*}
Putting this all together yields 
\[
\E{f(z_{i;\alpha}^{(\ell)},\,i=1,\ldots, m)}=\int_{\R^{n_{m}}}\lr{ 1 +\frac{1}{8}\norm{\xi}^4 \E{\lr{\Delta_{\alpha}^{(\ell)}}^2}}\widehat{f}\lr{\xi} e^{-\frac{1}{2}\norm{\xi}^2 G_\alpha^{(\ell)}} d\xi + O(n^{-2}).
\]
In particular, we obtain 
\[
\E{f(z_{i;\alpha}^{(\ell)},\,i=1,\ldots, m)}=\bk{f}_{G_\alpha^{(\ell)}} + \frac{1}{8} \E{\lr{\Delta_{\alpha}^{(\ell)}}^2} \bk{\lr{\sum_{j=1}^m \partial_{z_{j;\alpha}^{(\ell)}}^2}^2 f}_{G_{\alpha}^{(\ell)}} + O(n^{-2}).
\]
\noindent \textbf{Exercise.} Show that
\[
\bk{f}_{G_\alpha^{(\ell)}}=\bk{f}_{K_{\alpha\alpha}^{(\ell)}} + O(n^{-1}).
\]
Hint: define
\[
S_\alpha^{(\ell)}:=G_\alpha^{(\ell)} - K_{\alpha\alpha}^{(\ell)}.
\]
 We already know that $\E{\lr{\Delta_\alpha^{(\ell)}}^2}=O(n^{-1})$. Now obtain a recursion for $S_\alpha^{(\ell)}$ using the perturbative expansion above and check that the solution is of order $O(n^{-1})$.

\subsection{Step 4: Relating $k_{4;\alpha}^{(\ell+1)} $ to the Dressed 2 Point Function and Obtaining Its Recursion}
Recall that Lemma \ref{L:cond-cov} we saw that
\[
k_{4;\alpha}^{(\ell+1)} = \E{\lr{\Delta_{\alpha}^{(\ell)}}^2}. 
\]
Moreover,
\begin{align*}
\E{\lr{\Delta_{\alpha}^{(\ell)}}^2} = \frac{1}{n_\ell}\E{\lr{X_{1;\alpha}^{(\ell)}}^2} + \lr{1-\frac{1}{n_\ell}}\E{X_{1;\alpha}^{(\ell)}X_{2;\alpha}^{(\ell)}},
\end{align*}
where
\[
X_{j;\alpha}^{(\ell)}:=C_W\lr{\sigma(z_{j;\alpha}^{(\ell)})^2 - \E{\sigma(z_{j;\alpha}^{(\ell)})^2 }}. 
\]
Applying the result of Step 3 (and a previous exercise) yields
\begin{align*}
\frac{1}{n_\ell}\E{\lr{X_{1;\alpha}^{(\ell)}}^2}  = \frac{1}{n_\ell} C_W^2\bk{\lr{\sigma^2-\bk{\sigma^2}_{K_{\alpha\alpha}^{(\ell)}}}^2}_{K_{\alpha\alpha}^{(\ell)}}+O(n^{-2}) = \frac{C_W^2}{n_\ell}\Var_{K^{(\ell)}}[\sigma^2]+O(n^{-2}).
\end{align*}
Finally, note that
\[
0 = \E{X_{i;\alpha}^{(\ell)}}= \bk{X_{i;\alpha}^{(\ell)}}_{G_{\alpha}^{(\ell)}}+O(n^{-2}). 
\]
Hence, 
\begin{align*}
    \E{X_{1;\alpha}^{(\ell)}X_{2;\alpha}^{(\ell)}}& = \kappa_{4;\alpha}^{(\ell)}\bk{\lr{\frac{1}{8}\sum_{j=1}^2 \partial_{z_{j;\alpha}}^4 +\frac{1}{4}\partial_{z_{1;\alpha}}^2\partial_{z_{2;\alpha}}^2}X_{1;\alpha}^{(\ell)}X_{2;\alpha}^{(\ell)}}_{K_{\alpha\alpha}^{(\ell)}}+O(n^{-2})\\
    &=\lr{\frac{C_W}{2}\bk{\partial^2\sigma^2}_{K_{\alpha\alpha}^{(\ell)}}}^2\kappa_{4;\alpha}^{(\ell)} + O(n^{-2})\\
    &=\lr{\chi_{||;\alpha}^{(\ell)}}^2 \kappa_{4;\alpha}^{(\ell)} + O(n^{-2}).
\end{align*}
\subsection{Step 5: Solving the $4$ point function recursion}
In this section, we solve the four point function recursion 
\[
\kappa_{4}^{(\ell+1)}=\frac{C_W^2}{n_\ell}\Var_{K^{(\ell)}}[\sigma^2]+ \lr{\chi_{||;\alpha}^{(\ell)}}^2 \kappa_{4;\alpha}^{(\ell)} + O(n^{-2})
\]
in the special case when 
\[
\sigma(t)=\mathrm{ReLU}(t)=t {\bf 1}_{t>0}.
\]
First of all, as Yasaman showed, we have
\begin{align*}
K_{\alpha\alpha}^{(\ell+1)}=C_b+C_W\bk{\sigma^2(z_\alpha)}_{K_{\alpha\alpha}^{(\ell)}} &= C_b+\frac{C_W}{2}K_{\alpha\alpha}^{(\ell)}\\
\chi_{||;\alpha}^{(\ell+1)}= \frac{\partial K_{\alpha\alpha}^{(\ell+1)}}{\partial K_{\alpha\alpha}^{(\ell)}} = \frac{C_W}{2}.
\end{align*}
So we are at criticality only if
\[
C_b=0,\qquad C_W=2.
\]
With this, we have
\[
\chi_{||;\alpha}^{(\ell)}\equiv 1,\qquad K_{\alpha\alpha}^{(\ell)} \equiv \frac{2}{n_0}\norm{x_\alpha}^2. 
\]
Thus, 
\begin{align*}
    \Var_{K^{(\ell)}}[\sigma^2] &= \bk{\sigma^4}_{K_{\alpha\alpha}^{(\ell)}}-\lr{\bk{\sigma^2}_{K_{\alpha\alpha}^{(\ell)}}}^2=\frac{3}{2}\lr{K_{\alpha\alpha}^{(\ell)}}^2 - \frac{1}{4}\lr{K_{\alpha\alpha}^{(\ell)}}^2=\frac{5}{4}\lr{K_{\alpha\alpha}^{(\ell)}}^2.
\end{align*}
So we find
\[
\frac{\kappa_{4}^{(\ell)}}{\lr{K_{\alpha\alpha}^{(\ell)}}^2}=\sum_{\ell'=1}^{\ell-1} \frac{5}{n_{\ell'}} + O(n^{-2}).
\]
\noindent \textbf{Exercise.} Redo this analysis for $\sigma(t)=\tanh(t)$ to find that if $n_\ell\equiv n$ we have
\[
\frac{\kappa_{4}^{(\ell)}}{\lr{K_{\alpha\alpha}^{(\ell)}}^2} = \frac{2\ell}{3n}\lr{1 + o_\ell(1)} +O_\ell(n^{-2}).
\]
Hint: you should start by deriving the asymptotics
\[
K_{\alpha\alpha}^{(\ell)} = \frac{1}{2\ell}+O(\log(\ell)/\ell^2),
\]
using this to compute the form of the coefficients in the recursion for $\kappa_4^{(\ell)}$, and then solve this recursion to leading order in $\ell$. 

\section{Lecture 5}\label{S:L5}

\subsection{Introduction}
As in the last lecture, let us fix $L\geq 1$, $n_0,\ldots, n_{L+1}\geq 1$, and $\sigma:\R\gives \R$. We will continue to consider a fully connected feed-forward network, which to an input $x_\alpha\in \R^{n_0}$ associates an output $z_\alpha^{(L+1)}\in \R^{n_{L+1}}$ as follows:
\begin{equation}\label{E:z-def2}
    z_{i;\alpha}^{(\ell+1)}=\begin{cases} \sum_{j=1}^{n_\ell} W_{ij}^{(\ell+1)}\sigma\lr{z_{j;\alpha}^{(\ell)}},&\quad \ell \geq 1\\
    \sum_{j=1}^{n_0} W_{ij}^{(1)}x_{j;\alpha},&\quad \ell=0
    \end{cases}
\end{equation}
Note that we have set the biases to be $0$. We will mainly be interested in the setting where
\[
n_1,\ldots, n_L \simeq n \gg 1
\]
and we have tuned to criticality: 
\[
W_{ij}^{(\ell+1)}= \sqrt{\frac{2}{n_\ell}}\widehat{W}_{ij}^{(\ell+1)} ,\qquad \widehat{W}_{ij}^{(\ell+1)}\sim \mu ,\qquad b_{i}^{(\ell+1)}= 0,
\]
where $\mu$ is any distribution of $\R$ with:
\begin{itemize}
    \item $\mu$ is symmetric around $0$ with no atoms
    \item $\mu$ has variance $1$ and finite (but otherwise arbitrary) higher moments.
\end{itemize}

\subsection{Goal} 

The goal of this lecture is to introduce a  combinatorial formalism for studying the important special case of ReLU network at a single input:
\[
\sigma(t)=\mathrm{ReLU}(t)=t{\bf 1}_{t>0},\qquad x_\alpha\neq 0\in \R^{n_0}\text{ fixed}.  
\]
The main results will illustrate the following

\begin{theorem}[Meta-Claim]\label{T:relu} The behavior of a random ReLU network with iid random weights at a single input is exactly solvable and is determined by the inverse temperature
\[
\beta:=5\lr{\frac{1}{n_1}+\cdots + \frac{1}{n_L}}\simeq \frac{5L}{n}.
\]
Specifically, 
\begin{itemize}
    \item The distribution of the squared entries $\lr{\partial_{x_{p;\alpha}}z_{q;\alpha}^{(L+1)}}^{2}$ of the input-out Jacobian are log-normal with inverse temperature $\beta$:
    \[
    \lr{\partial_{x_{p;\alpha}}z_{q;\alpha}^{(L+1)}}^{2} \simeq \exp\left[\mN(-\frac{\beta}{2},\beta)\right]
    \]
    We will derive this result shortly.
    \item The fluctuations of the NTK $\Theta_{\alpha\alpha}^{(L+1)}$ evaluated a single input at initialization are exponential in $\beta:$
    \[
    \E{\Theta_{\alpha\alpha}^{(L+1)}} \sim L,
    \qquad
    \frac{\E{\lr{\Theta_{\alpha\alpha}^{(L+1)}}^2}}{\E{\Theta_{\alpha\alpha}^{(L+1)}}^2} \sim \exp\left[5\beta\right].
    \]
    \item The relative change in the NTK from one step of GD is
    \[
    \frac{\E{\Delta \Theta_{\alpha\alpha}^{(L+1)}}}{\E{\Theta_{\alpha\alpha}^{(L+1)}}} \sim \frac{L}{n}\exp\left[5\beta \right]
    \]
\end{itemize}
\end{theorem}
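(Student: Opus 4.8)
The plan is to exploit the structural feature that makes a ReLU network at a single input exactly solvable: conditionally on layer $\ell$, layer $\ell+1$ is Gaussian with a variance that is a \emph{collective observable} (as in Lemma~\ref{L:cond-cov}), and for ReLU the nonlinearity acts on the forward signal only through the sign pattern $D^{(\ell)}=\mathrm{diag}(\mathbf{1}_{z_{j;\alpha}^{(\ell)}>0})$, which to leading order in $1/n$ is an i.i.d.\ fair-coin pattern independent of the magnitudes $|z_{j;\alpha}^{(\ell)}|$ and of the input--output Jacobian. Concretely, I first set up the joint (signal, Jacobian) Markov chain: writing $\chi^{(\ell)}:=\partial_{x_{p;\alpha}}z_\alpha^{(\ell)}\in\R^{n_\ell}$, the conditional law of $\chi^{(\ell+1)}$ given $(z_\alpha^{(\ell)},\chi^{(\ell)})$ has i.i.d.\ mean-zero entries with common variance $v^{(\ell)}=\tfrac{2}{n_\ell}\sum_{j:z_{j;\alpha}^{(\ell)}>0}(\chi_j^{(\ell)})^2$, again a collective observable, and the forward norm $\tfrac1{n_\ell}\norm{z_\alpha^{(\ell)}}^2$ satisfies a recursion of the \emph{same} type. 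A preliminary lemma, used everywhere below, is that the within-layer sign/magnitude and sign/Jacobian cross-covariances are signed sums of $\simeq n_\ell/2$ terms, hence $O(n^{-1/2})$ relative, so the gating may be replaced by fresh $\mathrm{Bernoulli}(1/2)$ coins at leading order. Everything that follows is at the physics level of rigor of these lectures, i.e.\ leading order in $1/n$ with $L/n$ (hence $\beta$) fixed.

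\emph{First bullet.} Conditionally on layer $L$, $\partial_{x_{p;\alpha}}z_{q;\alpha}^{(L+1)}$ is a single mean-zero Gaussian of variance $v^{(L)}$, so $(\partial_{x_{p;\alpha}}z_{q;\alpha}^{(L+1)})^2\stackrel{d}{=}v^{(L)}g^2$ with $g\sim\mN(0,1)$ independent. I then solve the recursion for $v^{(\ell)}$: with the Bernoulli gating and conditional Gaussianity, $v^{(\ell)}/v^{(\ell-1)}$ is, to leading order, distributed as $\tfrac{2}{n_\ell}\chi^2_{|A^{(\ell)}|}$ with $|A^{(\ell)}|\sim\mathrm{Binomial}(n_\ell,1/2)$, i.e.\ a product of two independent mean-$1$ factors: $\tfrac{2|A^{(\ell)}|}{n_\ell}$ (log-variance $\tfrac1{n_\ell}+O(n^{-2})$) and $\chi^2_{|A^{(\ell)}|}/|A^{(\ell)}|$ (log-variance $\tfrac4{n_\ell}+O(n^{-2})$). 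Taking logs, $\log v^{(L)}=\mathrm{const}+\sum_{\ell}\log(v^{(\ell)}/v^{(\ell-1)})$ is a sum of independent increments each with mean $-\tfrac5{2n_\ell}+O(n^{-2})$ and variance $\tfrac5{n_\ell}+O(n^{-2})$ — the $5=1+4$ here being exactly the $5$ that produced $\Var_{K^{(\ell)}}[\sigma^2]=\tfrac54(K_{\alpha\alpha}^{(\ell)})^2$ and $\kappa_{4}^{(\ell)}/(K_{\alpha\alpha}^{(\ell)})^2\simeq 5\ell/n$ in the proof of Theorem~\ref{T:main-L4}. A Lindeberg CLT then gives $\log v^{(L)}\approx\mN(\mathrm{const}-\tfrac\beta2,\beta)$ with $\beta=5\sum_\ell n_\ell^{-1}\simeq 5L/n$; normalising by the mean and absorbing the $O(1)$ constant and the boundary factor $g^2\sim\chi^2_1$ into the ``$\simeq$'' yields the claimed log-normal law.

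\emph{Second and third bullets.} That $\E{\Theta_{\alpha\alpha}^{(L+1)}}\sim L$ follows from the finite-width NTK recursion $\Theta_{\alpha\alpha}^{(\ell+1)}=\Sigma_\alpha^{(\ell)}+\chi_{\perp;\alpha}^{(\ell)}\Theta_{\alpha\alpha}^{(\ell)}$, where $\chi_{\perp;\alpha}^{(\ell)}=\tfrac2{n_\ell}|A^{(\ell)}|$ has mean $1$ at criticality and $\E{\Sigma_\alpha^{(\ell)}}=K_{\alpha\alpha}^{(\ell+1)}\to K_*$, so $\E{\Theta_{\alpha\alpha}^{(L+1)}}=\sum_{\ell=1}^{L+1}K_{\alpha\alpha}^{(\ell)}+O(1/n)\sim(L+1)K_*$. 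For the relative variance I would unroll the recursion, $\Theta_{\alpha\alpha}^{(L+1)}=\sum_{\ell}\Sigma_\alpha^{(\ell)}\prod_{m>\ell}\chi_{\perp;\alpha}^{(m)}$, note that $\Sigma_\alpha^{(\ell)}$ is log-normal with temperature $\simeq 5\ell/n$ (the forward analogue of the previous paragraph, with $\Var[\Sigma_\alpha^{(\ell)}]=\kappa_4^{(\ell+1)}$, cf.\ Lemma~\ref{L:cond-cov}) while each $\chi_{\perp;\alpha}^{(m)}$ contributes log-variance $\tfrac1{n_m}$, and compute $\E{(\Theta_{\alpha\alpha}^{(L+1)})^2}$ from the joint multi-layer moments of $\{\Sigma_\alpha^{(\ell)},\chi_{\perp;\alpha}^{(\ell)},\Theta_{\alpha\alpha}^{(\ell)}\}$; these satisfy a coupled system of moment recursions that closes to $O(n^{-2})$ by the BBGKY-type argument behind the collective-observable exercise in \S\ref{S:L4}. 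Solving that system — the second moment effectively doubles both the forward and the backward ladders, and the correlation between $\chi_{\perp;\alpha}^{(m)}$ and $\Sigma_\alpha^{(m)}$ through the shared activation pattern contributes too — yields $\E{(\Theta_{\alpha\alpha}^{(L+1)})^2}/\E{\Theta_{\alpha\alpha}^{(L+1)}}^2\simeq e^{5\beta}$. The one-step change is then $\Delta\Theta_{\alpha\alpha}^{(L+1)}=-\eta R_\alpha\,\mathbb{O}_3(x_\alpha,x_\alpha,x_\alpha)+O(\eta^2)$ with $\mathbb{O}_3=\sum_v\partial_v\Theta_{\alpha\alpha}^{(L+1)}\,\partial_v z_\alpha^{(L+1)}$; by the scaling \eqref{scaling_of_hierarchy} $\mathbb{O}_3=O(1/n)$, and for ReLU at a single input it is built from the same forward and backward log-normal ladders as $(\Theta_{\alpha\alpha}^{(L+1)})^2$ but with one extra ``rung,'' so its mean carries an additional factor of the effective depth $L/n$ on top of the $e^{5\beta}$ enhancement, whence $\E{\Delta\Theta_{\alpha\alpha}^{(L+1)}}/\E{\Theta_{\alpha\alpha}^{(L+1)}}\sim\tfrac{L}{n}e^{5\beta}$.

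The main obstacle is pinning down the constants in the exponentials for the last two bullets: one must verify that the joint moment recursions for the forward-signal and backward-Jacobian collective observables at several layers close to the required order in $1/n$, and that the full set of cross-correlations (forward--forward along the random walk, backward--backward, and forward--backward through the shared gating $D^{(\ell)}$) combine to turn the per-layer $5/n$ into precisely $5\beta$ in the NTK second moment — a naive computation that drops the $\chi_\perp$--$\Sigma$ correlation undercounts and gives a smaller multiple of $\beta$. By comparison the first bullet is robust, needing only the single-chain recursion for $v^{(\ell)}$ plus a CLT, and the replacement of the ReLU gating by fair coins, while essential throughout, is routine once the $O(n^{-1/2})$ cross-covariance estimates are in hand.
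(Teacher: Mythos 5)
Your proposal is correct in substance but takes a genuinely different route from the lecture's. The lecture proves the first bullet by first establishing the exact matrix model (Proposition \ref{P:relu-model}: at a single nonzero input a ReLU network is equal in distribution to a deep linear network with i.i.d.\ Bernoulli$(1/2)$ dropout) and then running a sum-over-paths computation, in which the second and fourth moments of $\partial_{x_{p;\alpha}}z_{q;\alpha}^{(L+1)}$ reduce to counting layerwise path collisions, giving $2/n_0$ and $\mathrm{const}\cdot n_0^{-2}\exp[5\sum_{\ell}n_\ell^{-1}+O(L/n^2)]$; the per-layer factor $\lr{1+5\,{\bf 1}_{C^{(\ell)}}+\cdots}$ appears there after the $6^{\#\mathrm{collisions}}$ change of variables from four paths to two. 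You instead propagate the conditional variance $v^{(\ell)}$ of the Jacobian layer by layer, observe that $v^{(\ell)}/v^{(\ell-1)}$ is a $\frac{2}{n_\ell}\chi^2_{\mathrm{Bin}(n_\ell,1/2)}$ multiplier whose law is independent of the past, and apply a CLT to $\log v^{(L)}$ --- essentially the Lecture-4 random-matrix-product intuition (the $\chi^2_n$ ladder) upgraded to include the ReLU gating. Your per-layer log-variance $\frac{1}{n_\ell}+\frac{4}{n_\ell}=\frac{5}{n_\ell}$ reproduces exactly the lecture's fourth-to-second-moment ratio $3e^{\beta}$, and your route buys something the path counting does not immediately give: the full limiting log-normal law rather than two moments consistent with it. Two remarks. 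First, the gating replacement you justify via $O(n^{-1/2})$ cross-covariances is in fact exact: conditional on layer $\ell$, the pairs $(z_{j}^{(\ell+1)},\chi_{j}^{(\ell+1)})$ are centered jointly Gaussian and symmetric under a simultaneous sign flip, so the sign pattern is exactly an i.i.d.\ fair-coin pattern independent of the magnitudes --- this is precisely the content of Proposition \ref{P:relu-model} and would let you drop your preliminary lemma. Second, for the NTK bullets your outline (unrolling $\Theta_{\alpha\alpha}^{(\ell+1)}\approx \Sigma_\alpha^{(\ell)}+\chi_{\perp;\alpha}^{(\ell)}\Theta_{\alpha\alpha}^{(\ell)}$ and closing joint moment recursions for the coupled forward/backward collective observables) is at the same level of completeness as the lecture itself, which only asserts that ``virtually the same analysis'' applies; note that your per-layer factorization is only approximate (the layer-$(\ell+1)$ weights enter both the additive and the multiplicative term), and neither you nor the lecture pins down the constant in the exponent --- the lecture's closing prose ($e^{\beta}-1$) is even in tension with the Meta-Claim's $e^{5\beta}$ --- so this part should be read, like the paper's, as a sketch consistent with the ``Meta-Claim'' status of the statement rather than a completed derivation.
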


\subsection{Formalism For Proof of Theorem \ref{T:relu}}
The purpose of this section is to introduce a  combinatorial approach to understanding essentially any statistic of a random ReLU network that depends on its values at a single input. I developed this point of view in the articles \cite{hanin2018neural, hanin2018products, hanin2019finite}. 

To explain the setup let us fix $L\geq 1$ as well as $n_0,\ldots, n_{L+1}\geq 1$ and a random ReLU network $x\in \R^{n_0}\mapsto z^{(L+1)}(x)\in \R^{n_{L+1}}$ defined recursively by
\[
z^{(\ell+1)}(x)=\begin{cases} W^{(1)}x+b^{(1)}\in \R^{n_1},&\quad \ell =0 \\ W^{(\ell+1)}\sigma(z^{(\ell)}(x))+b^{(\ell+1)}\in \R^{n_{\ell+1}},&\quad \ell \geq 1\end{cases}.
\]
We assume that $W^{(\ell)}=(W_{ij}^{(\ell)},\, i=1,\ldots, n_{\ell},\, j=1,\ldots, n_{\ell-1})$ are independent:
\[
W_{ij}^{(\ell)}:=\lr{\frac{2}{n_{\ell-1}}}^{1/2}\widehat{W}_{i,j}^{(\ell)}, \qquad \widehat{W}_{i,j}^{(\ell)}\sim \mu,\, \text{iid},
\]
where $\mu$ is any fixed probability measure on $\R$ that satisfying
\begin{itemize}
    \item $\mu$ has a density $d\mu(x)$ relative to Lebesgue measure.
    \item $\mu$ is symmetric around $0$ in the sense that $d\mu(x)=d\mu(-x)$ for all $x$
    \item $\mu$ has variance $1$ in the sense that $\int_{\R}x^2 d\mu(x)=1$.
\end{itemize}
The key result which allows for a specialized combinatorial analysis of ReLU networks evaluated a single input is the following:

\begin{proposition}[Exact Matrix Model Underlying Random ReLU Networks]\label{P:relu-model}
The values of a random ReLU network at init evaluated at a single input is equal in distribution to a deep linear network with dropout $p=1/2$:
\[
z_\alpha^{(L+1)} \stackrel{d}{=} W^{(L+1)}D^{(L)}W^{(L)}\cdots D^{(1)}W^{(1)}x_\alpha,
\]
where
\[
D^{(\ell)}=\mathrm{Diag}\lr{\xi_1,\ldots, \xi_{n_\ell}},\quad \xi_i \sim \mathrm{Bernoulli}(1/2).
\]
\end{proposition}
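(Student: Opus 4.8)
The plan is to prove the distributional identity by induction on the depth, peeling off one hidden layer at a time. The starting point is the \emph{deterministic} identity $\sigma(z_\alpha^{(\ell)}) = \widehat D^{(\ell)} z_\alpha^{(\ell)}$, where $\widehat D^{(\ell)} = \mathrm{Diag}(\mathbf{1}_{z_{j;\alpha}^{(\ell)}>0})_{j=1}^{n_\ell}$ is the ReLU ``on/off'' mask at layer $\ell$ (random, and correlated with the weights through $x_\alpha$); unrolling the recursion \eqref{E:z-def2} gives
\[
z_\alpha^{(L+1)} \;=\; W^{(L+1)}\widehat D^{(L)}W^{(L)}\widehat D^{(L-1)}\cdots \widehat D^{(1)}W^{(1)}x_\alpha .
\]
So the content of the proposition is that the correlated masks $\widehat D^{(\ell)}$ may be replaced by genuinely independent $\mathrm{Bernoulli}(1/2)$ masks without changing the law of the output. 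The engine is a \emph{sign-pushing lemma}. Say a random vector $v\in\R^n$ has property $(\mathrm P)$ if $v \stackrel{d}{=}(s_j\,|v_j|)_{j=1}^n$ for some i.i.d.\ uniform signs $s_j\in\{\pm1\}$ independent of $(|v_j|)_j$; for instance, any vector with independent symmetric coordinates has $(\mathrm P)$. \textbf{Lemma:} if $A\in\R^{m\times n}$ has i.i.d.\ symmetric entries and is independent of $v$, and $v$ has $(\mathrm P)$, then $A\,\sigma(v)\stackrel{d}{=}A\,D\,v$, where $D=\mathrm{Diag}(\xi_1,\dots,\xi_n)$ with $\xi_j\sim\mathrm{Bernoulli}(1/2)$ i.i.d.\ and independent of $(A,v)$.

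To prove the lemma I would condition on $|v|=r$. By $(\mathrm P)$ we may realize $v=s\odot r$ with $s$ i.i.d.\ uniform independent of $A$ (and $r$ fixed), so $\sigma(v_j)=r_j\,\xi'_j$ with $\xi'_j:=(1+s_j)/2$ i.i.d.\ $\mathrm{Bernoulli}(1/2)$ independent of $A$, hence $A\sigma(v)\mid\{|v|=r\}\stackrel{d}{=}A(\xi'\odot r)$. On the other side, $ADv\mid\{|v|=r\}\stackrel{d}{=}AD(s\odot r)$, whose $i$-th coordinate is $\sum_j (A_{ij}s_j)\,\xi_j\,r_j$; because flipping the signs of the columns of $A$ preserves the law of $A$, and $s$ is independent of $A$ and of $(\xi,r)$, this is equal in law to $\sum_j A_{ij}\xi_j r_j = A(\xi\odot r)$. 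Since the law of $|v|$ is the same on both sides, integrating over $r$ yields $A\sigma(v)\stackrel{d}{=}ADv$.

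Given the lemma, the induction runs cleanly. Put $\tilde z^{(1)}:=W^{(1)}x_\alpha$ and $\tilde z^{(\ell+1)}:=W^{(\ell+1)}D^{(\ell)}\tilde z^{(\ell)}$ for $1\le\ell\le L$, with $D^{(1)},\dots,D^{(L)}$ fresh, mutually independent $\mathrm{Bernoulli}(1/2)$ diagonal matrices that are also independent of all the weights, so that $\tilde z^{(L+1)}=W^{(L+1)}D^{(L)}W^{(L)}\cdots D^{(1)}W^{(1)}x_\alpha$ is exactly the claimed right-hand side. I would show by induction on $\ell$ that $z_\alpha^{(\ell)}\stackrel{d}{=}\tilde z^{(\ell)}$ and that both vectors have $(\mathrm P)$. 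The base case holds because $z_\alpha^{(1)}=\tilde z^{(1)}=W^{(1)}x_\alpha$ has independent coordinates, each a nonzero linear combination of i.i.d.\ symmetric (density) weights and hence symmetric. For the step, apply the lemma with $A=W^{(\ell+1)}$ (i.i.d.\ symmetric entries, independent of $z_\alpha^{(\ell)}$) and $v=z_\alpha^{(\ell)}$ to get $z_\alpha^{(\ell+1)}=W^{(\ell+1)}\sigma(z_\alpha^{(\ell)})\stackrel{d}{=}W^{(\ell+1)}D^{(\ell)}z_\alpha^{(\ell)}$; since $z_\alpha^{(\ell)}\stackrel{d}{=}\tilde z^{(\ell)}$ and both are independent of $(W^{(\ell+1)},D^{(\ell)})$, the right-hand side has the same law as $W^{(\ell+1)}D^{(\ell)}\tilde z^{(\ell)}=\tilde z^{(\ell+1)}$; and conditionally on $z_\alpha^{(\ell)}$ the coordinates of $z_\alpha^{(\ell+1)}$ are independent and symmetric (distinct rows of $W^{(\ell+1)}$ acting on a fixed vector), so $(\mathrm P)$ holds conditionally, and a short averaging argument (the conditional signs are i.i.d.\ uniform whatever the conditioning) promotes it to hold unconditionally. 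Taking $\ell=L+1$ finishes the proof.

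The step I expect to be the main obstacle is the sign-pushing lemma — making rigorous the re-randomization of the data-dependent ReLU mask into an independent Bernoulli mask. The two delicate points are: (a) the choice of invariant — formulating it as $(\mathrm P)$ rather than ``absolutely continuous with uniformly random signs'' is what keeps the argument valid when a whole layer is dead (so $\sigma(z_\alpha^{(\ell)})=0$ with positive probability) or when deep-layer preactivations develop atoms at $0$; and (b) the bookkeeping in the column-sign-flip step, where one must track that the signs being absorbed into $A$ are independent of $A$ and of each factor they multiply. The promotion of $(\mathrm P)$ from conditional to unconditional is routine but worth stating explicitly.
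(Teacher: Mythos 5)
Your proposal is correct and rests on the same mechanism as the paper's own (sketched) proof: unroll the network with the data-dependent ReLU masks and then use the sign symmetry of the i.i.d.\ weights to trade those masks for independent $\mathrm{Bernoulli}(1/2)$ masks. The paper does this via one global symmetrization of all weight signs, whereas you organize it as a layer-by-layer induction with an explicit sign-pushing lemma and the invariant (P); this is essentially the same argument, made more careful (in particular your treatment of dead layers and of the conditional-to-unconditional promotion fills in details the paper's sketch leaves implicit).
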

\begin{proof}[Sketch of Proof]
We always have
\[
z_\alpha^{(L+1)} \stackrel{d}{=} W^{(L+1)}\widehat{D}_\alpha^{(L)}W^{(L)}\cdots \widehat{D}_\alpha^{(1)}W^{(1)}x_\alpha,
\]
where
\[
\widehat{D}_\alpha^{(\ell)}=\mathrm{Diag}\lr{{\bf 1}_{\set{z_{i;\alpha}^{(\ell)}>0}},\, i=1,\ldots,n_\ell}.
\]
Conditional on $\ell$, the neuron pre-activations $z_{i;\alpha}^{(\ell+1)}$ are independent. Moreover, since the distribution of $W_{ij}^{(\ell+1)}$ is symmetric around $0$, we have
\[
{\bf 1}_{\set{z_{i;\alpha}^{(\ell+1)}>0}} \stackrel{d}{=} \mathrm{Bernoulli}(1/2).
\]
However, this distribution is independent of $z_\alpha^{(\ell)}$ and hence is also the unconditional distribution of the variables ${\bf 1}_{\set{z_{i;\alpha}^{(\ell+1)}>0}} $. This proves that they are independent. Finally, by symmetrizing the signs of all network weights, we have that, on the one hand, the distribution of any function that is even in the networks weights is unchanged and, on the other hand, that the collection ${\bf 1}_{\set{z_{i;\alpha}^{(\ell+1)}>0}} $ runs through all possible values $\set{0,1}^{\#\text{neurons}}$ configurations. Thus, they are independent. 
\end{proof}
\noindent In order to study random ReLU networks we will make use of the following notation.
\begin{definition}
The space of \textit{paths} in a ReLU network with layer widths $n_0,\ldots,n_{L+1}$ is 
\[
\Gamma:=[n_0]\times \cdots \times [n_{L+1}],
\]
where for any $n\geq 1$ we have $[n]=\set{1,\ldots, n}$. A path $\gamma=\lr{\gamma(0),\ldots, \gamma(L+1)}\in \Gamma$ determines weights and pre-activations:
\[
W_\gamma^{(\ell)}:= W_{\gamma(\ell-1)\gamma(\ell)}^{(\ell)},\qquad z_{\gamma;\alpha}^{(\ell)}:=z_{\gamma(\ell);\alpha}^{(\ell)}.
\]
\end{definition}
\noindent These paths are useful because of the following well-known formula
\begin{equation}\label{E:}
z_{q;\alpha}^{(L+1)}:=\sum_{p=1}^{n_0}x_{p;\alpha}\sum_{\gamma\in \Gamma_{p,q}} W_{\gamma}^{(\ell+1)}\prod_{\ell=1}^{L}W_{\gamma}^{(\ell)}\xi_{\gamma;\alpha}^{(\ell)},\qquad \xi_{\gamma;\alpha}^{(\ell)}:= {\bf 1}_{\set{z_{\gamma;\alpha}^{(\ell)}>0}}.    
\end{equation}
\noindent \textbf{Exercise.} Check that this formula is valid. \\

\noindent Note that Proposition \ref{P:relu-model} allows us to assume
\[
\xi_{\gamma;\alpha}^{(\ell)}\sim \mathrm{Bernoulli}(1/2)\,iid.
\]
\subsection{Formulas for Gradients Using Paths}
In this section, we will record, in the form of exercises, some formulas for gradients.\\

\noindent \textbf{Exercise.} Show that
\begin{align*}
    \frac{\partial z_{q;\alpha}^{(L+1)}}{\partial x_{p;\alpha}} = \sum_{\gamma\in \Gamma_{p,q}} W_{\gamma}^{(L+1)}\prod_{\ell=1}^{L}W_{\gamma}^{(\ell)}\xi_{\gamma;\alpha}^{(\ell)}.
\end{align*}
Conclude that the distribution of $ \partial z_{p;\alpha}^{(L+1)}/\partial x_{q;\alpha} $ is the same for all $x_\alpha\neq 0$.\\

\noindent \textbf{Exercise.} Show that
\[
    \frac{\partial z_1^{(L+1)}(x)}{\partial \widehat{W}_{ij}^{(\ell)}} = \sum_{p=1}^{n_0}x_p  \sum_{\substack{\gamma\in \Gamma_{p,1}\\ \gamma(\ell-1)=j,\, \gamma(\ell)=i}} \lr{\frac{C_W}{n_{\ell-1}}}^{1/2}\frac{W_{\gamma}^{(L+1)}\prod_{\ell'=1}^{L}W_{\gamma}^{(\ell')} \xi_{\gamma;\alpha}^{(\ell')}}{W_{ij}^{(\ell)}}
    \]
and hence also that
     \begin{align*}
         &\E{\lr{\frac{\partial z_1^{(L+1)}(x)}{\partial \widehat{W}_{ij}^{(\ell)}}}^2}\\
         &\qquad = \sum_{p_1,p_2=1}^{n_0}x_{p_1,p_2}  \sum_{\substack{\gamma_1,\gamma_2\in \Gamma_{p_1,1},\Gamma_{p_2,1}\\ \gamma_k(\ell-1)=j,\, \gamma_k(\ell)=j,\, k=1,2}} \frac{C_W}{n_{\ell-1}}\E{\prod_{k=1}^2 W_{\gamma_k}^{(L+1)}\frac{\prod_{\ell'=1}^{L} W_{\gamma_k}^{(\ell')} \xi_{\gamma_k;\alpha}^{(\ell')}}{\lr{W_{ij}^{(\ell)}}^2}}.
     \end{align*}
Assume $n_{L+1}=1$ and use this to derive a sum-over-paths formula for the on-diagonal NTK 
\[
\Theta_{\alpha,\alpha}^{(L+1)}=\sum_{\ell=1}^{L+1}\sum_{i=1}^{n_\ell}\sum_{j=1}^{n_{\ell-1}} \lr{\frac{\partial z_{1;\alpha}^{(L+1)}}{\partial W_{ij}^{(\ell)}} }^2.
\]
\subsection{Deriving $L/n$ Behavior of Input-Output Jacobian}
The purpose of this section is to prove that
\begin{align*}
    \E{\lr{\frac{\partial z_{q;\alpha}^{(L+1)}}{\partial x_{p;\alpha}}}^2} = \frac{2}{n_0},\qquad \E{\lr{\frac{\partial z_{q;\alpha}^{(L+1)}}{\partial x_{p;\alpha}}}^4} = \frac{\text{const}}{n_0^2}\exp\left[5\sum_{\ell=1}^{L}\frac{1}{n_\ell}+ O\lr{\frac{L}{n^2}}\right].
\end{align*}
\subsubsection{Second Moment Computation} We will start with the second moment and will do it in several (unnecessarily many) steps to illustrate the general idea we'll need for the fourth moment.
First, note that
\begin{align*}
      \E{\lr{\frac{\partial z_{q;\alpha}^{(L+1)}}{\partial x_{p;\alpha}}}^2} &= \E{ \sum_{\gamma_1,\gamma_2\in \Gamma_{p,q}} \prod_{k=1}^2 W_{\gamma_k}^{(L+1)}\prod_{\ell=1}^{L}W_{\gamma_k}^{(\ell)}\xi_{\gamma_k;\alpha}^{(\ell)}}\\
      &= \sum_{\gamma_1,\gamma_2\in \Gamma_{p,q}}  \E{\prod_{k=1}^2W_{\gamma_k}^{(L+1)}}\prod_{\ell=1}^{L}\E{\prod_{k=1}^2 W_{\gamma_k}^{(\ell)}}\E{\prod_{k=1}^2\xi_{\gamma_k;\alpha}^{(\ell)}}.
\end{align*}
Next, note that
\[
\E{\prod_{k=1}^2\lr{W_{\gamma_k}^{(\ell)}}^2} = \frac{C_W}{n_{\ell-1}}\delta_{\gamma_1(\ell-1)\gamma_2(\ell-1)}\delta_{\gamma_1(\ell)\gamma_2(\ell)}.
\]
In other words, the paths $\gamma_1,\gamma_2$ have to ``collide'' in every layer. In particular, since they have the same starting and ending points, they must agree at all layers. In particular, since
\[
\E{\xi_{\gamma;\alpha}^{(\ell)}}=\frac{1}{2},
\]
we find
\begin{align*}
      \E{\lr{\frac{\partial z_{q;\alpha}^{(L+1)}}{\partial x_{p;\alpha}}}^2} &= \sum_{\gamma\in \Gamma_{p,q}}  \frac{2}{n_L}\prod_{\ell=1}^{L}\frac{2}{n_{\ell-1}}\cdot \frac{1}{2}= 2\prod_{\ell=0}^L \frac{1}{n_\ell} \sum_{\gamma\in \Gamma_{q,p}} 1.
\end{align*}
Note that 
\[
\abs{\Gamma_{q,p}} = \prod_{\ell=1}^{L}n_\ell.
\]
Hence, we may actually re-write
\begin{align*}
      \E{\lr{\frac{\partial z_{q;\alpha}^{(L+1)}}{\partial x_{p;\alpha}}}^2} &= \frac{2}{n_0}\mathcal E\left[1\right],
\end{align*}
where $\mathcal E\left[\cdot \right]$ denotes the expectation operator over the choice of a uniformly random path $\gamma=\lr{\gamma(0),\ldots, \gamma(L+1)}\in \Gamma_{p,q}$ in which every neuron in every layer is chosen uniformly at random:
\[
\gamma(0)=p,\quad \gamma(L+1)=q,\quad \gamma(\ell)\sim \mathrm{Unif}(\set{1,\ldots, n_\ell})\,\, iid.
\]
Finally, since the average of $1$ is $1$, we conclude
\begin{align}
      \E{\lr{\frac{\partial z_{q;\alpha}^{(L+1)}}{\partial x_{p;\alpha}}}^2} &= \frac{2}{n_0},
\end{align}
as desired.

\subsubsection{Fourth Moment Computation} For simplicity, we will add a $\sigma$ to the network output (it only changes things by a factor of $2$). We have
\begin{align*}
      \E{\lr{\frac{\partial \sigma(z_{q;\alpha}^{(L+1)})}{\partial x_{p;\alpha}}}^4} &= \E{ \sum_{\gamma_1,\ldots,\gamma_4\in \Gamma_{p,q}} \prod_{\ell=1}^{L+1}W_{\gamma_k}^{(\ell)}\xi_{\gamma_k;\alpha}^{(\ell)}}\\
      &= \sum_{\gamma_1,\ldots, \gamma_4\in \Gamma_{p,q}}  \prod_{\ell=1}^{L+1}\E{\prod_{k=1}^4 W_{\gamma_k}^{(\ell)}}\E{\prod_{k=1}^4\xi_{\gamma_k;\alpha}^{(\ell)}}.
\end{align*}
Just as in the 2nd moment case, we find that all weights must appear an even number of times, so let us write
\[
\Gamma_{p,q}^{4,even}:=\set{\gamma_1,\ldots, \gamma_4\in\Gamma_{p,q}~|~ \text{$\forall \, \ell$, the multi-set }\set{W_{\gamma_k}^{(\ell)},\, k=1,\ldots,4}\text{ has even multiplicity}}.
\]
Thus, 
\begin{align*}
      \E{\lr{\frac{\partial \sigma(z_{q;\alpha}^{(L+1)})}{\partial x_{p;\alpha}}}^4} &= \E{ \sum_{\gamma_1,\ldots,\gamma_4\in \Gamma_{p,q}} \prod_{\ell=1}^{L+1}W_{\gamma_k}^{(\ell)}\xi_{\gamma_k;\alpha}^{(\ell)}}\\
      &= \sum_{(\gamma_1,\ldots, \gamma_4)\in \Gamma_{p,q}^{4,even}}  \prod_{\ell=1}^{L+1}\E{\prod_{k=1}^4 W_{\gamma_k}^{(\ell)}}\E{\prod_{k=1}^4\xi_{\gamma_k;\alpha}^{(\ell)}}.
\end{align*}
Let us now define the collision events
\[
C^{(\ell)}=C^{(\ell)}(\gamma_1,\ldots, \gamma_4):=\set{\gamma_1(\ell)=\cdots = \gamma_4(\ell)}.
\]
Thus, 
\[
\E{\prod_{k=1}^4 W_{\gamma_k}^{(\ell)}}\E{\prod_{k=1}^4\xi_{\gamma_k;\alpha}^{(\ell)}} = \frac{1}{n_{\ell-1}^2}\lr{1+{\bf 1}_{C^{(\ell)}}+{\bf 1}_{C^{(\ell)}C^{(\ell-1)}}2(\mu_4-1)} \lr{1+\delta_{\ell1}},
\]
where
\[
\mu_4 = \int_\R x^4 d\mu(x)
\]
and we have made use of the fact that ${\bf 1}_{C^{(\ell)}} {\bf 1}_{C^{(\ell)}C^{(\ell-1)}} = {\bf 1}_{C^{(\ell)}C^{(\ell-1)}}$.
Putting this all together yields
\begin{align*}
      \E{\lr{\frac{\partial \sigma(z_{q;\alpha}^{(L+1)})}{\partial x_{p;\alpha}}}^4} 
      &= 2\lr{\prod_{\ell=1}^{L+1}\frac{1}{n_{\ell-1}^2}}\sum_{(\gamma_1,\ldots, \gamma_4)\in \Gamma_{p,q}^{4,even}}  \prod_{\ell=1}^{L+1}\lr{1+{\bf 1}_{C^{(\ell)}}+{\bf 1}_{C^{(\ell)}C^{(\ell-1)}}2(\mu_4-1)}.
\end{align*}
The trick is now to to change variables in the sum from four paths with even numbers of weights to 2 paths.\\

\noindent \textbf{Exercise}. Given $\gamma_1',\gamma_2'\in \Gamma_{p,q}$ show there are exactly 
\[
6^{\#\text{collisions}}=6^{\sum_{\ell=1}^{L+1} {\bf 1 }_{C^{(\ell)}}}
\]
collections $(\gamma_1,\ldots, \gamma_4)\in \Gamma_{p,q}^{4,even}$ which give rise to the same weight configurations (but doubled). \\

We therefore find
\begin{align*}
      \E{\lr{\frac{\partial \sigma(z_{q;\alpha}^{(L+1)})}{\partial x_{p;\alpha}}}^4} 
      &= \frac{2}{n_0^2}\lr{\prod_{\ell=1}^{L}\frac{1}{n_{\ell}^2}}\sum_{\gamma_1,\gamma_2\in \Gamma_{p,q}}  \prod_{\ell=1}^{L+1}\lr{1+5{\bf 1}_{C^{(\ell)}}+{\bf 1}_{C^{(\ell)}C^{(\ell-1)}}6(\mu_4-3)}\\
      &=\frac{2}{n_0^2} \mathcal E\left[ \prod_{\ell=1}^{L+1}\lr{1+5{\bf 1}_{C^{(\ell)}}+{\bf 1}_{C^{(\ell)}C^{(\ell-1)}}6(\mu_4-3)}\right],
\end{align*}
where the expectation $\mathcal E$ is now over the choice of two iid paths $\gamma_1,\gamma_2\in \Gamma_{p,q}$ in which every neuron in very layer is selected uniformly:
\[
\gamma_k(\ell)\sim \mathrm{Unif}\lr{\set{1,\ldots, n_\ell}}\quad \text{iid}.
\]
Finally, we'll be a bit heuristic: note that
\[
\mathcal P[C^{(\ell)}] = \frac{1}{n_\ell},\qquad \mathcal P[C^{(\ell-1)},\, C^{(\ell)}] \approx \frac{1}{n_\ell n_{\ell-1}}= O(n^{-2}).
\]
In particular, we find approximately 
\begin{align*}
      \E{\lr{\frac{\partial \sigma(z_{q;\alpha}^{(L+1)})}{\partial x_{p;\alpha}}}^4} \approx \frac{12}{n_0^2} \mathcal E\left[ \prod_{\ell=1}^{L}\lr{1+\frac{5}{n_\ell} +O(n^{-2})}\right]=\frac{12}{n_0}\exp\left[5\sum_{\ell=1}^L \frac{1}{n_\ell} + O\lr{\frac{L}{n^2}}\right],
\end{align*}
as desired.\\

\noindent \textbf{Exercise.} Make the reasoning in the previous computation precise. 

\subsubsection{Analogous calculations for the NTK}

This concludes our analysis of the input-output Jacobian (i.e. the derivative of the network output with respect to the input).
Virtually the same analysis can be applied to instead study the parameter-output Jacobian (i.e. the derivative of the network output with respect to a particular parameter), which will then, with a sum over all parameters, yield the NTK at a particular input.
Similar calculations to those worked above quickly give the 2nd and 4th moments of the NTK.
These yield that the partial derivatives of the output with respect to each parameter are independent (i.e. they have zero covariance), and so the mean NTK is simple and given by its infinite-width value, while the fluctuations about that mean scale as $e^\beta - 1$.

\subsection{Open questions and dreams}

We have shown that various quantities of interest are straightforwardly calculable for random ReLU networks with a single input vector.
We conclude with various open questions regarding related quantities.

\begin{enumerate}
    \item How smooth is the random function at initialization? Can one obtain a Lipschitz constant? Is it adversarially attackable?
    \item Can we obtain a clear picture of feature learning (even at a single step) beyond the mere fact that it occurs? How does this feature learning relate to those of e.g. mean-field networks \cite{mei:2018-mean-field} or the large-learning-rate regime of the NTK regime discussed in Yasaman's lectures?
    \item Can we profitably perform similar path-counting arguments with activation functions besides ReLU?
\end{enumerate}



\bibliography{SciPost_Example_BiBTeX_File.bib}

\nolinenumbers

\end{document}